\def\orcid#1{{\href{http://orcid.org/#1}{\protect\raisebox{-1.25pt}{\protect\includegraphics{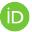}}}}}
\DeclareFontFamily{OT1}{pzc}{}
\DeclareFontShape{OT1}{pzc}{m}{it}{<-> s * [1.10] pzcmi7t}{}
\DeclareMathAlphabet{\mathcalx}{OT1}{pzc}{m}{it}
\let\oldlabelitemi=\labelitemi
\let\labelitemi=\labelitemii %
\let\labelitemii=\oldlabelitemi %
\def\negvthinspace{\kern-0.083333em}
\def\vthinspace{\kern+0.083333em}
\def\vvthinspace{\kern+0.0416667em}
\def\negvvthinspace{\kern-0.0416667em}
\let\oldSigma=\Sigma
\renewcommand\Sigma{\mathrm{\oldSigma}}
\newcommand{\ELK}{\textsc{Elk}\xspace}
\newcommand{\Amc}{\ensuremath{\mathcal{A}}\xspace}
\newcommand{\Bmc}{\ensuremath{\mathcal{B}}\xspace}
\newcommand{\Hmc}{\ensuremath{\mathcal{H}}\xspace}
\newcommand{\Imc}{\ensuremath{\mathcal{I}}\xspace}
\newcommand{\Jmc}{\ensuremath{\mathcal{I}}\xspace}
\newcommand{\Mmc}{\ensuremath{\mathcal{M}}\xspace}
\newcommand{\Smc}{\ensuremath{\mathcal{S}}\xspace}
\newcommand{\Tmc}{\ensuremath{\mathcal{T}}\xspace}
\newcommand{\Rmc}{\ensuremath{\mathcal{R}}\xspace}
\newcommand{\EL}{\ensuremath{\mathcal{E\hspace{-.1em}L}}\xspace}
\newcommand{\ALC}{\ensuremath{\mathcal{A\hspace{-.1em}L\hspace{-.1em}C}}\xspace}
\newcommand{\NC}{\ensuremath{\mathsf{N_C}}\xspace}
\newcommand{\NR}{\ensuremath{\mathsf{N_R}}\xspace}
\newcommand{\NS}{\ensuremath{\mathsf{N_S}}\xspace}
\newcommand{\isa}{\sqsubseteq}
\newcommand{\dlAnd}{\sqcap}
\newcommand{\smand}{\ensuremath{\preceq_\dlAnd}}
\newcommand*{\@updown}[2]{%
	\rotatebox[origin=c]{180}{$\m@th#1#2$}%
}
\providecommand{\bigsqcap}{%
	\mathop{%
		\mathpalette\@updown\bigsqcup
	}%
}
\newcommand{\CM}{\ensuremath{\Mmc}\xspace}%
\newcommand{\CMA}{\ensuremath{{\CM_A}}\xspace}%
\newcommand{\CMr}{\ensuremath{{\CM_r}}\xspace}%
\newcommand{\PIp}{\ensuremath{{\mathcal{PI}^{g+}_\Sigma(\Phi)}}}
\newcommand{\PIn}{\ensuremath{\mathcal{PI}^{g-}_\Sigma(\Phi)}}
\newcommand{\sig}{\ensuremath{\Sigma}\xspace}
\newcommand{\Tr}{\ensuremath{\mathfrak{T}}\xspace}
\newcommand{\V}{\ensuremath{\mathcal{V}}\xspace}
\newcommand{\gterms}[1]{\ensuremath{\mathsf{T}_{#1}(\NS)}}
\newcommand{\sk}{\ensuremath{\mathtt{sk}}}
\newcommand{\skz}{\ensuremath{\sk_0}}
\newcommand{\negc}[1]{#1^-}
\newcommand{\tup}[1]{(#1)}
\newcommand{\s}[1]{\ensuremath{\mathsf{#1}}\xspace}
\newcommand{\emp}{\s{employment}}
\newcommand{\RPo}{\s{ResearchPosition}}
\newcommand{\qual}{\s{qualification}}
\newcommand{\Dip}{\s{Diploma}}
\newcommand{\Res}{\s{Researcher}}
\newcommand{\wri}{\s{writes}}
\newcommand{\RPa}{\s{ResearchPaper}}
\newcommand{\Doc}{\s{Doctor}}
\newcommand{\PhD}{\s{PhD}}
\newcommand{\Prf}{\s{Professor}}
\newcommand{\FPr}{\s{FundsProvider}}
\newcommand{\GAp}{\s{GrantApplication}}
\newcommand{\Chr}{\s{Chair}}
\newcommand{\Obs}{\ensuremath{\alpha}\xspace} %
\newcommand{\Obsa}{\ensuremath{\alpha_{\text{a}}}\xspace} %
\newcommand{\Tac}{\ensuremath{\Tmc_{\text{a}}}\xspace}
\newcommand{\Hac}{\ensuremath{\Hmc_{\text{a}1}}\xspace}
\newcommand{\nHac}{\ensuremath{\Hmc_{\text{a}2}}\xspace}
\newcommand{\TwoExpTime}{\textsc{2ExpTime}\xspace}
\newcommand{\forest}{\ensuremath{\mathfrak{S}}\xspace}%
\newcommand{\sklab}{\ensuremath{\mathfrak{s}}}
\newcommand{\plab}{\ensuremath{{\mathfrak{l}^+}}}
\newcommand{\nlab}{\ensuremath{{\mathfrak{l}^-}}}
\newcommand{\impfrmtr}{\ensuremath{\varphi_{\forest,\nlab}}\xspace}
\newcommand{\impfrmtrp}{\ensuremath{\varphi_{\forest',\nlab'}}\xspace}
\newcommand{\qedhere}{\qed}
\begin{document}

\title{Connection-minimal Abduction in \EL\ via Translation to FOL -- Technical Report}
\titlerunning{Abduction in \EL\ via FOL}
\author{Fajar Haifani\inst{1,2}\orcid{0000-0001-5139-4503}
  \and
  Patrick Koopmann\inst{3}\orcid{0000-0001-5999-2583}
  \and
  Sophie Tourret\inst{4,1}\orcid{0000-0002-6070-796X}
  \and
  Christoph Weidenbach\inst{1}\orcid{0000-0001-6002-0458}
  }
\authorrunning{F.~Haifani et al.}
\institute{
  Max-Planck-Institut f\"ur Informatik, Saarland Informatics Campus,
  Saarbr\"ucken, Germany\\
  \and
  Graduate School of Computer Science, Saarbr\"ucken, Germany
  \email{\{f.haifani,c.weidenbach\}@mpi-inf.mpg.de}
  \and
  TU Dresden, Dresden, Germany\\
  \email{patrick.koopmann@tu-dresden.de}
\and
Universit\'e de Lorraine, CNRS, Inria, LORIA, Nancy, France\\
\email{sophie.tourret@inria.fr}
}
\maketitle      

\begin{abstract}
  Abduction in description logics finds extensions of a knowledge base
  to make it entail an observation. As such, it can be used
  to explain why the observation does not follow, to repair incomplete
  knowledge bases, and to provide possible explanations for unexpected
  observations. We consider TBox abduction in the lightweight description logic
  \EL, where the observation is a concept inclusion and the background knowledge
  is a TBox, i.e., a set of concept inclusions.
  To avoid useless answers, such problems usually come with further restrictions on
  the solution space and/or minimality criteria that help
  sort the chaff from the grain. We argue that existing minimality notions
  are insufficient, and %
  introduce connection minimality. %
  This criterion follows Occam's razor by rejecting hypotheses that use concept inclusions unrelated to the problem at hand.
  We show how to compute a special %
  class of connection-minimal hypotheses in a
  sound and complete way.
  Our technique is based on a translation to first-order logic, and constructs hypotheses
  based on prime implicates.
  We evaluate a prototype implementation of our approach on ontologies from the medical
  domain.
\end{abstract}

\section{Introduction}
\label{sec:intro}

Ontologies are used in areas like biomedicine or the semantic web to represent and reason about terminological knowledge.
They consist normally of a set of axioms formulated in a description logic (DL), giving definitions of concepts, or stating relations between them.
In the lightweight description logic \EL\ \cite{DL_TEXTBOOK}, particularly used in the biomedical domain, we find ontologies that contain around a hundred thousand axioms. For instance, SNOMED CT\footnote{\url{https://www.snomed.org/}} contains over 350,000 axioms, and the Gene Ontology GO\footnote{\url{http://geneontology.org/}} defines over 50,000 concepts. 
A central reasoning task for ontologies is to determine whether one concept is subsumed by another, a question that can be answered in polynomial time \cite{DBLP:conf/ijcai/BaaderBL05}, and rather efficiently in practice using highly optimized description logic reasoners~\cite{REASONER_PERFORMANCE}. If the answer to this question is unexpected or hints at an error, a natural interest is in an explanation for that 
answer---especially if the ontology is complex.
But whereas explaining entailments---i.e., explaining why a concept subsumption
holds---is well-researched in the DL literature %
and integrated into standard ontology editors~\cite{JUSTIFICATIONS_PROTEGE,PROOFS_PROTEGE},
the problem of explaining non-entailments has received less attention, and there is no standard tool support.
Classical approaches involve counter-examples~\cite{EXPLAINING_BY_EXAMPLE}, or \emph{abduction}.

In abduction a non-entailment $\Tmc\not\models\alpha$, for a TBox \Tmc and an observation $\alpha$, is explained by providing a \enquote{missing piece}, the \emph{hypothesis}, that, when added to the ontology, would entail $\alpha$.
Thus %
it provides possible fixes in case the entailment should hold. In the DL context, depending on the shape of the observation, one distinguishes between concept abduction~\cite{CONCEPT_ABDUCTION}, ABox abduction~\cite{OUR_IJCAI_ABDUCTION,WARREN_ABOX_ABDUCTION,EX_RULES_ABDUCTION,%
AAA-ABDUCTION-SOLVER,PukancovaHomola2017,%
DL_LITE_ABDUCTION,DuWangShen2014,DuQiShenPan2012,HallandBritzABox2012,KlarmanABox2011%
}, TBox abduction~\cite{DU_TBOX_ABDUCTION,WEI_TBOX_ABDUCTION} or knowledge base
abduction~\cite{OUR_KR_ABDUCTION,FOUNDATIONS_DL_ABDUCTION}. We are focusing here
on TBox abduction, where the ontology and hypothesis are TBoxes and the
observation is a concept inclusion (CI), i.e., a single TBox axiom.

To illustrate this problem,
consider the following TBox, about academia,
\begin{eqnarray*}
	\Tac&=\{&\exists\emp.\RPo\dlAnd \exists \qual.\Dip \isa \Res,\\
      &&\exists \wri.\RPa \isa \Res,\,\Doc\isa \exists \qual.\PhD,\\ 
      &&\Prf\equiv \Doc\dlAnd \exists\emp.\Chr,\\
      &&\FPr\isa \exists \wri.\GAp\,\}
\end{eqnarray*}
that states, in natural language:
\begin{itemize}
\item ``Being employed in a research position and having a qualifying diploma implies being a researcher.''
\item ``Writing a research paper implies being a researcher.''
\item ``Being a doctor implies holding a PhD qualification.''
\item ``Being a professor is being a doctor employed at a (university) chair.''
\item ``Being a funds provider implies writing grant applications.''
\end{itemize}
The observation $\Obsa=\Prf\sqsubseteq \Res$, ``Being a professor implies being a researcher'', does not follow from $\Tac$ although it should. We can use TBox abduction to find different ways of recovering this entailment. %

Commonly,  to avoid trivial answers, the user provides syntactic restrictions on hypotheses, such as a set of abducible axioms to pick from~\cite{EX_RULES_ABDUCTION,PukancovaHomola2017}, a set of abducible predicates~\cite{OUR_KR_ABDUCTION,OUR_IJCAI_ABDUCTION}, or patterns on the shape of the solution~\cite{du2017practical}.
But even with those restrictions in place, there may be many possible solutions and, to find the ones with the best explanatory potential, syntactic criteria are usually
combined with minimality criteria
such as subset minimality, size minimality, or semantic minimality~\cite{DL_LITE_ABDUCTION}.
Even combined, these minimality criteria still retain a major flaw.
They allow for explanations that go against the principle of parsimony, also known as Occam's razor, %
in that they may contain concepts that are completely unrelated to the problem at hands.
As an illustration, let us return to our academia example.
The TBoxes
\begin{eqnarray*}
\Hac &=\{&\Chr\isa \RPo,\, \PhD\isa \Dip\}\text{ and}\\
\nHac &=\{&\Prf\isa \FPr,\,\GAp\isa \RPa\}
\end{eqnarray*}
are two hypotheses solving the TBox abduction problem involving $\Tac$ and $\Obsa$.
Both of them are subset-minimal, have the same size, and are incomparable w.r.t.\ the entailment relation, so that traditional minimality criteria cannot distinguish them.
However, intuitively, the second hypothesis feels more arbitrary than the first.
Looking at $\Hac$, $\Chr$ and $\RPo$ occur in \Tac in concept inclusions where the concepts in \Obsa also occur, and both $\PhD$ and $\Dip$ are similarly related to \Obsa but via the role $\qual$.
In contrast, $\nHac$ involves the concepts $\FPr$ and $\GAp$ that are not related to \Obsa in any way in \Tac. 
In fact, any random concept inclusion $A\isa \exists \s{writes}. B$ in $\Tac$ would lead to a hypothesis similar to $\nHac$ where $A$ replaces $\FPr$ and $B$ replaces $\GAp$.
Such explanations are not parsimonious.

We introduce a new minimality criterion called \emph{connection minimality} that is parsimonious (Sect.\ \ref{sec:conmin}), defined for the lightweight description logic \EL.
This criterion characterizes hypotheses for $\Tmc$ and $\Obs$ that connect the left- and right-hand sides of the observation $\Obs$ without introducing spurious connections.
To achieve this, every left-hand side of a CI in the hypothesis must follow from the left-hand side of $\Obs$ in $\Tmc$, and, taken together, all the right-hand sides of the CIs in the hypothesis must imply the right-hand side of $\Obs$ in $\Tmc$, as is the case for $\Hac$.
To compute connection-minimal hypotheses in practice, we present a technique based on first-order reasoning that
proceeds in three steps (Sect.\ \ref{sec:technique}).
First, we translate the abduction problem into a first-order formula~$\Phi$.
We then compute the prime implicates of $\Phi$, that is, a set of minimal logical consequences of $\Phi$ that subsume all other consequences of $\Phi$.
In the final step, we
construct, based on those prime implicates, solutions to the original problem.
We prove that all hypotheses generated in this way satisfy the connection minimality criterion, and that the  method is complete for a relevant
subclass of connection-minimal hypotheses.
We use the %
SPASS theorem prover \cite{DBLP:conf/cade/WeidenbachSHRT07} as a restricted  SOS-resolution~\cite{WosRobinsonEtAl65,DBLP:conf/cade/HaifaniTW21} engine
for the computation of prime implicates in a prototype implementation (Sect.\ \ref{sec:implem}), and we present an experimental analysis of its performances on a set of bio-medical ontologies.(Sect.\ \ref{sec:exp}).
Our results indicate that our method can in many cases be applied in practice to compute connection-minimal hypotheses.

There are not many techniques that can handle TBox abduction in \EL or more expressive DLs \cite{DU_TBOX_ABDUCTION, WEI_TBOX_ABDUCTION, OUR_KR_ABDUCTION}.
In~\cite{DU_TBOX_ABDUCTION}, instead of a set of abducibles, a set of \emph{justification patterns} is given,
in which the solutions have to fit.
An arbitrary oracle function is used to decide whether a solution is admissible or not (which may use abducibles, justification patterns, or something else), and it is shown that deciding the existence of hypotheses is tractable. However, different to our approach, they only consider atomic CIs in hypotheses, while we also allow for
hypotheses involving conjunction. %
The setting from \cite{WEI_TBOX_ABDUCTION} also considers \EL, and abduction under various minimality notions such as subset minimality and size minimality. 
It presents practical algorithms, and an evaluation of an implementation for an always-true informativeness oracle (i.e., limited to subset minimality).
Different to our approach, it uses an external DL reasoner to decide entailment relationships.
In contrast, we present an approach that directly exploits first-order reasoning, and thus has the potential to be generalisable to more expressive DLs.

While dedicated resolution calculi have been used before to solve abduction in DLs~\cite{OUR_KR_ABDUCTION,WARREN_ABOX_ABDUCTION}, to the best of our knowledge,
the only work that relies on first-order reasoning for DL abduction is \cite{KlarmanABox2011}.
Similar to our approach, it uses SOS-resolution, but to perform ABox adbuction for the
more expressive DL \ALC.
Apart from the different problem solved, in contrast to~\cite{KlarmanABox2011} we also provide a semantic characterization of the hypotheses generated by our method.
We believe this characterization to be a major contribution of our paper.
It provides an intuition of what parsimony is for this problem, independently of one's ease with first-order logic calculi, which should facilitate the adoption of this minimality criterion by the DL community.
Thanks to this characterization, our technique is calculus agnostic.
Any method to compute prime implicates in first-order logic can be a basis for our abduction technique,
without additional theoretical work, which is not the case for \cite{KlarmanABox2011}. %
Thus, abduction in \EL can benefit from the latest advances in prime implicates generation in first-order logic.

\section{Preliminaries}
\label{seq:prelim}
    We first recall the descripton logic \EL and its translation to first-order logic~\cite{DL_TEXTBOOK}, as well as
	TBox abduction in this logic.

	Let $\NC$ and $\NR$ be pair-wise disjoint, countably infinite sets of unary predicates called \emph{atomic concepts} and of binary predicates called \emph{roles}, respectively.
	Generally, we use letters $A$, $B$, $E$, $F$,...\ for atomic concepts, and $r$ for roles, possibly annotated.
	Letters $C$, $D$, possibly annotated, denote \emph{\EL concepts}, built according to the syntax rule
	$$
	C \ ::= \ \top \ \mid \  
	A \ \mid \ C\sqcap C \ \mid \ \exists r.C  \ .
	$$
	We implicitly represent $\EL$ conjunctions as sets, that is, without order, nested conjunctions, and multiple occurrences of a conjunct.
	We use $\bigsqcap\{C_1,\ldots,C_m\}$ to abbreviate $C_1\sqcap\ldots\sqcap C_m$, and identify the
	empty conjunction ($m=0$) with $\top$.
	An \emph{\EL TBox} $\Tmc$ is a finite set of \emph{concept inclusions} (CIs) of the form $C\isa D$.

    \EL is a syntactic variant of a fragment of first-order logic that uses $\NC$ and $\NR$ as predicates.
    Specifically, TBoxes $\Tmc$ and CIs $\alpha$ correspond to closed
    first-order formulas $\pi(\Tmc)$ and $\pi(\alpha)$ resp., while concepts $C$ correspond to open
	formulas $\pi(C,x)$ with a free variable~$x$. In particular, we have
		\begin{align*}
			\pi(\top,x)&:=\textbf{true},
			& \qquad
			\pi(\exists r. C,x)&:=\exists y.(r(x,y)\land\pi (C,y)),
			\\
			\pi(A,x)&:=A(x),
			& \qquad
			\pi(C\sqsubseteq D)&:=\forall x.(\pi(C,x)\rightarrow \pi(D,x)),
			\\
			\pi(C\dlAnd D,x)&:=\pi(C,x)\land \pi(D,x),
			& \quad
			\pi(\Tmc)&:=\bigwedge\{\pi(\alpha)\mid \alpha\in\Tmc\}.
		\end{align*}
    As common, we often omit the $\bigwedge$ in conjunctions $\bigwedge \Phi$, that is, we identify sets of
    formulas with the conjunction over those.
    The notions of a \emph{term} $t$; an \emph{atom} $P(\bar{t})$ where $\bar{t}$ is a sequence of terms; a \emph{positive literal} $P(\bar{t})$;
    a \emph{negative literal} $\neg P(\bar{t})$; and a clause, Horn, definite, positive or negative, are defined as usual for first-order logic, and so are
    entailment and satisfaction of first-order formulas.

    We identify CIs and TBoxes with their translation into first-order logic,
    and can thus speak of the entailment between formulas, CIs and TBoxes.
    When $\Tmc\models C\isa D$ for some \Tmc, we call $C$ a \emph{subsumee} of $D$ and $D$ a \emph{subsumer} of $C$.
    We adhere here to the definition of the word ``subsume'': ``to include or contain something else'', although the terminology is reversed in first-order logic.
    We say two TBoxes $\Tmc_1$, $\Tmc_2$ are \emph{equivalent}, denoted $\Tmc_1\equiv\Tmc_2$ iff $\Tmc_1\models\Tmc_2$ and $\Tmc_2\models\Tmc_1$.
    For example $\{D\isa C_1,\ldots, D\isa C_n\}\equiv \{D\isa C_1\sqcap\ldots\sqcap C_n\}$.
    It is well known that, due to the absence of concept negation, every \EL TBox is consistent.

	The abduction problem we are concerned with in this paper is the following:
	\begin{definition}
    An \emph{\EL TBox abduction problem} (shortened to \emph{abduction problem}) is a tuple $\langle\Tmc,\sig,C_1\isa C_2\rangle$, where
    $\Tmc$ is a TBox called the \emph{background knowledge}, $\sig$ is a set of atomic concepts called the
    \emph{abducible signature}, and $C_1\sqsubseteq C_2$ is a CI called the \emph{observation}, s.t.\ $\Tmc\not\models C_1\isa C_2$.
    A solution to this problem is a TBox
    \[
      \Hmc\subseteq\left\{A_{1}\sqcap\dots\sqcap A_{n}\isa B_{1}\sqcap\dots\sqcap B_{m} \mid
      \{A_{1},\dots, A_{n},B_{1},\dots, B_{m}\}\subseteq \sig\right\}
    \]
    where $m>0$, $n\geq 0$
    and such that $\Tmc\cup\Hmc\models C_1\isa C_2$ and, for all CIs $\alpha\in\Hmc$, $\Tmc\not\models\alpha$.
    A solution to an abduction problem is called a \emph{hypothesis}.
  \end{definition}
  For example, $\Hac$ and $\nHac$ are solutions for $\langle \Tac,\Sigma,\Obsa \rangle$, as long as $\Sigma$ contains all the atomic concepts that occur in them.
  Note that in our setting, as in~\cite{CONCEPT_ABDUCTION,WEI_TBOX_ABDUCTION},
  concept inclusions in a hypothesis are \emph{flat}, i.e., they contain no existential role restrictions.
  While this restricts the solution space for a given problem, it is possible to bypass this limitation in a targeted way, by introducing fresh atomic concepts equivalent to a concept of interest.
  We exclude the consistency requirement $\Tmc\cup\Hmc\not\models\bot$, that is given in other definitions of DL abduction problem \cite{OUR_IJCAI_ABDUCTION}, since \EL TBoxes are always consistent.
  We also allow $m>1$ instead of the usual $m=1$.
  This produces the same hypotheses modulo equivalence. %

  For simplicity, we assume in the following that the concepts $C_1$ and $C_2$ in the abduction problem are atomic.
  We can always introduce fresh atomic concepts $A_1$ and $A_2$ with $A_1\sqsubseteq C_1$ and $C_2\sqsubseteq A_2$ to solve the problem for complex concepts.

  Common minimality criteria include \emph{subset} minimality, \emph{size} minimality and \emph{semantic} minimality, that respectively favor $\Hmc$ over $\Hmc'$ if: $\Hmc\subsetneq\Hmc'$; the number of atomic concepts in $\Hmc$ is smaller than in $\Hmc'$; and if $\Hmc\models\Hmc'$ but $\Hmc'\not\models\Hmc$.

\section{Connection-minimal Abduction}
\label{sec:conmin}

To address the lack of parsimony of common minimality criteria, illustrated in the academia example, we introduce
\emph{connection} minimality,
Intuitively, connection minimality only accepts those hypotheses that ensure that every CI in the hypothesis is
connected to both $C_1$ and $C_2$ in $\Tmc$,
as is the case for $\Hac$ in the academia example. %
The definition of connection minimality is based on the following ideas: 
\begin{enumerate*}[label=\arabic*)]
   \item \label{e:d} Hypotheses for the abduction problem %
     should create a \emph{connection} between $C_1$ and $C_2$, which can be seen as a concept $D$ that satisfies
     $\Tmc\cup\Hmc\models C_1\sqsubseteq D$, $D\sqsubseteq C_2$.
   \item \label{e:hom} To ensure parsimony, we want this connection to be based on concepts $D_1$ and $D_2$ for which we already have $\Tmc\models C_1\sqsubseteq D_1$, $D_2\sqsubseteq C_2$.
     This prevents the introduction of unrelated concepts in the hypothesis.
     Note however that $D_1$ and $D_2$ can be complex, thus the connection from $C_1$ to $D_1$ (resp.\ $D_2$ to $C_2$) can be established by arbitrarily long chains of concept inclusions.
   \item \label{e:smand} We additionally want to make sure that the connecting concepts are not more complex than necessary, and that $\Hmc$ only contains CIs that directly connect parts of $D_2$ to parts of $D_1$ by closely following their structure.
   \end{enumerate*}

   To address point\ \ref{e:d}, we simply introduce connecting concepts formally.
   \begin{definition}
     Let $C_1$ and $C_2$ be concepts. %
     A concept $D$ \emph{connects} $C_1$ to $C_2$ in $\Tmc$ if and only if $\Tmc\models C_1\isa D$ and $\Tmc\models D\isa C_2$.
   \end{definition}
   Note that if $\Tmc \models C_1 \isa C_2$ then both $C_1$ and $C_2$ are connecting concepts from $C_1$ to $C_2$, and if $\Tmc \not\models C_1 \isa C_2$, the case of interest, neither of them are. %

	To address point\ \ref{e:hom}, we must capture \emph{how} a hypothesis creates the connection between the
	concepts $C_1$ and $C_2$. As argued above, this is established via concepts $D_1$ and $D_2$ that
	satisfy $\Tmc\models C_1\sqsubseteq D_1$, $D_2\sqsubseteq C_2$.
  Note that having only two concepts $D_1$ and $D_2$ is exactly what makes the approach parsimonious.
  If there was only one concept, $C_1$ and $C_2$ would already be connected, and as soon as there are more than two concepts,
  hypotheses start becoming more arbitrary:
  for a very simple example with unrelated concepts, assume given a TBox that entails $\s{Lion}\isa\s{Felidae}$, $\s{Mammal}\isa\s{Animal}$ and $\s{House}\isa\s{Building}$.
  A possible hypothesis to explain $\s{Lion}\isa\s{Animal}$ is $\{\s{Felidae}\isa\s{House},\s{Building}\isa\s{Mammal}\}$ but this explanation is more arbitrary than $\{\s{Felidae}\isa\s{Mammal}\}$---as is the case when comparing $\nHac$ with $\Hac$ in the academia example---because of the lack of connection of $\s{House}\isa\s{Building}$ with both $\s{Lion}$ and $\s{Animal}$.
  Clearly this CI could be replaced by any other CI entailed by \Tmc, which is what we want to avoid.

  We can represent the structure of $D_1$ and $D_2$ in graphs by using \emph{\EL description trees}, originally from Baader et al.\ \cite{baader1999computing}.

	\begin{definition}
		An \emph{\EL description tree} is a finite labeled tree $\Tr=(V,E,v_0,l)$ where
     $V$ is a set of nodes with root $v_0\in V$,
    the nodes $v\in V$ are labeled with $l(v)\subseteq\NC$, and
    the (directed) edges $vrw \in E$ are such that $v,w\in V$ and are labeled with $r\in\NR$.
	\end{definition}
	Given a tree $\Tr=(V,E,v_0,l)$ and $v\in V$, we denote by $\Tr(v)$ the subtree of $\Tr$ that is rooted in $v$.
	If $l(v_0)=\{A_1,\ldots,A_k\}$ and $v_1$, $\ldots$, $v_n$ are all the children of $v_0$, we
	can define the concept represented by $\Tr$ recursively using
	\(
        C_\Tr=A_1\dlAnd \ldots \dlAnd A_k\dlAnd \exists r_1. C_{\Tr(v_1)}\dlAnd\ldots
        \dlAnd \exists r_l.C_{\Tr(v_l)}
	\)
    where for $j\in\{1,\ldots,n\}$, $v_0 r_j v_j\in E$. Conversely, we can define $\Tr_C$ for
    a concept $C=A_1\sqcap \ldots\sqcap A_k\sqcap\exists r_1.C_1\sqcap\ldots\sqcap\exists r_n.C_n$ inductively based on the
    pairwise disjoint description trees $\Tr_{C_i}=\{V_i, E_i, v_i, l_i\}$, $i\in\{1,\ldots, n\}$. Specifically,
     $\Tr_C=(V_C, E_C,v_C, l_C)$, where
     \begin{multicols}{2}
       \noindent
         $V_C=\{v_0\}\cup\bigcup_{i=1}^{n} V_i$,\\
         $E_C=\{v_0r_iv_i\mid 1\leq i\leq n\}\cup\bigcup_{i=1}^{n} E_i$,\\

         \columnbreak
         \noindent
         $l_C(v)=l_i(v)$ for $v\in V_i$,\\
         $l_C(v_0)=\{A_1,\ldots, A_k\}$.\\
    \end{multicols}

    If $\Tmc=\emptyset$, then subsumption between \EL concepts is characterized by the existence of a homomorphism
    between the corresponding description trees~\cite{baader1999computing}.
    We generalise this notion to also take the TBox into account.
	\begin{definition}
		\label{def:homomorph}
		Let $\Tr_1=(V_1,E_1,v_0,l_1)$ and $\Tr_2=(V_2,E_2,w_0,l_2)$ be two description trees and $\Tmc$ a
		TBox.
		A mapping $\phi: V_2\rightarrow V_1$ is a \emph{\Tmc-homomorphism}
		from $\Tr_2$ to $\Tr_1$
		if and only if the following conditions are satisfied:
		\begin{enumerate}
			\item\label{it:root}$\phi(w_0)=v_0$
			\item$\phi(v)r\phi(w)\in E_1$ for all $vrw\in E_2$
			\item\label{it:entails} for every $v\in V_1$ and $w\in V_2$ with
			$v=\phi(w)$, $\Tmc\models \bigsqcap l_1(v)\isa \bigsqcap l_2(w)$
		\end{enumerate}
		If only 1 and 2 are satisfied, then $\phi$ is called a \emph{weak} homomorphism.
	\end{definition}
	\Tmc-homomorphisms for a given TBox \Tmc capture subsumption w.r.t.~\Tmc.
  If there exists a \Tmc-homomorphism $\phi$ from $\Tr_2$ to $\Tr_1$,
	then $\Tmc\models C_{\Tr_1}\isa C_{\Tr_2}$.
	This can be shown easily by structural induction using the definitions (see App~\ref{app:homo-ent}).
  The weak homomorphism is the structure on which a \Tmc-homomorphism can be built by adding some hypothesis \Hmc\ to \Tmc.
  It is used to reveal missing links between a subsumee $D_2$ of $C_2$ and a subsumer $D_1$ of $C_1$, that can be added using \Hmc.

  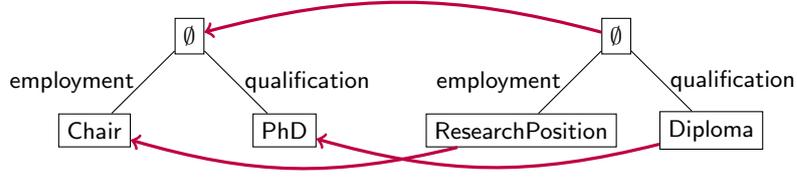
\begin{figure}[t]
    \centering
    \begin{tikzpicture}[level distance=1.25cm,
			level 1/.style={sibling distance=11cm},
			level 2/.style={sibling distance=2.5cm}]
			
			\node{}
			child{edge from parent[draw=none] node(v0)[draw] {$\emptyset$}
        child { node(v2)[draw] {{$\Chr$}}edge from parent node[left] {{$\emp$}}
        }
        child { node(v3)[draw] {{$\PhD$}}edge from parent node[right] {{$\qual$}}
        }
			}
			child{edge from parent[draw=none] node(w0)[draw] {$\emptyset$}
					child { node(w2)[draw] {$\RPo$}edge from parent node[left] {$\emp$}
					}
					child { node(w3)[draw] {$\Dip$}edge from parent node[right] {$\qual$}
					}
			};
			\draw[color=purple,->,very thick] (w0) to [bend right=15] (v0);
			\draw[color=purple,->,very thick] (w2) to [bend left=15] (v2);
			\draw[color=purple,->,very thick] (w3) to [bend left=15] (v3);
    \end{tikzpicture}
    \caption{Description trees of $D_1$ (left) and $D_2$ (right).}
    \label{fig:homomorph}
  \end{figure}

  \begin{example}
    \label{ex:trees}
    Consider the concepts
		\begin{align*}
      D_1&= \exists\emp.\Chr\dlAnd\exists\qual.\PhD\\
      D_2&=  \exists\emp.\RPo\dlAnd\exists\qual.\Dip
		\end{align*}
    from the academia example.
		Figure\ \ref{fig:homomorph} illustrates description trees for $D_1$ (left) and $D_2$ (right).
		The curved arrows show a weak homomorphism from $\Tr_{D_2}$ to $\Tr_{D_1}$ that can be strengthened into a \Tmc-homomorphism for some TBox $\Tmc$ that
		corresponds to the set of CIs in \text{$\Hac \cup \{\top\isa\top\}$}.
		The figure can also be used to illustrate what we mean by connection minimality: in order to create a connection
    between $D_1$ and $D_2$, we should \emph{only} add the CIs from $\Hac\cup \{\top\isa\top\}$ %
    \emph{unless} they are already entailed by $\Tac$.
    In practice, this means the weak homomorphism from $D_2$ to $D_1$ becomes a $(\Tac\cup\Hac)$-homomorphism.
  \end{example}

   To address point\ \ref{e:smand}, we define a partial order $\smand$ on concepts, s.t.\ $C\smand D$ if we can turn $D$ into $C$ by removing conjuncts in subexpressions, e.g., $\exists r'. B \smand \exists r. A \dlAnd \exists r'. (B \dlAnd B') $.
   Formally, this is achieved by the following definition.
	\begin{definition}
		\label{def:conmin}
		Let $C$ and $D$ be arbitrary concepts.
		Then $C\smand D$ if either:
		\begin{itemize}
			\item $C = D$,
			\item $D = D' \dlAnd D''$, and $C\smand D'$, or
			\item $C = \exists r.C'$, $D = \exists r.D'$ and $C'\smand D'$.
		\end{itemize}
	\end{definition}

    We can finally capture our ideas on connection minimality formally.

    \begin{definition}[Connection-Minimal Abduction]
      \label{def:abd}
      Given an abduction problem $\langle\Tmc,\sig,C_1\isa C_2\rangle$, a hypothesis \Hmc is \emph{connection-minimal} if there exist concepts $D_1$ and $D_2$ built over $\Sigma\cup\NR$ and a mapping $\phi$ satisfying each of the following conditions:
      \begin{enumerate}
			\item \label{it:reach} $\Tmc\models C_1\isa D_1$,
			\item \label{it:reachbw} $D_2$ is a \smand-minimal concept s.t.\ $\Tmc\models D_2\isa C_2$,
      \item \label{it:hom} $\phi$ is a weak homomorphism from the tree $\Tr_{D_2}=(V_2,E_2,w_0,l_2)$ to the tree $\Tr_{D_1}=(V_1,E_1,v_0,l_1)$, and
      \item \label{it:H} $\Hmc=\{\bigsqcap l_1(\phi(w))\isa \bigsqcap l_2(w)\mid w\in V_2\wedge\Tmc\not\models\bigsqcap l_1(\phi(w))\isa \bigsqcap l_2(w)\}$.
      \end{enumerate} 
      $\Hmc$ is additionally called \emph{packed} if
      the left-hand sides of the CIs in $\Hmc$ cannot hold more conjuncts than they do, which is formally stated as:
      for \Hmc, there is no \Hmc' defined from the same $D_2$ and a $D_1'$ and $\phi'$ s.t.\ there is a node $w\in V_2$ for which $l_1(\phi(w))\subsetneq l_1'(\phi'(w))$ and $l _1(\phi(w'))=l_1'(\phi'(w'))$ for $w'\neq w$.
     \end{definition}
Straightforward consequences of Def.\ \ref{def:abd} include that $\phi$ is a $(\Tmc\cup\Hmc)$-homomorphism from $\Tr_{D_2}$ to $\Tr_{D_1}$ and that $D_1$ and $D_2$ are connecting concepts from $C_1$ to $C_2$ in $\Tmc\cup\Hmc$ so that $\Tmc\cup\Hmc\models C_1\isa C_2$ as wanted (more details on theses results are given in App~\ref{app:straight_cons}). %
With the help of Fig.\ \ref{fig:homomorph} and Ex.~\ref{ex:trees}, one easily establishes that hypothesis $\Hac$ is connection-minimal---and even packed.
Connection-minimality rejects $\nHac$, as a single $\Tmc'$-homomorphism for some $\Tmc'$ between two concepts
$D_1$ and $D_2$ would be insufficient: we would need two weak homomorphisms, one linking $\Prf$ to $\FPr$ and another linking $\exists\wri.\GAp$ to $\exists\wri.\RPa$.

\section{Computing Connection-minimal Hypotheses using Prime Implicates}
\label{sec:technique}

To compute connection-minimal hypotheses in practice, we propose a method based on
first-order prime implicates, that can be derived by resolution. We assume the reader is familiar with the basics of
first-order resolution, and do not reintroduce notions of clauses, Skolemization
and resolution inferences here (for details, see~\cite{DBLP:books/el/RV01/BachmairG01}).
In our context, every term is built on variables, denoted $x$, $y$, a single constant $\skz$ and unary Skolem functions usually denoted $\sk$, possibly annotated. Prime implicates are defined as follows.

\begin{definition}[Prime Implicate]
  Let $\Phi$ be a set of clauses. A clause $\varphi$ is an \emph{implicate} of $\Phi$ if $\Phi\models \varphi$.
  Moreover $\varphi$ is \emph{prime} if for any other implicate $\varphi'$ of $\Phi$ s.t.\ $\varphi'\models \varphi$, it also holds that $\varphi\models\varphi'$.
\end{definition} 
Let $\sig\subseteq\NC$ be a set of unary predicates. Then $\PIp$ denotes the set of all positive ground prime implicates of $\Phi$ that only use predicate symbols from $\sig\cup\NR$,  while $\PIn$ denotes the set of all negative ground prime implicates of $\Phi$ that only use predicates symbols from $\sig\cup\NR$.

\begin{example}
  Given a set of clauses $\Phi= \{A_1(\skz),\neg B_1(\skz), \neg A_1(x)\lor r(x,\sk(x)),$\\
  $\neg A_1(x)\lor A_2(\sk(x)), \neg B_2(x)\lor \neg r(x,y)\lor \neg B_3(y)\lor B_1(x)\}$,
  the ground prime implicates of $\Phi$ for $\Sigma = \NC$ are, on the positive side, $\PIp=\{A_1(\skz),$ $A_2(\sk(\skz)), r(\skz,\sk(\skz))\}$ and, on the negative side, $\PIn=\{\neg B_1(\skz),$ $\neg B_2(\skz)\lor \neg B_3(\sk(\skz))\}$.
  They are implicates because all of them are entailed by $\Phi$.
  For a ground implicate $\varphi$, another ground implicate $\varphi'$ such that $\varphi'\models\varphi$ and $\varphi\not\models\varphi'$ can only be obtained from $\varphi$ by dropping literals.
  Such an operation does not produce another implicate for any of the clauses presented above as belonging to \PIp and \PIn, thus they really are all prime.
\end{example}

  To generate hypotheses, we translate the abduction problem into a set of first-order clauses, from which we can infer prime implicates that we then combine to obtain the result as illustrated in Fig.\ \ref{fig:birdeye}. %
\begin{figure}[t]
  \tikzstyle{block} = [rectangle, draw, fill=blue!20,
  text centered, minimum height=2em, minimum width=5em]
  \tikzstyle{centerblock} = [rectangle, draw, fill=green!20,
  text centered, minimum height=2em, minimum width=5em]
  \tikzstyle{line} = [draw, very thick, color=black!50, -latex']
  \tikzstyle{io} =    [draw, trapezium, trapezium left angle=70, trapezium right angle=-70,fill=red!20,
  minimum height=2em]
  \tikzstyle{iofol} =    [draw, trapezium, trapezium left angle=70, trapezium right angle=-70,fill=green!20,
  minimum height=2em]
  \center
  \begin{tikzpicture}[node distance = 0.5cm,every node/.style={scale=0.8},auto]
    \node [io] (obs) {$C_1\sqsubseteq C_2$};
    \node [io, below =of obs] (ontology) {$\Tmc$};

    \node [block, right =of ontology] (pi) {translation};
    \node [iofol, right =of pi] (phi) {$\Phi$};
    \node [block, right =of phi] (B) {PI generation};
    \node [io, below =of phi] (h) {$\Sigma$};
    \node [iofol, above right =of B] (piplus) {\PIp};
    \node [iofol, below right =of B] (piminus) {\PIn};
    \node [block, below right =of piplus] (comb) {recombination};  
    \node [io, right =of comb] (res) {\Smc};

    \path [line] (obs) -- (pi);
    \path [line] (ontology) -- (pi);
    \path [line] (h) -- (B);
    \path [line] (pi) -- (phi);
    \path [line] (phi) -- (B);
    \path [line] (B) -- (piplus);
    \path [line] (B) -- (piminus);
    \path [line] (piplus) -- (comb);
    \path [line] (piminus) -- (comb);
    \path [line] (comb) -- (res);
  \end{tikzpicture}
  \caption{\EL\ abduction using prime implicate generation in FOL.}
  \label{fig:birdeye}
\end{figure}
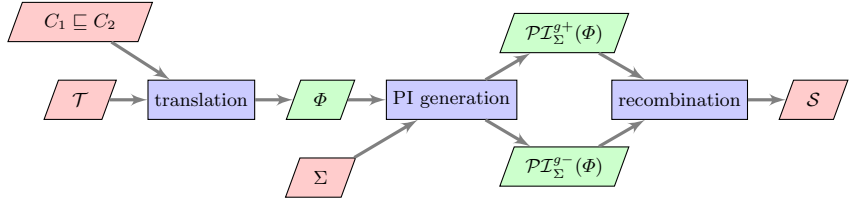
In more details: We first translate the problem into a set $\Phi$ of Horn clauses.
Prime implicates can be computed using an off-the-shelf tool\ \cite{DBLP:journals/aicom/NabeshimaIIR10,DBLP:conf/cade/EchenimPS18} or, in our case, a slight
extension of the resolution-based version of the SPASS theorem prover\ \cite{DBLP:conf/cade/WeidenbachSHRT07} using the set-of-support strategy and some added features described in Sect.\ \ref{sec:implem}.
Since $\Phi$ is Horn, \PIp\ contains only unit clauses.
A final recombination step looks at the clauses in \PIn\ one after the other.
These correspond to candidates for the connecting concepts $D_2$ of Def.~\ref{def:abd}.
Recombination attempts to match each literal in one such clause with unit clauses from \PIp.
If such a match is possible, it produces a suitable $D_1$ to match $D_2$, and allows the creation of
a solution to the abduction problem.
The set \Smc\ contains all the hypotheses thus obtained. %
In what follows, 
we present our translation of abduction problems into first-order logic and formalize the construction of hypotheses from the prime implicates of this translation. %
We then show how to obtain termination for the prime implicate generation process with soundness and completeness guarantees on the solutions computed.
\paragraph{\textbf{Abduction Method.}}

	We assume the \EL TBox in the input is in normal form as defined, e.g., by Baader et al.\ \cite{DL_TEXTBOOK}.
  Thus every CI is of one of the following forms:
  \[
    A \sqsubseteq B \qquad  A_1\sqcap A_2 \sqsubseteq B \qquad \exists r.A\sqsubseteq B \qquad  A \sqsubseteq\exists r.B
  \]
    where $A$, $A_1$, $A_2$, $B\in\NC\cup\{\top\}$.

    The use of normalization is justified by the following lemma (see App~\ref{app:preliminaries} for its proof).
    \begin{restatable}{lemma}{LemNormalize}\label{lem:normalize}
     For every \EL TBox $\Tmc$, we can compute in polynomial time an \EL TBox
      $\Tmc'$ in normal form such that for every other TBox $\Hmc$ and
every CI $C\sqsubseteq D$ that use only names occurring in $\Tmc$, we have
$\Tmc\cup\Hmc\models C\sqsubseteq D$ iff $\Tmc'\cup\Hmc\models C\sqsubseteq D$.
    \end{restatable}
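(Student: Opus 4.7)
The plan is to rely on the standard \EL normalization procedure of Baader et al.\ and verify that it behaves well in the presence of an arbitrary additional TBox $\Hmc$ over the original signature. I would start by recalling the normalization rules: each CI that is not already in normal form is rewritten by introducing a fresh atomic concept name standing for a complex subconcept. Typical rules include replacing $\exists r.C \isa B$ (with $C$ non-atomic) by the pair $\{\exists r.A \isa B,\ C \isa A\}$ for a fresh $A$, replacing $A \isa \exists r.C$ (with $C$ non-atomic) by $\{A \isa \exists r.A',\ A' \isa C\}$ for a fresh $A'$, handling nested conjunctions by flattening, and splitting multi-conjunct right-hand sides into several CIs. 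Each such rule is applied to a CI in $\Tmc$ and produces $\Tmc'$ by replacing that CI with the rewritten axioms. Since every rule strictly reduces the total concept size of the CI being normalized while adding only a constant number of new CIs and one fresh name, the overall procedure runs in polynomial time.

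The first step is to observe that it suffices to prove the equivalence for a single application of one normalization rule, and then iterate. So fix one such step, yielding $\Tmc'$ from $\Tmc$ by introducing a single fresh atomic concept $X \notin \mathsf{sig}(\Tmc) \cup \mathsf{sig}(\Hmc) \cup \{C, D\}$. The direction $\Tmc' \cup \Hmc \models C \isa D \Rightarrow \Tmc \cup \Hmc \models C \isa D$ follows from the fact that $\Tmc'$ conservatively extends $\Tmc$ over the signature not containing $X$: given any model $\Imc$ of $\Tmc \cup \Hmc$, one can extend $\Imc$ to a model $\Imc'$ of $\Tmc' \cup \Hmc$ by interpreting $X$ as the extension of the subconcept it stands for (e.g., $X^{\Imc'} = C_X^{\Imc}$, where $C_X$ is the complex concept replaced by $X$). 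Since $\Hmc$, $C$ and $D$ do not mention $X$, we have $C^{\Imc'} = C^{\Imc}$ and $D^{\Imc'} = D^{\Imc}$, so the entailment transfers. The opposite direction $\Tmc \cup \Hmc \models C \isa D \Rightarrow \Tmc' \cup \Hmc \models C \isa D$ is the easy one: every model of $\Tmc'$ is in particular a model of $\Tmc$ (because the new axioms jointly imply the replaced axiom), so every model of $\Tmc' \cup \Hmc$ is a model of $\Tmc \cup \Hmc$ and hence satisfies $C \isa D$.

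The main obstacle is the conservative-extension direction, and more precisely the subtlety that it must hold in the presence of an \emph{arbitrary} $\Hmc$ over the original signature---not just for entailment from $\Tmc$ alone. The argument above shows that this is not really harder than the standard case: because $X$ is fresh and does not appear in $\Hmc$, the model expansion $X^{\Imc'} := C_X^{\Imc}$ preserves satisfaction of $\Hmc$ trivially, so $\Imc'$ remains a model of $\Hmc$. Iterating the equivalence over all normalization steps and noting that each newly introduced fresh name remains fresh for $\Hmc$ and for $C \isa D$ (since these use only names of the original $\Tmc$), we obtain the desired equivalence between $\Tmc \cup \Hmc \models C \isa D$ and $\Tmc' \cup \Hmc \models C \isa D$, completing the proof.
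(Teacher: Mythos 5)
Your proof is correct and follows essentially the same route as the paper: both directions are handled exactly as in the paper's proof, via (i) the fact that the normalized TBox entails the original one and (ii) a model-expansion argument exploiting that the fresh names do not occur in $\Hmc$, $C$ or $D$. The only difference is that you re-derive the conservative-extension property rule by rule, whereas the paper simply cites it (Proposition 6.5 of the \EL normalization literature) as a known fact.
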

  After the normalisation, we eliminate occurrences of $\top$, replacing this concept everywhere by the fresh atomic concept $A_\top$.
  We furthermore add $\exists r.A_\top\isa A_\top$ and $B\isa A_\top$ in $\Tmc$ for every role $r$ and atomic concept $B$ occurring in $\Tmc$.
	This simulates the semantics of $\top$ for $A_\top$, namely the implicit property that $C\isa \top$ holds for any $C$ no matter what the TBox is.
	In particular, this ensures that whenever there is a positive prime implicate $B(t)$ or $r(t,t')$, $A_\top(t)$ also becomes a prime implicate.
    Note that normalisation and $\top$ elimination extend the signature, and thus potentially the solution space of the abduction problem.
    This is remedied by intersecting the set of abducible predicates \sig with the signature of the original input ontology.
    We assume that $\Tmc$ is in normal form and without $\top$ in the rest of the paper.

	We denote by $\negc{\Tmc}$ the result of renaming all atomic concepts $A$ in $\Tmc$ using fresh \emph{duplicate} symbols $\negc{A}$.
  This renaming is done only on concepts but not on roles, and on $C_2$ but not on $C_1$ in the observation.
  This ensures that
  the literals in a clause of $\PIn$ all relate to the conjuncts of a $\smand$-minimal subsumee of $C_2$.
  Without it, some of these conjuncts would not appear in the negative implicates due to the presence of their positive counterparts as atoms in $\PIp$. 
	The translation of the abduction problem $\langle\Tmc,\sig,C_1\sqsubseteq C_2\rangle$ is 
	defined as the
	Skolemization of
	$$\pi(\Tmc\uplus \negc{\Tmc})\land\neg\pi(C_1\isa \negc{C_2})$$
	where $\skz$ is used as the unique fresh Skolem constant such that the Skolemization of $\neg\pi(C_1\isa \negc{C_2})$
	results in $\{C_1(\skz),\neg \negc{C_2}(\skz)\}$.
  This translation is usually denoted $\Phi$ and always considered in clausal normal form.

  \begin{restatable}{theorem}{Construction}%
		\label{corr:constr}
		Let \mbox{$\langle\Tmc,\sig,C_1\isa C_2\rangle$} be an abduction problem and $\Phi$ be its first-order translation.
    Then, a TBox $\Hmc'$ is a packed connection-minimal solution to the problem if and only if an equivalent hypothesis $\Hmc$ can be constructed from non-empty sets $\Amc$ and $\Bmc$ of atoms verifying: %
		\begin{itemize}
    \item $\Bmc = \{B_1(t_1), \ldots, B_m(t_m)\}$ s.t.\ $\left( \neg \negc{B_1}(t_1)\vee\dots\vee \neg\negc{B_m}(t_m) \right)\in\PIn$,
    \item for all $t\in\{t_1,\ldots,t_m\}$ there exists an $A$ s.t.\ $A(t)\in\PIp$,
    \item $\Amc=\{A(t)\in\PIp\mid t\text{ is one of }t_1,\ldots,t_m\}$, and
    \item $\Hmc=\{C_{\Amc,t}\isa C_{\Bmc,t} \mid t\text{ is one of }t_1,\ldots,t_m \text{ and } C_{\Bmc,t}\not\smand C_{\Amc,t}\}$, %
      where $C_{\Amc,t}=\bigsqcap_{A(t)\in\Amc}A$ and $C_{\Bmc,t}=\bigsqcap_{B(t)\in\Bmc}B$.
		\end{itemize}
  \end{restatable}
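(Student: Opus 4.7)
The plan is to establish a structural dictionary between ground terms appearing in prime implicates of $\Phi$ and nodes of \EL description trees, and then verify both directions of the equivalence against Definition \ref{def:abd}. Because $\Tmc$ is in normal form, Skolemization of an axiom of the shape $A\isa\exists r.B$ introduces a unary Skolem function that I shall write $\mathtt{sk}_r$; every ground term built from $\skz$ therefore encodes a path of role labels starting from the root, and I identify it with a node of a description tree. Since $\pi(\Tmc\uplus\negc{\Tmc})$ is Horn and the Skolemization of $\neg\pi(C_1\isa\negc{C_2})$ contributes only the two units $C_1(\skz)$ and $\neg\negc{C_2}(\skz)$, the clause set $\Phi$ is Horn; positive ground prime implicates are unit clauses over the predicates of $\Tmc$, while negative ground prime implicates are clauses over the duplicated predicates only. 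The technical heart of the proof is the following dictionary lemma that I would establish first: \textbf{(a)} $A(t)\in\PIp$ iff $A$ belongs to the label of the node coded by $t$ in the canonical maximally-labeled description tree of a subsumer of $C_1$ under $\Tmc$; and \textbf{(b)} the clauses in $\PIn$ are in bijection with $\smand$-minimal subsumees $D_2$ of $C_2$ under $\Tmc$, a clause $\neg\negc{B_1}(t_1)\vee\dots\vee\neg\negc{B_m}(t_m)$ encoding $\Tr_{D_2}$ by taking its nodes to be the distinct terms among $t_1,\dots,t_m$, its edges as dictated by the Skolem-function structure of these terms, and the label of the node coded by $t$ to be $\{B_i\mid t_i=t\}$.

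With this dictionary in hand, the \emph{if} direction is straightforward. Given $\Amc$ and $\Bmc$ as in the theorem, I would read off $\Tr_{D_2}$ from $\Bmc$ via \textbf{(b)}, construct a description tree of a subsumer $D_1$ of $C_1$ whose nodes are the distinct terms appearing in $\Amc\cup\Bmc$ and whose label at $t$ is $\{A\mid A(t)\in\Amc\}$, and define $\phi$ by sending the node of $\Tr_{D_2}$ coded by $t$ to the node of $\Tr_{D_1}$ coded by the same term. This $\phi$ is a weak homomorphism by construction, and conditions \ref{it:reach}--\ref{it:H} of Definition \ref{def:abd} then follow directly: \ref{it:reach} from the construction of $D_1$, \ref{it:reachbw} from the minimality half of \textbf{(b)}, \ref{it:hom} from the Skolem-function-preserving identification of terms with nodes, and \ref{it:H} because $C_{\Amc,t}=\bigsqcap l_1(\phi(w))$ and $C_{\Bmc,t}=\bigsqcap l_2(w)$ at the shared term $t=t_w$. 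Packedness is immediate from the third bullet of the theorem: $\Amc$ collects \emph{every} positive prime implicate at the relevant terms, so no left-hand label can be enlarged.

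For the converse, given a packed connection-minimal $\Hmc'$ with witnesses $D_1,D_2,\phi$, I would annotate each node $w$ of $\Tr_{D_2}$ with the ground term $t_w$ obtained by following the role labels from $w_0$ to $w$, set $\Bmc=\{B(t_w)\mid B\in l_2(w)\}$, and use \textbf{(b)} to match this with a clause of $\PIn$; here condition \ref{it:reachbw} of Definition \ref{def:abd}, i.e., the $\smand$-minimality of $D_2$, is precisely what ensures primality. I would then set $\Amc=\{A(t_w)\in\PIp\mid w\in V_2\}$ and, using \textbf{(a)} together with the packedness of $\Hmc'$, conclude that $l_1(\phi(w))=\{A\mid A(t_w)\in\Amc\}$. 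The constructed $\Hmc$ then matches $\Hmc'$ up to equivalence. The main obstacle will be proving part \textbf{(b)} of the dictionary lemma: calibrating primality of a negative ground clause against $\smand$-minimality of the corresponding tree requires a careful analysis of the clauses derivable by SOS-resolution from $\neg\negc{C_2}(\skz)$ in a Horn setting, and in particular showing that any strict $\smand$-reduction of $D_2$ yields a clause with strictly fewer literals. A related subtlety is the equivalence between the side conditions $C_{\Bmc,t}\not\smand C_{\Amc,t}$ and $\Tmc\not\models\bigsqcap l_1(\phi(w))\isa\bigsqcap l_2(w)$: packedness ensures that any atomic $B\in l_2(w)$ with $\Tmc\cup\{A(t_w)\mid A\in l_1(\phi(w))\}\models B(t_w)$ already appears in $l_1(\phi(w))$, so that non-entailment of the CI reduces exactly to non-inclusion of conjuncts.
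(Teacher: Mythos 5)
Your overall architecture is the paper's: a positive-side correspondence between atoms of $\PIp$ and node labels of subsumer trees of $C_1$, a negative-side correspondence between clauses of $\PIn$ and $\smand$-minimal subsumees of $C_2$, and an assembly of the two through a weak homomorphism, with packedness falling out of the fact that $\Amc$ collects \emph{every} positive prime implicate at the relevant terms. However, part \textbf{(b)} of your dictionary is false as stated, and not merely for cosmetic reasons: $\PIn$ is not in bijection with the $\smand$-minimal subsumees of $C_2$. In one direction, $\PIn$ (negative ground prime implicates over $\sig\cup\NR$) can contain clauses with residual role literals, and clauses whose terms are built from the fresh Skolem functions introduced for $\negc{\Tmc}$ and hence have no counterpart in $\PIp$ --- this is precisely why the theorem needs its second bullet as a filter. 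In the other direction, a $\smand$-minimal subsumee $D_2$ of $C_2$ yields a usable clause only when the role literals of $\neg\pi(\negc{D_2},\skz)$ can be resolved away against atoms $r(t,\sk(t))\in\PIp$, i.e., only when $\Tr_{D_2}$ admits a weak homomorphism into a subsumer tree of $C_1$ whose canonical model sits inside $\PIp$. The paper accordingly states and proves the negative-side correspondence only \emph{relative to} such a homomorphism; once you make that repair, ``(b)'' stops being a standalone dictionary and becomes the two-sided lemma that carries most of the weight of the proof, including the calibration of primality against $\smand$-minimality that you correctly flag as the main obstacle.

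A second, concrete failure point is your term assignment in the converse direction: annotating a node $w$ of $\Tr_{D_2}$ with ``the ground term obtained by following the role labels from $w_0$ to $w$'' does not work. Skolem functions are indexed by axiom occurrences, not by roles, so a role path does not determine a term; the correct term is $sl_{\Tr_1}(\phi(w))$, inherited through $\phi$ from the Skolem labeling of the $C_1$-side tree. In particular, two distinct nodes of $\Tr_{D_2}$ reached by the same role may be sent by $\phi$ to the same node and must then share a term: for $\Tmc=\{C_1\isa\exists r_1.(A\sqcap B),\ \exists r_1.C\sqcap\exists r_1.D\isa C_2\}$ the relevant implicate is $\neg\negc{C}(\sk_1(\skz))\vee\neg\negc{D}(\sk_1(\skz))$, with a repeated term that your recipe cannot produce. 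A smaller repair is also needed in the ``if'' direction: the node set of $\Tr_{D_1}$ must be the prefix closure of $\{t_1,\ldots,t_m\}$ under the subterm relation (allowing empty labels at intermediate nodes), since, e.g., in the academia example the usable negative implicate mentions $\sk_1(\skz)$ and $\sk_2(\skz)$ but not $\skz$ itself.
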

  We call the hypotheses that are constructed as in Th.\ \ref{corr:constr} \emph{constructible}.
  This theorem states that every packed connection-minimal hypothesis is equivalent to
  a constructible hypothesis and vice versa. A constructible hypothesis is built from
  the concepts in \emph{one} negative prime implicate in $\PIn$ and \emph{all}
  matching concepts from prime implicates in $\PIp$.
  The matching itself is determined by the Skolem terms that occur in all these clauses.
  The subterm relation between the terms of the clauses in $\PIp$ and $\PIn$ is the same as the ancestor relation in the description trees of subsumers of $C_1$ and subsumees of $C_2$ respectively. 
  The terms matching in positive and negative prime implicates allow us to identify where the missing entailments between a subsumer $D_1$ of $C_1$ and a subsumee $D_2$ of $C_2$ are.
These missing entailments become the constructible $\Hmc$. %
   The condition $C_{\Bmc,t}\not\smand C_{\Amc,t}$ is a way to write that $C_{\Amc,t}\isa C_{\Bmc,t}$ is not a tautology, which can be tested by subset inclusion.

The formal proof of this result is detailed in App\ \ref{app:cons}.\footnote{In case of acceptance, the long version of this paper, including appendices, will be made available on an open public archive, such as arXiv.}
We sketch it briefly here.
To start, we link the subsumers of $C_1$ with $\PIp$.
This is done at the semantics level:
We show that all Herbrand models of $\Phi$, i.e., models built on the symbols in $\Phi$, are also models of $\PIp$, that is itself such a model.
Then we show that  $C_1(\skz)$ as well as the formulas corresponding to the subsumers of $C_1$ in our translation are satisfied by all Herbrand models.
This follows from the fact that $\Phi$ is in fact a set of Horn clauses.
Next, we show, using a similar technique, how duplicate negative ground implicates, not necessarily prime, relate to subsumees of $C_2$,
with the restriction that there must exist a weak homomorphism from a description tree of a subsumer of $C_1$ to
  a description tree of the considered subsumee of $C_2$. %
  Thus, $\Hmc$ provides the missing CIs that will turn the weak homomorphism into a $(\Tmc\cup\Hmc)$-homomorphism.
  Then, we establish an equivalence between the \smand-minimality of the subsumee of $C_2$ and the primality of the corresponding negative implicate.
  Packability is the last aspect we deal with, whose use is purely limited to the reconstruction.
  It holds because \Amc contains all $A(t)\in\PIp$ for all terms $t$ occurring in \Bmc.

  \begin{example}
    \label{ex:constr}
  Consider the abduction problem $\langle \Tac,\Sigma, \Obsa \rangle$ where $\Sigma$ contains all concepts from $\Tac$.
  For the translation $\Phi$ of this problem, we have
  \begin{eqnarray*}
    \PIp &=& \{\, \Prf(\skz),\, \Doc(\skz),\, \Chr(\sk_1(\skz)),\, \PhD(\sk_2(\skz))\}\\
    \PIn &=& \{\,\neg\negc{\Res}(\skz),\\
    &&\;\;\, \neg\negc{\RPo}(\sk_1(\skz))\vee\neg\negc{\Dip}(\sk_2(\skz))\}
  \end{eqnarray*}
  where $\sk_1$ is the Skolem function introduced for %
  $\Prf\isa\exists\emp.\Chr$ and $\sk_2$ is introduced for
  $\Doc\isa\exists\qual.\PhD$.
  This leads to two constructible solutions: $\{\Prf\dlAnd\Doc\isa\Res\}$ and $\Hac$,
  that are both packed connection-minimal hypotheses if $\sig=\NC$.
  Another example is presented in full details in App\ \ref{app:trans}.
  \end{example}

\paragraph{\textbf{Termination.}}
If $\Tmc$ contains cycles, there can be infinitely many prime implicates.
For example, for $\Tmc=\{C_1\isa A, A\isa\exists r.A, \exists r. B\isa B, B\isa C_2\}$ %
both the positive and negative ground prime implicates of $\Phi$ are unbounded even though the set of constructible hypotheses is finite (as it is for any abduction problem):
  \begin{eqnarray*}
    \PIp &=& \{C_1(\skz),A(\skz), A(\sk(\skz)), A(\sk(\sk(\skz))), \ldots\},\\
    \PIn &=& \{\neg\negc{C_2}(\skz),\neg\negc{B}(\skz),\neg\negc{B}(\sk(\skz)),\ldots\}.
  \end{eqnarray*}
  To find all constructible hypotheses of an abduction problem, an approach that simply computes all prime implicates of $\Phi$, e.g., using the standard resolution calculus, will never terminate on cyclic problems.
  However, if we look only for subset-minimal constructible hypotheses, termination can be achieved for cyclic and non-cyclic problems alike, because it is possible to construct all such hypotheses from prime implicates that have a polynomially bounded term depth, as shown below.
  To obtain this bound, we consider resolution derivations of the ground prime implicates and we show that they can be done under some restrictions that imply this bound.
  Before performing resolution, we compute the \emph{presaturation $\Phi_p$ of the set of clauses $\Phi$}, defined
  as
  \[
    \Phi_p=\Phi\cup\{\neg A(x)\vee B(x)\mid \Phi\models\neg A(x)\vee B(x)\}
  \]
  where $A$ and $B$ are either both original or both duplicate atomic concepts.
  The presaturation can be efficiently computed before the translation, using a modern \EL reasoner such as
  \ELK~\cite{ELK}, which is highly optimized towards the computation of all entailments of the form
  $A\sqsubseteq B$. While the presaturation computes nothing a resolution procedure could not derive, it is what allows us to bind the maximal depth of terms in inferences to that in prime implicates.
  If $\Phi_p$ is presaturated, we do not need to perform inferences that
  produce Skolem terms of a higher nesting depth than what is needed for the prime implicates.

  Starting from the presaturated set $\Phi_p$, we can show that all the relevant prime implicates can be
  computed
  if we restrict all inferences to those where
  \begin{enumerate}[label=\textbf{R\arabic*}]
   \item \label{it:oneground} at least one premise contains a ground term,
   \item \label{it:onevar} the resolvent contains at most one variable, and
   \item \label{it:bound} every literal in the resolvent contains Skolem terms of nesting depth at most $n\times m$, where $n$ is the number of atomic concepts in $\Phi$, and $m$ is the number of occurrences of existential role restrictions in $\Tmc$.
  \end{enumerate}
  The first restriction turns the derivation of $\PIp$ and $\PIn$ into an SOS-resolution derivation \cite{DBLP:conf/cade/HaifaniTW21} with set of support $\{C_1(\skz),\negc{C_2}(\skz)\}$, i.e., the only two clauses with ground terms in $\Phi$.
  This restriction is a straightforward consequence of our interest in computing only ground implicates, and of the fact that the non-ground clauses in $\Phi$ cannot entail the empty clause since every \EL TBox is consistent.
  The other restrictions are consequences of the following theorems, whose proofs are available in App.\ \ref{app:term}.

  \begin{restatable}{theorem}{ThmVariableBound}
    \label{thm:variable-bound}
  Given an abduction problem and its translation $\Phi$, every constructible hypothesis can be built 
  from prime implicates that are inferred under restriction~\ref{it:onevar}.
\end{restatable}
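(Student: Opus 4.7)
The plan is to argue by induction on the structure of derivations that every ground prime implicate needed to build a constructible hypothesis admits a derivation in which every resolvent has at most one variable. First I would inspect the input set $\Phi_p$ to locate the sources of multi-variable clauses. Because every CI is in normal form, the only clauses in $\Phi_p$ with two variables are those obtained from role-type axioms $\exists r.A \isa B$, namely $\neg r(x,y) \lor \neg A(y) \lor B(x)$. All other clauses---including the presaturation clauses $\neg A(x) \lor B(x)$, conjunctive clauses $\neg A_1(x) \lor \neg A_2(x) \lor B(x)$, and existential-producing clauses $\neg A(x) \lor r(x, \sk(x))$ and $\neg A(x) \lor B(\sk(x))$---carry at most one variable.

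Next I would show that the first resolution step involving a 2-variable input clause never violates the invariant, because \ref{it:oneground} forces the other premise to contain a ground term. Resolving on the role literal with a ground $r(t,t')$ simultaneously instantiates $x$ and $y$, giving a ground resolvent; resolving on $\neg A(y)$ with a ground $A(t)$ instantiates $y$ and leaves $x$; and resolving on $B(x)$ with a ground $\neg B(t)$ instantiates $x$ and leaves $y$. The first use of any 2-variable input clause therefore preserves the single-variable bound.

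The delicate case arises later. A 2-variable input clause can be resolved against a derived one-variable clause that carries a ground term, and the resolvent may then inherit both variables of the input clause. For instance, resolving the previously derived $\neg r(x, \sk(\skz)) \lor B(x)$ against the input $\neg r(x',y') \lor \neg A(y') \lor B(x')$ on literal $B$ yields $\neg r(x, \sk(\skz)) \lor \neg r(x, y') \lor \neg A(y')$ with two variables. The plan is to show that such combinations are never required to build the ground prime implicates contributing to a constructible hypothesis. By Theorem~\ref{corr:constr}, the relevant negative prime implicates in $\PIn$ correspond to $\smand$-minimal subsumees of $C_2$ whose description trees have a clean top-down structure, and the matching positive unit clauses of $\PIp$ are exactly the forward-chained consequences of $C_1(\skz)$ along the same Skolem-term backbone. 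A derivation that follows this tree top-down and, whenever a role-type clause is invoked, immediately instantiates its existential variable by resolving the role literal with the matching positive ground role atom coming from forward chaining, keeps every resolvent with at most the single root variable of one role-type ancestor.

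The main obstacle will be making this reorganisation formal. Given an arbitrary derivation $\Pi$ of a ground prime implicate $\pi$ used in some constructible hypothesis, I would transform $\Pi$ by pushing each variable-instantiating resolution step as early as possible, and then verify that the transformed derivation still derives $\pi$ and never produces a two-variable intermediate. The subtle point is ensuring that the positive ground role atoms needed for early instantiation are actually available; this follows from the Horn structure of $\Phi$ together with the Skolem-term matching asserted by Theorem~\ref{corr:constr}, which guarantees that every negative prime implicate used in a constructible hypothesis is paired with positive ground atoms whose Skolem terms lie on the same role path from $\skz$ and are therefore derivable by pure unit-resolution forward chaining on $C_1(\skz)$.
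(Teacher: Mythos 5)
Your proposal follows essentially the same route as the paper: the I5-type clauses $\neg r(x,y)\vee\neg A(y)\vee B(x)$ are the only source of extra variables, the only derivable positive role atoms have the form $r(t,\sk(t))$ coming from the Skolem-role clauses, and the Horn structure lets you permute the derivation so that the residual role literal is resolved away (grounding the second variable) immediately after it appears, which is exactly the paper's rearrangement argument. One local slip: your illustrative "delicate case" resolves two \emph{positive} occurrences of $B$, which is not a valid inference (a correct instance would resolve the positive $A_2(x)$ of an I5-clause against a negative literal of a derived one-variable clause), but this does not affect the soundness of the overall strategy.
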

In fact, for $\PIp$ it is even possible to restrict inferences to generating only ground resolvents, as can be seen in the proof of Th.\ \ref{thm:variable-bound}, that directly looks at the kinds of clauses that are derivable by resolution from $\Phi$.

\begin{restatable}{theorem}{ThmBound}\label{the:bound}
  Given an abduction problem and its translation $\Phi$, every subset-minimal constructible hypothesis can be built from prime implicates that have a nesting depth of at most $n\times m$, where $n$ is the number of atomic concepts in $\Phi$, and $m$ is the number of occurrences of existential role restrictions in $\Tmc$.
  \end{restatable}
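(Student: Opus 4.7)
}}
The plan is to start from an arbitrary subset-minimal constructible hypothesis $\Hmc$ and show, by a pigeonhole-based contraction argument, that the prime implicates used to build $\Hmc$ can be chosen with nesting depth at most $n\times m$. By Theorem~\ref{corr:constr}, $\Hmc$ is built from one negative prime implicate $\pi_-=\neg\negc{B_1}(t_1)\vee\ldots\vee\neg\negc{B_l}(t_l)\in\PIn$ and a matching set of positive atoms $\Amc\subseteq\PIp$, where the Skolem terms appearing in $\pi_-$ and $\Amc$ form a subterm-closed family rooted at $\skz$ and organised as a tree whose depth equals the maximal Skolem-nesting in $\Hmc$.

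The first key step is to analyse the type of each term. For every Skolem term $t$, let $\tau^+(t)=\{A\mid A(t)\in\PIp\}$. Because $\Phi_p$ is presaturated with every entailment $\neg A(x)\vee B(x)$, and because the only rule that transports information from $t$ into $\sk_i(t)$ is the clause $\neg A_i(x)\vee B_i(\sk_i(x))$ arising from the existential CI $A_i\isa\exists r_i.B_i$ that defines $\sk_i$, one can show that $\tau^+(\sk_i(t))$ is determined by the Skolem function $\sk_i$ alone. A symmetric analysis on the duplicate side shows that the possible negative literals $\neg\negc{B}(\sk_i(t))$ participating in a ground prime implicate are likewise determined by $\sk_i$ and $B$. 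Together, these facts say that the ``downward'' behaviour of the derivation at any term is invariant under swapping that term for another one produced by the same outermost Skolem function.

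The second step is the pigeonhole argument. Fix a root-to-leaf path $\skz=t_0,t_1,\ldots,t_d$ with $t_j=\sk_{i_j}(t_{j-1})$ in the term-tree of $\Hmc$, and assume $d>n\times m$. Attach to each position $j$ a state combining the atomic concept at $t_{j-1}$ that enables the transition (one of at most $n$ concepts in $\Phi$) with the existential-restriction occurrence used (one of the $m$ possibilities in $\Tmc$); this yields at most $n\times m$ distinct states. Hence there exist indices $j<k$ carrying the same state, and by the invariance established above, every atom of $\PIp$ and every negative literal of $\pi_-$ that lives in the subtree below $t_k$ can be re-derived in the analogous subtree grafted below $t_j$. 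Excising the segment strictly between $t_j$ and $t_k$ therefore yields a strictly shallower term-tree from which a negative ground implicate $\pi_-'$ and a matching $\Amc'\subseteq\PIp$ can be read off; they remain implicates of $\Phi$, and they remain prime, since dropping a literal that was essential in $\pi_-$ would already witness a strictly shorter implicate of $\Phi$ before contraction.

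Finally, by Theorem~\ref{corr:constr} the contracted data yield a constructible hypothesis $\Hmc'$ whose CIs, obtained at shared terms from the same left- and right-hand atomic concept sets, satisfy $\Hmc'\subseteq\Hmc$ up to the identification of equal CIs. Subset-minimality of $\Hmc$ forces $\Hmc'=\Hmc$, and iterating the contraction until no path exceeds $n\times m$ completes the argument. The hard part will be choosing the pigeonhole state precisely enough that the $n\times m$ factor falls out cleanly, and checking that the single contraction step simultaneously preserves derivability on the positive side, derivability of each literal of $\pi_-'$, and \emph{primality} of $\pi_-'$; the last point is the most delicate and relies on the type-invariance lemma rather than on a naive syntactic subterm substitution.
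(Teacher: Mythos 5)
Your plan follows essentially the same route as the paper's proof: your type-invariance claim (that $\tau^+(\sk_i(t))$ depends only on $\sk_i$) is the paper's lemma on positive labelings of solution trees, and your pigeonhole-plus-excision step is exactly the paper's pumping lemma, which contracts two nodes on a path that agree on both the head Skolem function and the negative label tracking the derivation of the negative prime implicate, with primality preserved by the same "reverse the contraction" argument you sketch. The only point left open is the one you flag yourself: the right pigeonhole state is the pair consisting of the head Skolem function of $t_j$ and the duplicate concept $\negc{B}$ whose negative literal at $t_j$ is currently being decomposed (rather than a concept at $t_{j-1}$), which yields the $n\times m$ count and makes both the positive and the negative subderivations transplantable.
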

  The proof of Th.~\ref{the:bound} is based on a structure called a \emph{solution tree}, which resembles a description tree, but with multiple labeling functions.
  It assigns to each node a Skolem term, a set of atomic concepts called \emph{positive label}, and a single atomic concept called \emph{negative label}.
  The nodes correspond to matching partners in a constructible hypothesis: The Skolem term is the term
  on which we match literals. The positive label collects the atomic concepts in the positive prime implicates
  containing that term.
  The maximal anti-chains of the tree, i.e., the maximal subsets of nodes s.t.\ no node is the ancestor of another are such that their negative labels correspond to the literals in a derivable negative implicate. %
  For every solution tree, the Skolem labels and negative labels of the leaves determine a negative prime implicate, and by combining the positive and negative labels of these leaves, we obtain a constructible hypothesis, called the \emph{solution} of the tree. %
  We show that from every solution tree with solution $\Hmc$ we can obtain a solution tree 
  with solution $\Hmc'\subseteq\Hmc$ s.t.\ on no path, there are two nodes that agree both on the
  head of their Skolem labeling and on the negative label.
  Furthermore the number of head functions of Skolem labels is bounded by the total number $n$ of Skolem functions, while the number of distinct negative labels is bounded by the number $m$ of atomic concepts, bounding the depth of the solution tree for $\Hmc'$ at $n\times m$.
  This justifies the bound in Th~\ref{the:bound}.
  This bound is rather loose.
  For the academia example, it is equal to $22\times 6 = 132$.

\section{Implementation}
\label{sec:implem}

We implemented our method to compute all subset-minimal constructible
hypotheses in the tool CAPI.\footnote{available under
\url{https://lat.inf.tu-dresden.de/~koopmann/CAPI}}
To compute the prime implicates, we used SPASS \cite{DBLP:conf/cade/WeidenbachSHRT07}, a first-order theorem prover that includes resolution among other calculi.
We implemented everything before and after the prime implicate computation in Java, including the parsing of ontologies, preprocessing (detailed below), clausification of the abduction problems, translation to SPASS input, as well as the parsing and processing of the output of SPASS to build the constructible hypotheses and filter out the non-subset-minimal ones.
On the Java side, we used the OWL API for all DL-related functionalities~\cite{OWL-API}, and the \EL reasoner \ELK for computing the presaturations~\cite{ELK}.

 \paragraph{Preprocessing.}
 Since realistic TBoxes can be too large to be processed by SPASS,
 we replace
 the background knowledge in the abduction problem by a subset of axioms relevant to the abduction problem. Specifically,
 we replace the abduction problem $\tup{\Tmc,\Sigma,C_1\sqsubseteq C_2}$ by the abduction
 problem $\tup{\Mmc_{C_1}^\bot\cup\Mmc_{C_2}^\top,\Sigma,C_1\sqsubseteq C_2}$, where
 $\Mmc_{C_1}^\bot$ is the \emph{$\bot$-module} of $\Tmc$ for the signature of $C_1$, and $\Mmc_{C_2}^\top$
 is the \emph{$\top$-module} of $\Tmc$ for the signature of $C_2$ \cite{DBLP:journals/jair/GrauHKS08}.
 Those notions are explained in App\ \ref{app:modules}.
 Their relevant properties are that $\Mmc_{C_1}^\bot$ is a subset of $\Tmc$ s.t.\ %
 $\Mmc_{C_1}^\bot\models C_1\sqsubseteq D$ iff $\Tmc\models C_1\sqsubseteq D$ for all concepts
 $D$, while $\Mmc_{C_2}^\top$ is a subset of $\Tmc$ that ensures
 $\Mmc_{C_2}^\top\models D\sqsubseteq C_2$ iff $\Tmc\models D\sqsubseteq C_2$ for all concepts
 $D$. It immediately follows %
 that %
 every connection-minimal hypothesis for the original problem
 $\tup{\Tmc,\Sigma,C_1\sqsubseteq C_2}$ is also a connection-minimal hypothesis for
 $\tup{\Mmc_{C_1}^\bot\cup\Mmc_{C_2}^\top,\Sigma,C_1\sqsubseteq C_2}$.
 For the presaturation, %
 we compute with \ELK all CIs of the form $A\isa B$ s.t.\ $\Mmc_{C_1}^\bot\cup\Mmc_{C_2}^\top\models A\isa B$. %

 \paragraph{Prime implicates generation.}
 We rely on a slightly modified version of SPASS v3.9 to compute all ground prime implicates.
 In particular, we added the possibility to limit the number of variables allowed in the resolvents to enforce\ \ref{it:onevar}. %
 For each of the restrictions\ \ref{it:oneground} -- \ref{it:bound} there is a corresponding flag (or set of flags) that is passed to SPASS as an argument.

\paragraph{Recombination.}
The construction of hypotheses from the prime implicates found in the previous stage starts with a straightforward process of matching negative prime implicates with a set of positive ones based on their Skolem terms.
It is followed by subset minimality tests to discard non-subset-minimal hypotheses, since, with the bound we enforce, there is no guarantee that these are valid constructible hypotheses because the negative ground implicates they are built upon may not be prime.
If SPASS terminates due to a timeout instead of reaching the bound, then it is possible that some subset-minimal constructible hypotheses are not found, and thus, some non-constructible hypotheses may be kept.
Note that these are in any case solutions to the abduction problem.

\section{Experiments}
\label{sec:exp}

\newcommand{\NonEntailed}{\texttt{ORIGIN}\xspace}%
\newcommand{\Just}{\texttt{JUSTIF}\xspace}%
\newcommand{\Entailed}{\texttt{REPAIR}\xspace}%

There is no benchmark suite dedicated to TBox abduction in \EL, so we created
our own, using realistic ontologies from the bio-medical domain.
For this, we used ontologies from the
2017 snapshot of Bioportal~\cite{Bioportal-2017}.
We restricted each ontology to its \EL fragment by filtering out
unsupported axioms, where we replaced domain axioms and n-ary equivalence axioms in the usual way \cite{DL_TEXTBOOK}.
Note that, even if the ontology contains more expressive axioms,
an \EL hypothesis is still useful if found.
From the resulting set of TBoxes, we selected those containing at least 1 and at most 50,000 axioms, resulting in a set of 387 \EL TBoxes.
Precisely, they contained between 2 and 46,429 axioms, for an average of 3,039 and a median of 569.
Towards obtaining realistic benchmarks, we created three different categories of abduction problems for each ontology $\Tmc$, where in each case, we used the
signature of the entire ontology for $\sig$.
\begin{itemize}
 \item Problems in \NonEntailed use $\Tmc$ as background knowledge, and as observation a randomly chosen $A\sqsubseteq B$ s.t.\ $A$ and $B$ are in the signature of $\Tmc$ and $\Tmc\not\models A\sqsubseteq B$. This covers the basic requirements of an abduction problem, but has the disadvantage that $A$ and $B$ can be completely unrelated in $\Tmc$.

 \item Problems in \Just contain as observation a randomly selected CI $\alpha$ s.t., for the original TBox, $\Tmc\models\alpha$ and $\alpha\not\in\Tmc$. The background knowledge used is a \emph{justification for $\alpha$ in $\Tmc$}~\cite{ScCo03}, that is, a minimal subset $\Jmc\subseteq\Tmc$ s.t.\ $\Jmc\not\models\alpha$, from which a randomly selected axiom is removed. The TBox is thus a smaller set of axioms extracted from a real ontology for which we know there is a way of producing the required entailment without adding it explicitly. Justifications were computed using functionalities of the OWL API and \ELK.

 \item Problems in \Entailed contain as observation a randomly selected CI $\alpha$ s.t.\ $\Tmc\models\alpha$, and as background knowledge a
 \emph{repair for $\alpha$ in $\Tmc$}, which is a maximal subset $\Rmc\subseteq\Tmc$ s.t. $\Rmc\not\models\alpha$. Repairs were computed using a justification-based algorithm \cite{ScCo03} with justifications computed as for~\Just. This usually resulted in much larger TBoxes, where more axioms would be needed to establish the entailment.
\end{itemize}

All experiments were run on Debian Linux (Intel Core i5-4590, 3.30\,GHz, 23\,GB Java heap size).
The code and scripts used in the experiments are available
online~\cite{ZENODO_LINK}.
The three phases of the method (see Fig.\ \ref{fig:birdeye}) were each assigned a hard time limit of 90 seconds.

For each ontology, we attempted to create and translate 5 abduction problems of each category. %
This failed on some
ontologies because either there was no corresponding entailment
(25/28/25 failures out of the 387 ontologies for \NonEntailed/\Just/\Entailed),
there was a timeout during the translation (5/5/5 failures for \NonEntailed/\Just/\Entailed),
or because the computation of justifications caused an exception
(-/2/0 failures for \NonEntailed/\Just/\Entailed).
The final number of abduction problems for each category is in the first column of Table~\ref{tab:results}.

We then attempted to compute prime implicates for these benchmarks using SPASS.
In addition to the hard time limit, we gave a soft time limit of 30 seconds to SPASS,
after which it should stop exploring the search space and return the implicates already found.
In Table\ \ref{tab:results} we show, for each category, the percentage of problems on which SPASS succeeded in computing a non-empty set of clauses (Success) and the percentage of problems on which SPASS terminated within the time limit, where all solutions are computed (Compl.).
The high number of CIs in the background knowledge explains most of the cases where SPASS reached the soft time limit.
In a lot of these cases, the bound on the term depth goes into the billion, rendering it useless in practice.
However, the ``Compl.'' column shows that the bound is reached before the soft time limit in most cases.

The reconstruction never reached the hard time limit.
We measured the median, average and maximal number of solutions found (\#\Hmc), size of solutions in number of CIs ($|\Hmc|$), size of CIs from solutions in number of atomic concepts ($|\alpha|$), and SPASS runtime (time, in seconds), all reported in Table\ \ref{tab:results}.
Except for the simple \Just problems, the number of solutions may become very large.
At the same time, solutions always contain very few axioms (never more than 3), though the axioms become large too.
We also noticed that
highly nested Skolem terms rarely lead to more hypotheses being found: 8/1/15 for \NonEntailed/\Just/\Entailed,
and the largest nesting depth used was: 3/1/2 for \NonEntailed/\Just/\Entailed. %
This hints at the fact that longer time limits would not have produced more solutions, and motivates future research into redundancy criteria to stop derivations (much) earlier.

\begin{table}[t]
  \begin{tabular}{r || r || r | r || c | c | c | c}
    & & & & \multicolumn{4}{c}{ median / avg / max} \\
     &\#Probl. & Success & Compl. & \#\Hmc& $|\Hmc|$ & $|\alpha|$ & time (s.)\\ %
    \hline
    \NonEntailed & 1,925 & 94.7\% &  61.3\% &
    1/8.51/1850 &
    1/1.00/2 &
    6/7.48/91 &
    0.2/12.4/43.8%
    \\
    \Just        & 1,803 & 100.0\% & 97.2\% &
    1/1.50/5 &
    1/1/1 &
    2/4.21/32 &
    0.2/1.1/34.1\\
    \Entailed    & 1,805 & 92.9\% & 57.0\% &
    43/228.05/6317 &
    1/1.00/2 &
    5/5.09/49 &
    0.6/13.6/59.9
    \\  %
  \end{tabular}

  \bigskip

  \caption{Evaluation results.}
  \label{tab:results}
\end{table}

\section{Conclusion}

We have introduced connection-minimal TBox abduction for \EL which
finds parsimonious hypotheses,
ruling out the ones that entail the observation in an arbitrary fashion. We have
established a formal link between the generation of connection-minimal
hypotheses in \EL and the generation of prime implicates of a translation $\Phi$
of the problem to first-order logic. In addition to obtaining these theoretical results, we
developed a prototype for the computation of subset-minimal constructible
hypotheses, a subclass of connection-minimal hypotheses that is easy to
construct from the prime implicates of $\Phi$. Our prototype uses the SPASS
theorem prover as an SOS-resolution engine to generate the needed
implicates. We tested this tool on a set of realistic medical ontologies, and
the results indicate that the cost of computing connection-minimal hypotheses is
high but not prohibitive.

We see several ways to improve our technique. The bound we computed to ensure
termination could be advantageously replaced by a redundancy criterion
discarding irrelevant implicates long before it is reached, thus greatly
speeding computation in SPASS.
We believe it should also be possible to further constrain
inferences, e.g., to have them produce ground clauses only, or to generate the prime
implicates with terms of increasing depth in a controlled incremental way
instead of enforcing the soft time limit, but these two ideas remain to be
proved feasible. As an alternative to using prime implicates, one may
investigate direct method for computing connection-minimal hypotheses in \EL.

The theoretical worst-case complexity of connection-minimal abduction is another
open question. Our method only gives a very high upper bound: by
bounding only the nesting dept of Skolem terms polynomially as we did with
Th.~\ref{thm:variable-bound}, we may still permit clauses with exponentially
many literals, and thus double exponentially many clauses in the worst case,
which would give us an \TwoExpTime upper bound to the problem of computing all
subset-minimal constructible hypotheses. Using structure-sharing and guessing,
it is likely possible to get a lower bound. We have not looked yet at lower
bounds for the complexity either.

While this work focuses on abduction problems where the observation is a CI, we believe that our technique can be generalised to knowledge that
also contains ground facts (ABoxes), and to observations that are of the form of
conjunctive queries on the ABoxes in such knowledge bases. The motivation for such an extension
is to understand why a particular query does not return any results, and
to compute a set of TBox axioms that fix this problem. Since our translation
already transforms the observation into ground facts, it should be possible to
extend it to this setting. We would also like to generalize TBox abduction by
finding a reasonable way to allow role restrictions in the hypotheses, and to
extend connection-minimality to more expressive DLs such as \ALC.

\section*{Acknowledgments}
This work was supported by 
the Deutsche Forschungsgemeinschaft (DFG), 
Grant 389792660 within TRR~248.

\bibliographystyle{splncs04}
\bibliography{bib}

\newpage

\appendix

\section{Various Minor Results}

\subsection{\Tmc-homomorphism and Entailment}
\label{app:homo-ent}
\begin{lemma}
  Let $\Tr_1=(V_1,E_1, v_0, l_1)$ and $\Tr_2=(V_2,E_2,w_0,l_2)$ be \EL description trees, with a $\Tmc$-homomorphism $\Phi$ from $\Tr_2$ to $\Tr_1$.
  Then $\Tmc\models C_{\Tr_1}\isa C_{\Tr_2}$.
\end{lemma}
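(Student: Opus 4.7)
The plan is to proceed by structural induction on the description tree $\Tr_2$ (equivalently, on the concept $C_{\Tr_2}$), exploiting the fact that the restriction of a $\Tmc$-homomorphism to a subtree remains a $\Tmc$-homomorphism into an appropriate subtree of $\Tr_1$. Throughout, write $C_{\Tr_1}=\bigsqcap l_1(v_0)\sqcap \bigsqcap_{v_0 r v'\in E_1}\exists r.C_{\Tr_1(v')}$ so that the labels of $v_0$ appear directly as top-level conjuncts of $C_{\Tr_1}$.

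For the base case, assume $w_0$ has no children in $\Tr_2$, so $C_{\Tr_2}=\bigsqcap l_2(w_0)$. Condition~\ref{it:root} of Def.~\ref{def:homomorph} gives $\phi(w_0)=v_0$, and condition~\ref{it:entails} gives $\Tmc\models \bigsqcap l_1(v_0)\isa \bigsqcap l_2(w_0)$. Since $\bigsqcap l_1(v_0)$ is a top-level conjunct of $C_{\Tr_1}$, we get $\Tmc\models C_{\Tr_1}\isa C_{\Tr_2}$.

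For the inductive step, suppose $w_0$ has children $w_1,\ldots,w_n$ via edges $w_0 r_j w_j\in E_2$, so
\[
C_{\Tr_2}=\bigsqcap l_2(w_0)\sqcap \exists r_1.C_{\Tr_2(w_1)}\sqcap\ldots\sqcap \exists r_n.C_{\Tr_2(w_n)}.
\]
Condition~\ref{it:entails} at $w_0$ again yields $\Tmc\models C_{\Tr_1}\isa \bigsqcap l_2(w_0)$. For each $j$, condition~2 gives $v_0 r_j \phi(w_j)\in E_1$, so $\exists r_j.C_{\Tr_1(\phi(w_j))}$ is a top-level conjunct of $C_{\Tr_1}$. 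I then observe that the restriction of $\phi$ to the nodes of $\Tr_2(w_j)$ is a $\Tmc$-homomorphism from $\Tr_2(w_j)$ into $\Tr_1(\phi(w_j))$: it maps the root $w_j$ to $\phi(w_j)$, it preserves edges (by condition~2 applied along every descendant path, combined with the fact that $\Tr_1$ is a tree so $\phi$-images of descendants of $w_j$ must lie below $\phi(w_j)$), and it still satisfies condition~\ref{it:entails} pointwise. The induction hypothesis yields $\Tmc\models C_{\Tr_1(\phi(w_j))}\isa C_{\Tr_2(w_j)}$, and monotonicity of $\exists r_j.\,$ gives $\Tmc\models \exists r_j.C_{\Tr_1(\phi(w_j))}\isa \exists r_j.C_{\Tr_2(w_j)}$. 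Combining these entailments conjunct by conjunct yields $\Tmc\models C_{\Tr_1}\isa C_{\Tr_2}$.

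The only real subtlety, and hence the main thing to be careful about, is the justification that the restricted map remains a $\Tmc$-homomorphism between the subtrees; this hinges on $\Tr_1$ being a tree, which forces images of $w_j$-descendants to stay inside $\Tr_1(\phi(w_j))$. Everything else is a routine unfolding of the recursive definition of $C_\Tr$ together with monotonicity of $\sqcap$ and $\exists r.\,$ in \EL.
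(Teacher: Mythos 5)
Your proof is correct and follows essentially the same route as the paper's: structural induction on $\Tr_2$, using that the restriction of $\phi$ to a subtree $\Tr_2(w_j)$ is a $\Tmc$-homomorphism into $\Tr_1(\phi(w_j))$, then combining condition~\ref{it:entails} at the root with monotonicity of $\exists r.$ and $\sqcap$. You are in fact slightly more careful than the paper in justifying why the restricted map stays within the subtree $\Tr_1(\phi(w_j))$.
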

\begin{proof}
We prove this result by induction on the structure of $\Tr_2$.

If $\Tr_2 = (\{w_0\}, \emptyset,$ $ w_0, l_2)$, then $C_{\Tr_2}= l_2(w_0)$.
Moreover $C_{\Tr_1} \isa l_1(v_0)$ by  definition of $C_{\Tr_1}$.
Finally $\Tmc\models C_{\Tr_1}\isa C_{\Tr_2}$ since $\Tmc \models
\bigsqcap l_1(\phi(w_0))\isa \bigsqcap l_2(w_0)$ and $\phi(w_0)=v_0$.

In the general case, let us consider any child $w_i$ of $w_0$ in $\Tr_2$ since there must be at least one.
Then there is a corresponding child $v_i$ of $v_0$ in $\Tr_1$ s.t.\ $v = \phi(w)$.
The $\Tmc$-homomorphism $\phi$ from $\Tr_2$ to $\Tr_1$ is also a $\Tmc$-homomorphism from $\Tr_2(w)$ to $\Tr_1(v)$, thus by induction $\Tmc\models C_{\Tr_1(v)}\isa C_{\Tr_2(w)}$.
And in particular, for the $r_i$ such that $w_0r_iw\in E_2$, we have $\Tmc\models \exists r_i.C_{\Tr_1(v)}\isa \exists r_i.C_{\Tr_2(w)}$.
This applies to all the children $w_1,\dots,w_n$ of $w_0$, and since $\Tmc \models \bigsqcap l_1(v_0)\isa \bigsqcap l_2(w_0)$, it follows that $\Tmc\models \bigsqcap C_{\Tr_1}\isa C_{\Tr_2}$.
\end{proof}

\subsection{Straightforward Consequences of Definition \ref{def:abd}}
\label{app:straight_cons}
Point \ref{it:H} of Def.~\ref{def:abd} turns $\phi$ from a weak homomorphism to a \Tmc-homomorphism.
The hypothesis \Hmc\ is made to add exactly the entailments that are missing in \Tmc\ to ensure that point \ref{it:entails} of Def.~\ref{def:homomorph} is satisfied.
Thanks to this $\Tmc\cup\Hmc\models D_1\isa D_2$ and thus both $D_1$ and $D_2$ become connecting concepts from $C_1$ to $C_2$ in $\Tmc\cup\Hmc$, because we have $\Tmc\cup\Hmc\models C_1\isa D_1\isa D_2\isa C_2$.

\subsection{Normalization}
\label{app:preliminaries}
\LemNormalize*
\begin{proof}
    Most of the lemma is well-known. How normalization can be performed, and
that it is possible in polynomial time, is shown in
~\cite[Lemma 6.2]{DL_TEXTBOOK}. Furthermore,
by~\cite[Proposition~6.5]{DL_TEXTBOOK}, the result of this transformation is a
conservative extension $\Tmc'$ of the original TBox $\Tmc$ in the sense that:
\begin{enumerate}
 \item $\Tmc'\models\Tmc$, and
 \item for every model $\Imc$ of $\Tmc$, there exists a model $\Imc'$ of $\Tmc'$
s.t. for every concept name $A$ occurring in $\Tmc$, $A^\Imc=A^{\Imc'}$, and
for every role name $r\in\NR$ occurring in $\Tmc$, $r^\Imc=r^{\Imc'}$.
\end{enumerate}

Now let $\Hmc$ be a TBox and $C\sqsubseteq D$ a CI such that both only use
names occurring in $\Tmc$. If $\Tmc\cup\Hmc\models C\sqsubseteq D$, we observe
that by Item~1, we have $\Tmc'\cup\Hmc\models\Tmc\cup\Hmc$, and thus by
transitivity of entailment, $\Tmc'\cup\Hmc\models C\sqsubseteq D$. Assume
$\Tmc\cup\Hmc\not\models C\sqsubseteq D$. Then there exists a model $\Imc$ of
$\Tmc\cup\Hmc$ s.t. $\Imc\not\models C\sqsubseteq D$. Since $\Hmc$, $C$ and $D$
only use names occurring in $\Tmc$, by Item~2, we can find a model $\Imc'$ of
$\Tmc$ s.t. $\Imc'\models\Hmc$ and $\Imc'\not\models C\sqsubseteq D$, and
consequently, $\Tmc'\cup\Hmc\not\models C\sqsubseteq D$. We obtain that
$\Tmc\cup\Hmc\models C\sqsubseteq D$ iff $\Tmc'\cup\Hmc\models C\sqsubseteq D$.
\qed
\end{proof}

\section{Detailed Example}
\label{app:trans}

Consider the abduction problem $\langle \Tmc,\Sigma, C_1\isa C_2 \rangle$ where
\begin{eqnarray*}
  \Tmc &=& \{\,C_1\isa H,\, C_1\isa L,\, C_1\isa\exists r_1. A,\, A\isa\exists r_2. M,\, A\isa\exists r_2.B,\\
       && \;\; \exists r_1.X\sqcap E\isa C_2,\, F\sqcap Y\isa X,\, \exists r_2. M\sqcap\exists r_2. Z\isa Y,\, G\sqcap H\isa Z\,\}
\end{eqnarray*}
and $\Sigma = \{A, B, D, E, F, G, H, L\}$.
Consider the concepts
\begin{align*}
  \hspace*{2cm}&D_1=&L\dlAnd H&\dlAnd\exists {r_1}.({A}\dlAnd\exists r_2.M\dlAnd\exists r_2.B),&\hspace*{1.5cm}\\
               &D_2=&E&\dlAnd\exists r_1.(F\dlAnd\exists r_2.M\dlAnd\exists r_2.(G\dlAnd K)),&
\end{align*}

Indeed, the concepts $D_1$ and $D_2$ are such that $\Tmc\models C_1\isa D_1$ and $\Tmc\models D_2\isa C_2$.
Moreover any $D_2'\smand D_2$ is not a subsumer of $C_2$ so $D_2$ is a \smand-minimal concept such that $\Tmc\models D_2\isa C_2$.
There is also a weak homomorphism from $\Tr_{D_2}$ to $\Tr_{D_1}$, as illustrated in Fig.\ \ref{fig:oldhomomorph}.
Thus,
$$\Hmc = \{L\sqcap H\isa E, A\isa F, B\isa G\sqcap K\}$$ is a connection-minimal hypothesis.
Note that the tautology $M\isa M$, that is one of the entailments that must hold in $\Tmc\cup\Hmc$, as is visible in Fig.\ \ref{fig:oldhomomorph}, is not included in \Hmc since it is a tautology and thus $\Tmc\models M\isa M$.  
The hypothesis \Hmc is even packed.
In contrast,
\begin{eqnarray*}
  \Hmc_1 &=& \{H\isa E,\, A\isa F,\, B\isa G\sqcap H\},\text{ and}\\
  \Hmc_2 &=& \{L\isa E,\, A\isa F,\, B\isa G\sqcap H\},
\end{eqnarray*}
that are both connection-minimal but lack either $L$ or $H$ on the left-hand side of their first CI when compared with \Hmc, are not packed.

  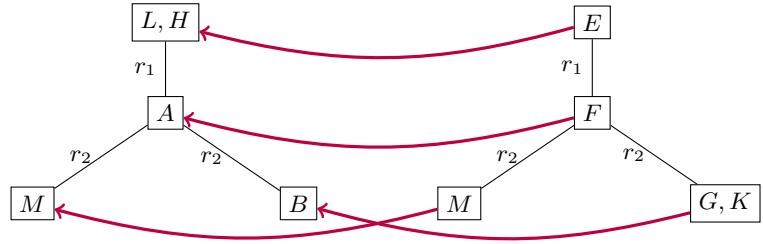
\begin{figure}[b]
    \centering
    \begin{tikzpicture}[level distance=1.2cm,
			level 1/.style={sibling distance=11cm},
			level 2/.style={sibling distance=3.5cm}]
			
			\node{}
			child{edge from parent[draw=none] node(v0)[draw] {$L,H$}
				child { node(v1)[draw] {{$A$} }  
					child { node(v2)[draw] {{$M$}}edge from parent node[left] {{$r_2$}}
					}
					child { node(v3)[draw] {{$B$}}edge from parent node[left] {{$r_2$}}
					}
					edge from parent node[left] {{$r_1$}}
				}
			}
			child{edge from parent[draw=none] node(w0)[draw] {$E$}
				child { node(w1)[draw] {$F$ }  
					child { node(w2)[draw] {$M$}edge from parent node[left] {$r_2$}
					}
					child { node(w3)[draw] {$G,K$}edge from parent node[left] {$r_2$}
					}
					edge from parent node[left] {$r_1$}
				}
			};
			\draw[color=purple,->,very thick] (w0) to [bend left=15] (v0);
			\draw[color=purple,->,very thick] (w1) to [bend left=15] (v1);
			\draw[color=purple,->,very thick] (w2) to [bend left=15] (v2);
			\draw[color=purple,->,very thick] (w3) to [bend left=15] (v3);
    \end{tikzpicture}
    \caption{Two description trees with a weak homomorphism between them.}
    \label{fig:oldhomomorph}
  \end{figure}

We apply our technique to compute the hypotheses of this abduction problem.
Since $\Tmc$ is not in normal form, we must normalize it before the translation.
The CIs to normalize are $\exists r_1.X\sqcap E\isa C_2$ and $\exists r_2. M\sqcap\exists r_2. Z\isa Y$ for which we introduce the fresh concepts $U$, $V$ and $W$ and corresponding CIs $\exists r_2.M\isa U$, $\exists r_2.Z\isa V$ and $\exists r_1.X\isa W$, along with the normalized form of the two initial CIs, i.e., $W\sqcap E\isa C_2$ and $U\sqcap V\isa Y$.

Since $\Tmc$ does not contain $\top$, there is no need to introduce $A_\top$.
We write the CIs in the normalization of $\Tmc$ and their first-order translation after Skolemization side by side.
$$
\begin{array}{r@{\qquad}l}
C_1 \isa  H & \neg C_1(x)\vee H(x)\\[.4em]
C_1 \isa L & \neg C_1(x)\vee L(x)\\[.4em]
C_1\isa\exists r_1. A &\left\{\begin{array}{l}
 \neg C_1(x)\vee r_1(x,\sk_1(x))\\
 \neg C_1(x)\vee A(\sk_1(x))\\
 \end{array}\right.\\[1em]
A\isa\exists r_2. M&\left\{\begin{array}{l}
 \neg A(x)\vee r_2(x,\sk_2(x))\\
 \neg A(x)\vee M(\sk_2(x))\\
 \end{array}\right.\\[1em]
A\isa\exists r_2.B&\left\{\begin{array}{l}
 \neg A(x)\vee r_2(x,\sk_2'(x))\\
 \neg A(x)\vee B(\sk_2'(x))\\
 \end{array}\right.\\[1em]
\exists r_1.X\isa W & \neg r_1(x,y)\vee\neg X(y)\vee W(x)\\[.4em]
W\sqcap E\isa C_2 & \neg W(x)\vee\neg E(x)\vee C_2(x)\\[.4em]
F\sqcap Y\isa X & \neg F(x)\vee\neg Y(x)\vee X(x)\\[.4em]
\exists r_2. M\isa U & \neg r_2(x,y)\vee\neg M(y)\vee U(x)\\[.4em]
\exists r_2. Z\isa V & \neg r_2(x,y)\vee\neg Z(y)\vee V(x)\\[.4em]
U\sqcap V\isa Y & \neg U(x)\vee\neg V(x)\vee Y(x)\\[.4em]
G\sqcap H\isa Z & \neg G(x)\vee\neg H(x)\vee Z(x)
\end{array}
$$
 The translation of $\negc{\Tmc}$ is identical to that of $\Tmc$ up to the replacement of every unary predicate with its duplicate and the introduction of fresh Skolem functions distinct from the ones used for \Tmc. 
 Let $\Phi$ denote the full translation of the problem.
 The ground prime implicates for $\Sigma=\{A,B,E,F,G,H,L,M\}$ are as follows:
  \begin{eqnarray*}
    \PIp &=& \{\, L(\skz),\, H(\skz),\, A(\sk_1(\skz)),\, M(\sk_2(\sk_1(\skz))),\, B(\sk_2'(\sk_1(\skz)))\}\\
    \PIn &=& \{\,\neg\negc{E}(\skz)\vee\neg\negc{F}(\sk_1(\skz))\vee\neg\negc{M}(\sk_2(\sk_1(\skz)))\vee\\
    && \qquad \qquad \qquad \qquad \qquad \neg\negc{G}(\sk_2'(\sk_1(\skz)))\vee\neg\negc{H}(\sk_2'(\sk_1(\skz)))\}
  \end{eqnarray*}
  where $\sk_1$ is the Skolem function corresponding to the existential quantifier introduced by the translation of $\exists r_1.X$ to first-order logic, and where $\sk_2$ and $\sk_2'$ correspond respectively to $\exists r_2. M$ and $\exists r_2. Z$.
  The only constructible hypothesis out of this configuration is \Hmc, the packed connection-minimal hypothesis already introduced.
    Finally, $\Hmc_3$ is the smallest TBox that fixes all entailments missing between $D_1$ and $D_2$, ensuring the connection minimality of $\Hmc_3$ and it is packed, contrarily to $\Hmc_1$ and $\Hmc_2$ that lack either $L$ or $H$ on the left-hand side of their first CI.
  Note that the signature restriction has been made to capture only these solutions, but there would be many more if we considered the whole signature after normalization for $\Sigma$.
  In particular, including $A_\top$ to $\Sigma$ would produce all solutions where $\top$ replaces the left-hand side of some CIs in another solution, so it should generally be avoided.

\section{Construction}
\label{app:cons}
We recall the statement of the main theorem that we prove in this appendix:

\Construction*

Until the end of this appendix, we assume $\Sigma=\NC$.
For Th.\ \ref{corr:constr}, the case where $\Sigma\subsetneq\NC$ trivially follows, but that is not the case for the intermediate results.

   To prove Th.\ \ref{corr:constr}, we first establish the link between the positive prime implicates in $\PIp$ and the subsumers of $C_1$, then we do the same for the negative side.
First, we adapt the notion of a canonical model by Baader et al.\ \cite{baader1999computing}, to construct a minimal Herbrand Interpretation $\CM$ ensuring that, for a given concept $C$, at least one Skolem term $t$ is such that $t\in C^\CM$.
We show how to extend a canonical model so that it also satisfies $\Tmc$ and we link the existence of such a model built for some $C_\Tr$ and $\skz$ to the existence of the entailment $\Tmc\models C_1\isa C_\Tr$, while showing that this model is in fact a subset of \PIp.

  Second, we show how renamed negative ground implicates, not necessarily prime, relate to the subsumees of $C_2$.
  To that aim, we again rely on a canonical model, but this time for the renamed version of some $C_\Tr$ subsumee of $C_2$, with the restriction that
there must exist a weak homomorphism from a subsumer of $C_1$ to
  this $C_\Tr$, %
  the idea being that $\Hmc$ is built to provide the missing CIs that will turn the weak homomorphism into a $(\Tmc\cup\Hmc)$-homomorphism.

  Finally, we establish an equivalence between the \smand-minimality of $C_\Tr$ and the fact that the corresponding renamed negative implicate is prime.

   Before diving into the proofs, remember that we work under the assumptions that $\Tmc$ is in normal form and without $\top$, and as a consequence $\pi(\Tmc)$ contains only axioms of the following shapes:
   \begin{align*}
                                            &\forall x.\neg A(x)\lor B(x),\\
                                            &\forall x.\neg A_1(x)\lor \neg A_2(x)\lor B(x),\\
                                            &\forall x.\neg(\exists y.r(x,y)\land A(y))\lor B(x),\\
                                            &\forall x.\neg A(x)\lor\exists y.(r(x,y) \land B(y)).
   \end{align*}
   After Skolemization, the clauses are all Horn.

   We provide a direct specification of \EL semantics, that we work with in the proofs.
	It uses first-order structures or \emph{interpretations}, which are tuples
	$\Imc=\tup{\Delta^\Imc,\cdot^\Imc}$ made of a \emph{domain}
	$\Delta^\Imc$ and an interpretation function $\cdot^\Imc$ that maps atomic concepts
	$A\in\NC$ to sets $A^\Imc\subseteq\Delta^\Imc$ and roles $r\in\NR$ to relations
	$r^\Imc\subseteq\Delta^\Imc\times\Delta^\Imc$. The interpretation function $\cdot^\Imc$ is extended to complex
	concepts as follows:
	\begin{gather*}
	\top^\Imc=\Delta^\Imc \qquad (C\sqcap D)^\Imc=C^\Imc\cap D^\Imc\\
	(\exists r.C)^\Imc=\{d\in\Delta^\Imc\mid\exists\tup{d,e}\in r^\Imc\text{ s.t. } e\in C^\Imc \}
	\end{gather*}
	The interpretation $\Imc$ \emph{satisfies} a CI $C\isa D$, in symbols $\Imc\models C\isa D$, if
	$C^\Imc\subseteq D^\Imc$.
  If $\Imc$ satisfies all axioms in a TBox $\Tmc$,
	we write $\Imc\models\Tmc$ and call $\Imc$ a \emph{model} of $\Tmc$. If a CI $C\isa D$ is satisfied in every model
	of $\Tmc$, we write $\Tmc\models C\isa D$ and say that $C\isa D$ is \emph{entailed} by~$\Tmc$. In this case, we
	say that \emph{$D$ subsumes $C$}, or that \emph{$C$ is subsumed by $D$} and call $C$ a \emph{subsumee} of $D$ and $D$ a \emph{subsumer} of $C$.
	One easily verifies that the above
	translation of \EL axioms and TBoxes is consistent with their semantics, that is, that $\Imc\models\pi(\alpha)$ iff
	$\Imc\models\alpha$ for any CI $\alpha$ (or TBox)~\cite{Borgida94,BaaderEtAl2003handbook}.

  Let \NS denote the set of all monadic Skolem functions that are used to Skolemize the translation of an abduction problem to first-order logic.
  We call an interpretation $\Imc$ with $\Delta^\Imc=\gterms{\skz}$, where $\gterms{\skz}$ is the set of terms built on the constant \skz\ and functions from \NS, a \emph{Herbrand intepretation}, which for convenience, we identify with the set of ground atoms that are satisfied in it.
  Specifically, for a Herbrand interpretation $\Imc$, we write $A(t)\in\Imc$ if  $t\in A^\Imc$, and $r(t,t')\in\Imc$ if $(t,t')\in r^\Imc$.

  \subsection{Subsumers of $C_1$ and Positive Prime Implicates}
  
	We derive a relation between the subsumers of $C_1$ in $\Tmc$ and the prime implicates of $\Phi = \Pi(\Tmc,C_1\isa C_2)$. %
  This relation is established at the semantics level, by constructing a Herbrand model of $\Tmc$ and showing it necessarily contains the prime implicates of $\Phi$.

  For this purpose, we adapt the definition of a canonical model from \cite{baader1999computing} by using $\gterms{\skz}$ for the domain of the \EL-description tree $\Tr$ corresponding to a subsumer of $C_1$.\footnote{In \cite{baader1999computing}, the canonical interpretation uses the set of vertices as its domain.}
	\begin{definition}[Canonical Model]
    Given a description tree $\Tr=(V,E, v_0, l)$, a \emph{Skolem labeling} $sl_\Tr:V\rightarrow\gterms{\skz}$ of $\Tr$ maps the vertices of $\Tr$ to ground Skolem terms. A \emph{canonical model} $\CM(sl_\Tr)$ of $\Tr$ is a Herbrand interpretation consisting of the following atoms:
		\begin{itemize}
			\item $r(sl_\Tr(v),sl_\Tr(w))$ for all $vrw\in E$
			\item $A(sl_\Tr(v))$ for all $A\in l(v)$ and $v\in V$
		\end{itemize}
	\end{definition}
  We denote by $\CMA(sl_\Tr)$ the subset of $\CM(sl_\Tr)$ made of all atoms built over unary predicates, and by $\CMr(sl_\Tr)$, the rest of $\CM(sl_\Tr)$, that contains all atoms built over binary predicates.

  It is always possible to find a canonical model of an \EL-description tree $\Tr$ as a subset in any Herbrand interpretation $\Imc$ for which $(C_\Tr)^\Imc$ is not empty.
  This is formally stated, and proven, in the following lemma.
  
	\begin{lemma}%
		\label{lemma:ItoIc}
		Given an \EL-description tree $\Tr=(V,E, v_0, l)$ and a Herbrand interpretation $\Imc$,
		if $t\in (C_{\Tr})^{\Imc}$ then there exists a Skolem labeling $sl_{\Tr}$ s.t. $sl_{\Tr}(v_0)=t$ and $\CM(sl_{\Tr})\subseteq\Imc$
	\end{lemma}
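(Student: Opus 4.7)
The plan is to prove the lemma by structural induction on the description tree $\Tr$, building the Skolem labeling recursively from the root downwards while picking witnesses from $\Imc$ at each existential step.

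For the base case, suppose $\Tr$ consists of a single node $v_0$ with $l(v_0)=\{A_1,\dots,A_k\}$. Then $C_\Tr = A_1 \sqcap \dots \sqcap A_k$, so $t \in (C_\Tr)^\Imc$ means $A_i(t) \in \Imc$ for every $i$. I define $sl_\Tr(v_0) := t$. The canonical model $\CM(sl_\Tr)$ then consists exactly of the atoms $A_i(t)$ (no edges, since $E=\emptyset$), and all of these lie in $\Imc$ by assumption.

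For the inductive step, assume $v_0$ has children $v_1,\dots,v_n$ with edges $v_0 r_i v_i \in E$, and that the claim holds for each subtree $\Tr(v_i)$. Then
\[
C_\Tr = \bigsqcap l(v_0) \sqcap \exists r_1.C_{\Tr(v_1)} \sqcap \dots \sqcap \exists r_n.C_{\Tr(v_n)}.
\]
From $t \in (C_\Tr)^\Imc$, I get first that $A(t) \in \Imc$ for every $A \in l(v_0)$, and second, for each $i$, a term $t_i \in \gterms{\skz}$ with $r_i(t,t_i) \in \Imc$ and $t_i \in (C_{\Tr(v_i)})^\Imc$. Applying the induction hypothesis to each $\Tr(v_i)$ with the witness $t_i$ yields a Skolem labeling $sl_{\Tr(v_i)}$ of $\Tr(v_i)$ such that $sl_{\Tr(v_i)}(v_i)=t_i$ and $\CM(sl_{\Tr(v_i)}) \subseteq \Imc$. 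Since the vertex sets of the subtrees are pairwise disjoint and disjoint from $\{v_0\}$, I can glue these labelings together and set $sl_\Tr(v_0) := t$ to obtain a total Skolem labeling $sl_\Tr$ of $\Tr$.

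It remains to check that $\CM(sl_\Tr) \subseteq \Imc$. The unary atoms of $\CM(sl_\Tr)$ decompose into the atoms $A(t)$ for $A \in l(v_0)$, which lie in $\Imc$ by the first part, plus the unary atoms in $\bigcup_i \CM(sl_{\Tr(v_i)})$, which lie in $\Imc$ by induction. The binary atoms decompose into the atoms $r_i(t,t_i)$ (since $sl_\Tr(v_0)=t$ and $sl_\Tr(v_i)=t_i$), which lie in $\Imc$ by choice of the $t_i$, plus the binary atoms in $\bigcup_i \CM(sl_{\Tr(v_i)})$, again covered by induction. No genuine obstacle arises; the only point requiring care is the disjointness of vertex sets of the subtrees, which is guaranteed because $\Tr$ is a tree, so the gluing step is well-defined.
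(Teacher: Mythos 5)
Your proof is correct and follows essentially the same strategy as the paper's: induction on the structure (depth) of $\Tr$, choosing for each edge $v_0 r_i v_i$ a witness $t_i$ with $r_i(t,t_i)\in\Imc$ and $t_i\in (C_{\Tr(v_i)})^\Imc$ from the semantics of the existential restriction, and gluing the recursively obtained labelings of the disjoint subtrees. No gaps.
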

	\begin{proof}
    Given an \EL-description tree $\Tr=(V,E, v_0, l)$, a Herbrand interpretation $\Imc$ and a Skolem term $t$, such that $t\in (C_{\Tr})^{\Imc}$, let us construct the suitable Skolem labeling $sl_\Tr$.
		We proceed inductively on the depth of $\Tr$. %
    $$ sl_\Tr(v)=
    \left\{
      \begin{array}{ll}
        t & \text{if }v = v_0,\\
        sl_{\Tr(w)}(v) & \text{if }v_0rw\in E\text{ for some }r,\text{ and }v\in V_{\Tr(w)},
      \end{array}
    \right.
    $$
    where $\Tr(w)$ is the subtree of $\Tr$ rooted in $w$, $V_{\Tr(w)}$ is the subset of $V$ that occurs in $\Tr(w)$ and $sl_{\Tr(w)}$ is defined as $sl_\Tr$ but on $t'$ instead of $t$, for a $t'$ such that $r(t,t')\in\Imc$ and $t'\in (C_{\Tr(w)})^\Imc$.
    Such a $t'$ must exist because $\exists r.C_{\Tr(w)}$ is a conjunct in $C_\Tr$ and $t\in(C_\Tr)^\Imc$.
    Hence $sl_{\Tr(w)}$ is well-defined.
    This construction terminates because the depth of all $\Tr(w)$ is strictly smaller than that of $\Tr$.

    If $\Tr$ is of depth 0, then $sl_\Tr$ is simply defined on $v_0$ such that $sl_{\Tr}(v_0)=t$, and $C_\Tr$ is a conjunction of atomic concepts $A\in l(v_0)$.
    Thus, $t\in (C_{\Tr})^{\Imc}$ is equivalent to $A(t)\in \Imc$ for all $A\in l(v_0)$.
    Hence, any atom $A(sl_\Tr(v_0))=A(t)\in \CM(sl_\Tr)$ is also in $\Imc$ for all $A\in l(v_0)$.

    If $\Tr$ is of depth $i>0$, for any $v\in V\setminus\{v_0\}$, there exists a $w\in\V$ such that $v_0rw\in E$ and $v\in V_{\Tr(w)}$, i.e., $v$ must belong to a subtree rooted in one of the children $w$ of the root of $\Tr$.
    Then $sl_\Tr(v)=sl_{\Tr(w)}(v)$.
    By induction, $\CM(sl_{\Tr(w)})\subseteq\Imc$.
    Moreover $A(t)\in\Imc$ for all $A\in l(v_0)$ as in the base case; and $r(t,sl_{\Tr(w)}(w))\in \Imc$ and $sl_{\Tr(w)}(w)\in (C_{\Tr(w)})^\Imc$ for all $w$ children of $v_0$ by construction of $sl_{\Tr(w)}$.
    Thus $\Imc(sl_\Tr)\subseteq\Imc$.
    \qedhere
	\end{proof}

  Now, we show that $\PIp$ holds the role of universal Herbrand model for $\Phi=\Pi(\Tmc, C_1\isa C_2)$.
  The proof adapts a result by Bienvenu et al. \cite{DBLP:conf/rweb/BienvenuO15} to the case with only one constant but a possibly infinite domain.

	\begin{lemma}[$\PIp$ as a universal model]
    \label{lem:pipmodel}
		Given the translation $\Phi$ of an abduction problem, the set $\PIp$ considered as a Herbrand interpretation is a model of $\Phi$ and for any other Herbrand model $\Imc$ of $\Phi$, $\PIp\subseteq \Imc$.
	\end{lemma}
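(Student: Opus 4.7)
The plan is to establish that $\PIp$ coincides with the least Herbrand model of $\Phi$, which exists because $\Phi$ is a Horn theory. The argument breaks into three conceptually separate parts: identifying $\PIp$ with the set of entailed positive ground atoms, verifying that this set is itself a model, and deriving universality.

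\textbf{Step 1: $\Phi$ is Horn and positive ground prime implicates are unit.} Inspecting the translation (listed right before the canonical model definition), every clause in $\pi(\Tmc \uplus \negc{\Tmc})$ has at most one positive literal after Skolemization, and the negated observation contributes $C_1(\skz)$ and $\neg\negc{C_2}(\skz)$. Hence $\Phi$ is Horn. I would then show that any positive ground prime implicate must be a single literal: if $\varphi = L_1 \vee \dots \vee L_k$ with $k\geq 2$ were a positive ground implicate, the least-model property of Horn theories forces $\Phi \models L_i$ for some $i$, so $L_i \models \varphi$ while $\varphi \not\models L_i$, contradicting primality. Combined with the assumption $\Sigma = \NC$ made at the start of the appendix, this gives $\PIp = \{L \mid L \text{ is a positive ground atom with } \Phi \models L\}$.

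\textbf{Step 2: $\PIp$ is a model.} Take any clause $C \in \Phi$ and any ground instance $C\sigma$. If some negative literal $\neg A(\sigma(t))$ of $C\sigma$ is not satisfied in $\PIp$ we are done; otherwise every atom $A_i(\sigma(t_i))$ corresponding to a negative literal of $C\sigma$ lies in $\PIp$, hence $\Phi$ entails each of them. Since $C$ is Horn with a positive head $H$, combining these entailments with $C$ gives $\Phi \models H\sigma$; as $H\sigma$ is a positive ground atom this means $H\sigma \in \PIp$ by Step~1, so $\PIp \models C\sigma$. This verifies $\PIp \models \Phi$.

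\textbf{Step 3: Universality.} Let $\Imc$ be any Herbrand model of $\Phi$ and let $L \in \PIp$. By Step~1, $\Phi \models L$, and since $\Imc \models \Phi$, we get $\Imc \models L$, i.e.\ $L \in \Imc$. Hence $\PIp \subseteq \Imc$.

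The main obstacle is Step~1, specifically justifying that positive ground prime implicates of a Horn theory are unit. I would either invoke the standard disjunctive-fact property of Horn theories (via the least Herbrand model construction over the term universe $\gterms{\skz}$) or give a direct resolution-based argument: a positive ground clause is derivable from a Horn set only if it is subsumed by a single-literal resolvent, since positive literals are never combined by Horn resolution. The rest of the argument is routine once this clean characterisation of $\PIp$ is in hand.
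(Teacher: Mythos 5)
Your proposal is correct in outline but takes a genuinely different route from the paper's. The paper establishes that $\PIp$ is a model by explicitly constructing the least Herbrand model as a union $\bigcup_i\Imc_i$ of approximations starting from $\{C_1(\skz)\}$ and closing under the (original, non-duplicate) clauses of $\Phi$; it checks that this fixpoint satisfies $\Phi$ (the duplicate clauses being satisfied vacuously because the construction never adds a duplicate atom), and then shows by induction that every atom of the fixpoint is derivable by resolution and hence is an atomic, therefore prime, implicate. You instead characterise $\PIp$ abstractly as the set of all entailed positive ground atoms, via the disjunctive-fact property of consistent Horn theories, and verify the model property by checking closure of this set under each Horn clause. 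Your version is shorter and avoids the explicit fixpoint; the paper's constructive description of $\PIp$ is, however, reused verbatim later (e.g., in the proof of the lemma on the construction of subsumers of $C_1$), which is presumably why the authors chose it.

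There is one step that fails as written. In Step~2 you assume every clause of $\Phi$ "is Horn with a positive head $H$", but $\Phi$ contains exactly one purely negative clause, $\neg\negc{C_2}(\skz)$, for which the "otherwise" branch of your case split is meaningless. You must argue separately that $\negc{C_2}(\skz)\notin\PIp$: this follows from the consistency of $\Phi$, since otherwise $\Phi$ would entail both $\negc{C_2}(\skz)$ and its negation. Consistency of $\Phi$ holds because $\Tmc\not\models C_1\isa C_2$ is a precondition of the abduction problem (and the renaming keeps $C_1$ and $\negc{C_2}$ independent), but you never state it, even though Step~1 also depends on it twice: the least-model property requires a consistent Horn theory, and the claim that every entailed ground atom is a \emph{prime} implicate needs the empty clause not to be an implicate. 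Make the consistency of $\Phi$ explicit once and handle the negative clause, and the proof goes through.
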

	\begin{proof}
    By the definition of a prime implicate, any model of $\Phi$ must be a model of any $\varphi\in\PIp$.
    Moreover, a positive prime implicate can only be an atom since $\Phi$ contains only Horn clauses.
    Thus all Herbrand models must contain $\PIp$.

    To show that $\PIp$ is itself a Herbrand model, we construct the Herbrand Interpretation $\Imc = \bigcup_i \Imc_i$ for $i\in\mathbb{N}$ where:

    \begin{itemize}
    \item $\Imc_0=\{C_1(\skz)\}$ and,
    \item given $\Imc_j$,
      \begin{eqnarray*}
        \Imc_{j+1} &=& \Imc_j \cup \{B(t)\mid t\in (D)^{\Imc_j},\, \neg\pi(D,x)\vee B(x)\in\Phi\}\\
                   & &  \phantom{\Imc_j}\cup \{B(\sk(t)),r(t,\sk(t))\mid t\in (A)^{\Imc_j},\\
                   & & \qquad\quad\neg\pi(A,x)\vee B(\sk(x))\in \Phi,\, \neg\pi(A,x)\vee r(x,\sk(x))\in \Phi\}\\
                   & &  \phantom{\Imc_j}\cup \{B(t)\mid \sk(t)\in (A)^{\Imc_j},\, (t,\sk(t))\in r^{\Imc_j},\\
                   & & \qquad\quad\neg r(x,y)\vee\neg A(y)\vee B(x)\in\Phi\}.\\
      \end{eqnarray*}
    \end{itemize}

    We show that $\Imc$ is a model of $\Phi$.
    We know that $\Imc\models C_1(\skz)$ by construction of $\Imc_0$, and that all other clauses containing only non-duplicated literals are also satisfied by $\Imc$, again by construction.
    Note that there are cases where no $\Imc_j$ alone is enough to satisfies a clause, but they are all satisfied at the limit by $\Imc$ (e.g., if $\Tmc$ includes a concept inclusion $A\isa\exists r. A$, possibly leading to the presence of infinitely many atoms of the form $A(sk^n(t))\in\Imc$).
    Regarding the remaining clauses in $\Phi$, they all contain at least one literal of the form $\neg \negc{A}(x)$ and since $\Imc$ includes no atom $\negc{A}(t)$ at all, $\Imc$ also satisfies that part, thus $\Imc$ is a model of $\Phi.$

    It remains only to show that $\Imc\subseteq \PIp$ to have $\PIp = \Imc$, thus showing that $\PIp$ is a model of $\Phi$.
    This is done by induction.
    Clearly $\Imc_0\subseteq\PIp$, and, assuming $\Imc_j\subseteq\PIp$ for some $j\geq 0$, then any atom in $\Imc_j$ can be derived by resolution from $\Phi$, thus, by construction, any atom in $\Imc_{j+1}\setminus\Imc_j$ can be derived from $\Phi$ by one additional resolution step, making them implicates of $\Phi$.
    Because they are atoms, they must be prime implicates, thus $\Imc_{j+1}\subseteq\PIp$, completing the induction.
    \qedhere
	\end{proof}
  Note that if $\Phi$ was a set of \emph{definite} Horn clauses, the above result would be immediate because it is well-known in logic programming \cite{Lloyd87}.
  The presence of the negative clause $\neg\negc{C_2}(\skz)\in\Phi$ is what justifies the existence of the current proof. 

  Now that $\PIp$ has been established as the universal Herbrand model of $\Phi$, the atoms it contains can be used to reconstruct concepts subsuming $C_1$ by means of a canonical model.
	\begin{lemma}[Canonical Model and $\PIp$]
    \label{lem:gcitopi}
		Given an abduction problem \mbox{$\langle\Tmc,\mathcal{H},C_1\isa C_2\rangle$}, its first-order translation $\Phi$ and an \EL-description tree $\Tr=(V,E,v_0,l)$, the entailment $\Tmc\models C_1\isa C_{\Tr}$ holds if and only if
		there exists a Skolem labeling $sl_{\Tr}$ such that $sl_{\Tr}(v_0)=\skz$ and $\CM(sl_{\Tr})\subseteq\PIp$.
	\end{lemma}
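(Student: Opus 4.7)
The lemma is an equivalence, so I would prove both directions separately, using Lemmas~\ref{lemma:ItoIc} and~\ref{lem:pipmodel} as the main tools.

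For the forward direction, assume $\Tmc\models C_1\isa C_\Tr$. By Lemma~\ref{lem:pipmodel}, $\PIp$, viewed as a Herbrand interpretation, is a model of $\Phi$; in particular $\PIp\models\pi(\Tmc)$ and $\PIp\models C_1(\skz)$ since $C_1(\skz)\in\Phi$. Translating the assumed entailment to first-order logic gives $\pi(\Tmc)\models\forall x.(C_1(x)\to\pi(C_\Tr,x))$, so $\PIp\models\pi(C_\Tr,\skz)$, which by the semantic consistency of the translation means $\skz\in(C_\Tr)^{\PIp}$. Applying Lemma~\ref{lemma:ItoIc} with $\Imc=\PIp$ and $t=\skz$ yields a Skolem labeling $sl_\Tr$ with $sl_\Tr(v_0)=\skz$ and $\CM(sl_\Tr)\subseteq\PIp$, as required.

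For the backward direction, assume such an $sl_\Tr$ exists. Since every atom $\alpha\in\PIp$ is a ground positive implicate of $\Phi$, we have $\Phi\models\alpha$; in particular $\Phi\models\beta$ for every $\beta\in\CM(sl_\Tr)$. Because the duplicate predicates $\negc{A}$ occur in $\Phi$ only through the single ground literal $\neg\negc{C_2}(\skz)$ and not at all in $\pi(\Tmc)$ or any $\beta\in\CM(sl_\Tr)$, we may actually strengthen this to $\pi(\Tmc)\cup\{C_1(\skz)\}\models\beta$ for every $\beta\in\CM(sl_\Tr)$. I would then prove by structural induction on $\Tr$ the following claim: for every subtree $\Tr(v)=(V',E',v,l{\restriction}V')$ of $\Tr$ and every model $\Imc$ of $\pi(\Tmc)\cup\{C_1(\skz)\}$, $sl_\Tr(v)^\Imc\in(C_{\Tr(v)})^\Imc$. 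The base case handles leaves $v$ where $C_{\Tr(v)}=\bigsqcap l(v)$: for each $A\in l(v)$ we have $A(sl_\Tr(v))\in\CM(sl_\Tr)$, hence $\Imc\models A(sl_\Tr(v))$. For the inductive step, $C_{\Tr(v)}=\bigsqcap l(v)\sqcap\bigsqcap_i\exists r_i.C_{\Tr(w_i)}$ over the children $vr_iw_i\in E$; the labels give atomic conjuncts as above, and for each $r_i$-successor the atom $r_i(sl_\Tr(v),sl_\Tr(w_i))\in\CM(sl_\Tr)$ provides a witness in $\Imc$, while the induction hypothesis gives $sl_\Tr(w_i)^\Imc\in(C_{\Tr(w_i)})^\Imc$. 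Applying the claim to $v_0$ with $sl_\Tr(v_0)=\skz$ yields $\Imc\models\pi(C_\Tr,\skz)$ for every model of $\pi(\Tmc)\cup\{C_1(\skz)\}$, and since $\skz$ occurs neither in $\pi(\Tmc)$ nor in $C_\Tr$, this is equivalent to $\Tmc\models C_1\isa C_\Tr$.

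The main obstacle is the backward direction, specifically the need to bridge the syntactic fact that the Skolem-term atoms of $\CM(sl_\Tr)$ lie in $\PIp$ with the semantic claim that $C_1\isa C_\Tr$ holds in every model of $\Tmc$. The key insight that makes this go through cleanly is that $\PIp$-membership upgrades to entailment by $\pi(\Tmc)\cup\{C_1(\skz)\}$ alone (because duplicate predicates are orthogonal), so the Skolem terms do the work of providing witnesses in arbitrary models, after which a routine structural induction on the description tree closes the argument.
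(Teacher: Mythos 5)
Your forward direction is correct and coincides with the paper's: $\PIp$ is a Herbrand model of $\Phi$ by Lemma~\ref{lem:pipmodel}, hence $\skz\in(C_\Tr)^{\PIp}$, and Lemma~\ref{lemma:ItoIc} produces the labeling. The backward direction, however, has a genuine gap at its pivotal step. You claim that $\Phi\models\beta$ can be strengthened to $\pi(\Tmc)\cup\{C_1(\skz)\}\models\beta$ for each $\beta\in\CM(sl_\Tr)$, and you then run a structural induction over \emph{arbitrary} models $\Imc$ of $\pi(\Tmc)\cup\{C_1(\skz)\}$, evaluating $sl_\Tr(v)^\Imc$. But $\beta$ contains Skolem terms, and $\pi(\Tmc)$ is the \emph{unskolemized} translation: an arbitrary model of $\pi(\Tmc)\cup\{C_1(\skz)\}$ interprets the Skolem function symbols freely, so the stated entailment is simply false in general. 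For $\Tmc=\{C_1\isa\exists r.A\}$ and $\beta=A(\sk(\skz))$, take a model with $C_1^\Imc=\{d\}$, $r^\Imc=\{(d,e)\}$, $A^\Imc=\{e\}$ and $\sk(\skz)^\Imc=d$: it satisfies $\pi(\Tmc)\cup\{C_1(\skz)\}$ but not $\beta$. Skolem terms ``provide witnesses'' only in models of the \emph{Skolemized} clause set, so your induction hypothesis fails exactly where you need the existential witnesses. (A second, more minor, inaccuracy: the duplicate predicates do not occur in $\Phi$ only in $\neg\negc{C_2}(\skz)$ --- the whole translation of $\negc{\Tmc}$ is part of $\Phi$. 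The orthogonality you want still holds, but because $\Phi$ is Horn and every clause of the duplicate part contains a negative duplicate literal while no positive ground duplicate fact exists.)

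The repair is routine but must be made explicit: replace $\pi(\Tmc)\cup\{C_1(\skz)\}$ throughout by the Skolemized clause set $\Phi_\Tmc\cup\{C_1(\skz)\}$ (the non-duplicate part of $\Phi$), run the induction over its models, and then conclude $\Tmc\models C_1\isa C_\Tr$ via conservativity of Skolemization: every model $\Jmc$ of $\pi(\Tmc)$ with $d\in C_1^\Jmc$ expands to a model of $\Phi_\Tmc\cup\{C_1(\skz)\}$ with $\skz$ interpreted as $d$. Note that the paper sidesteps this entire issue by arguing the backward direction by contradiction: it fixes a particular Skolemization $\varphi$ of $\pi(\Tmc)\wedge\neg\pi(C_1\isa C_\Tr)$ sharing $\Phi$'s Skolem symbols, takes a \emph{minimal Herbrand model} $\Imc'$ of $\varphi$ (so Skolem functions are pinned down and duplicate atoms are absent), shows $\Imc'\models\Phi$, and derives the contradiction $\skz\in(C_\Tr)^{\Imc'}$ from $\PIp\subseteq\Imc'$. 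Your direct route is viable once the Skolemization issue is fixed, but as written the key entailment does not hold.
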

	\begin{proof}
    Given the preconditions of the lemma, let us first assume $\Tmc\models C_1\isa C_2$ to show the existence of a Skolem labeling $sl_{\Tr}$ such that $sl_{\Tr}(v_0)=\skz$ and $\CM(sl_{\Tr})\subseteq\PIp$.
		Since $C_1(\skz)\in\Phi$ by definition, we have $\skz\in (C_1)^\Imc$ for any Herbrand model $\Jmc$ of $\Phi$.
    Moreover, because $\Tmc\models C_1\isa C_{\Tr}$, it follows that  $\skz\in (C_\Tr)^\Imc$ for any Herbrand model $\Jmc$ of $\Phi$.
    By Lemma\ \ref{lem:pipmodel}, we know that $\PIp$ can be seen as a Herbrand model of $\Phi$, %
    thus $\skz\in (C_\Tr)^{\PIp}$.
    The existence of a Skolem labeling with the desired properties follows by Lemma \ref{lemma:ItoIc}.

    To prove the opposite implication, we assume given a Skolem labeling verifying $sl_{\Tr}(v_0)=\skz$ and $\CM(sl_{\Tr})\subseteq\PIp$ and show that $\Tmc \models C_1\isa C_\Tr$ by contradiction.
		Then $\skz\in (C_{\Tr})^{\CM(sl_{\Tr})}$ because $sl_{\Tr}(v_0)=\skz$.
		Towards contradiction, we assume $\Tmc\not\models C_1\isa C_{\Tr}$.
    Then $\pi(\Tmc)\not\models \pi(C_1\isa C_\Tr)$, since the standard translation from \EL to first-order logic preserves entailment \cite{DL_TEXTBOOK}.
    Thus $\pi(\Tmc)\land\neg \pi(C_1\isa C_\Tr)$ is satisfiable and hence, the Skolemizations of $$\pi(\Tmc)\land\neg \pi(C_1\isa C_\Tr)=\pi(\Tmc)\land\exists x.(C_1(x)\land \neg\pi(C_\Tr,x))$$ are also satisfiable.
    Let us consider the particular Skolemization $\varphi$ of $\pi(\Tmc)\wedge\neg\pi(C_1\isa C_\Tr)$ that coincides with $\Phi$ on the Skolemization of $\Tmc$ and uses $\skz$ to Skolemize the existential variable in $\neg\pi(C_1\isa C_\Tr)$.
    Let $\Imc'$ be a minimal Herbrand model of $\varphi$.
    It verifies $\skz\in (C_1)^{\Imc'}$ and $\skz\not\in (C_\Tr)^{\Imc'}$.
    We show that $\Imc'$ is a model of $\Phi$, which will allow us to raise a contradiction on that last statement.
    Since, by design, $\varphi$ contains all the non-renamed clauses in $\Phi$, it follows that $\Imc'$ satisfies these non-renamed clauses also for $\Phi$.
    Since $\varphi$ does not include renamed atoms, the minimality of $\Imc'$ ensures that it does not include any renamed atoms.
    This ensures that $\Imc'$ also models the renamed part of $\Phi$: for any renamed $\negc{C}\isa \negc{D}$, it holds that $(\negc{C})^{\Imc'}=(\negc{D})^{\Imc'}=\emptyset$, and $\neg \negc{C_2}(\skz)$ is also true in \Imc'.
    Thus, $\Imc'$ is a model of $\Phi$.
    However, since $\CM(sl_{\Tr})\subseteq \PIp$, and $\PIp\subseteq \Imc'$ by Lemma\ \ref{lem:pipmodel}, it follows that  $\CM(sl_{\Tr})\subseteq\Imc'$ must hold.
    In addition, since $C_\Tr(\skz)\in\CM(sl_{\Tr})$ because $sl_\Tr(v_0) = \skz$, it follows that $\skz\in (C_{\Tr})^{\Imc'}$, a contradiction.
    \qedhere
	\end{proof}

  Lemma\ \ref{lem:gcitopi} establishes a relation between the FOL encoding $\Phi$ and the original \EL problem, but we need a stronger result to know how to construct the $C_\Tr$ such that $\Tmc\models C_1\isa C_\Tr$ from $\PIp$.
  Lemma\ \ref{lem:posconnection} does the job, by showing that it is only necessary to collect the atomic prime implicates about unary predicates (the ones from $\NC$) to construct all relevant $C_\Tr$.

	\begin{lemma}[Construction of Subsumers of $C_1$]
    \label{lem:posconnection}
		Given an abduction problem \mbox{$\langle\Tmc,\mathcal{H},C_1\isa C_2\rangle$}, its first-order translation $\Phi$ and a set $\Amc=\{A_1(t_1),\ldots,$ $A_n(t_n)\}\subseteq \PIp$ where $n>0$, there exists $\Tr=(V,E,v_0,l)$ and $sl_{\Tr}$ s.t.\ $\Amc=\CMA(sl_{\Tr})$ and $\Tmc\models C_1\isa C_{\Tr}$.
	\end{lemma}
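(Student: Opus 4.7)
The plan is to read the tree $\Tr$ directly off the syntactic structure of the Skolem terms appearing in $\Amc$, and then invoke Lemma~\ref{lem:gcitopi} to obtain the required entailment. Let $T$ be the set of ground Skolem terms that either equal $\skz$ or occur as a subterm of some $t_i$ with $A_i(t_i)\in \Amc$. Because each Skolem function $\sk\in \NS$ is introduced by exactly one axiom of the form $E\isa \exists r.F$ in $\Tmc$, every element of $T\setminus\{\skz\}$ is uniquely of the form $\sk(t)$ for exactly one pair $(\sk,t)$ with $t\in T$. I define $\Tr=(V,E,v_0,l)$ by taking $V$ in bijection with $T$ through a map $sl_\Tr$ satisfying $sl_\Tr(v_0)=\skz$, by adding an edge $vrv'$ to $E$ whenever $sl_\Tr(v')=\sk(sl_\Tr(v))$ and $\sk$ was introduced while Skolemizing an axiom $E\isa \exists r.F$, and by labeling each $v$ with $l(v)=\{A\mid A(sl_\Tr(v))\in \Amc\}$. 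The equality $\Amc=\CMA(sl_\Tr)$ then follows directly from the definition of $l$ together with the fact that every term occurring in an atom of $\Amc$ lies in $T$.

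To establish $\Tmc\models C_1\isa C_\Tr$ I would apply Lemma~\ref{lem:gcitopi}, which reduces the problem to showing $\CM(sl_\Tr)\subseteq \PIp$. The unary part equals $\CMA(sl_\Tr)=\Amc\subseteq \PIp$ by hypothesis, so only the role atoms $r(t,\sk(t))$ associated with edges of $\Tr$ remain to be placed in $\PIp$. For this I would invoke Lemma~\ref{lem:pipmodel}, which presents $\PIp$ as a minimal Herbrand model of $\Phi$, constructed by iteratively closing under the Horn rules in $\Phi$. Since each Skolem function $\sk$ occurs in $\Phi$ only inside the two clauses $\neg E(x)\vee r(x,\sk(x))$ and $\neg E(x)\vee F(\sk(x))$ produced by Skolemizing $E\isa \exists r.F$, a term $\sk(t)$ can first appear in the construction of $\PIp$ only by applying one of these two rules, and this requires $E(t)\in \PIp$. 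Once this is granted, the companion rule fires on the same premise and yields $r(t,\sk(t))\in \PIp$, exactly as required.

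The heart of the argument is therefore the \emph{term-introduction} principle just described: whenever a ground Skolem term appears in any atom of $\PIp$, the role atom linking it to its parent term is also in $\PIp$. I would formalize this by induction on the nesting depth of Skolem functions inside the $t_i$, following the iterative construction from the proof of Lemma~\ref{lem:pipmodel}. This is the step that will require the most care, since it relies crucially on the Horn shape of $\Phi$ and on the bijective correspondence between Skolem functions of $\Phi$ and existential role axioms of $\Tmc$. Once this principle is in place, the remainder of the proof is a routine check of the definitions of $\CM$ and $\CMA$ against the construction of $\Tr$.
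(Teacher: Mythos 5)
Your proof is correct, and while it rests on the same two pillars as the paper's---Lemma~\ref{lem:pipmodel}, which identifies $\PIp$ with the minimal Herbrand model built by forward chaining from $C_1(\skz)$, and Lemma~\ref{lem:gcitopi}, which reduces $\Tmc\models C_1\isa C_{\Tr}$ to exhibiting a labeling with $\CM(sl_{\Tr})\subseteq\PIp$---it executes the key step differently. The paper first handles singletons $\{A(t)\}$ by induction on the stage $j$ at which $A(t)$ enters $\Imc_j$, with a four-way case analysis on which forward-chaining rule produced it, building the tree by surgery on previously constructed trees, and only then merges the singleton trees along shared Skolem terms. You instead read the whole tree off the subterm structure of the terms in $\Amc$ in one shot, which makes $\Amc=\CMA(sl_{\Tr})$ immediate and concentrates all the work into a single invariant: whenever $\sk(t)$ occurs in an atom of $\PIp$, the companion role atom $r(t,\sk(t))$ is also in $\PIp$. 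That invariant does hold, precisely because the second rule in the construction of Lemma~\ref{lem:pipmodel} is the only one that introduces a new term, and it adds $F(\sk(t))$ and $r(t,\sk(t))$ simultaneously from the premise $E(t)\in\Imc_j$; your induction on nesting depth goes through. Your route is shorter and arguably more transparent, and the tree it produces is the same one the paper's merging step yields (with empty labels on intermediate nodes), so nothing is lost. Two small points you should make explicit: (i) $T$ must be subterm-closed and contain the $t_i$ themselves, so that the parent of every non-root node is again in $T$ and the resulting graph really is a tree rooted at the node labeled $\skz$; and (ii) no duplicate Skolem function can head a term occurring in $\PIp$ (the minimal model of Lemma~\ref{lem:pipmodel} contains no duplicate atoms, so the duplicate Skolem clauses never fire), which is what licenses associating each $\sk$ appearing in $T$ with an original axiom $E\isa\exists r.F$ of $\Tmc$ and hence a well-defined edge label.
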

	\begin{proof}
    Given an abduction problem \mbox{$\langle\Tmc,\mathcal{H},C_1\isa C_2\rangle$}, its first-order translation $\Phi$ and a set $\Amc=\{A_1(t_1),\ldots,A_n(t_n)\}\subseteq \PIp$ where $n>0$, we notice that every singleton set $\{A_i(t_i)\}\subseteq\Amc$ also verifies that $\{A_i(t_i)\}\subseteq\PIp$.
    Thus to prove the property for any $\Amc$, we first show it for singletons and then we show how to construct a description tree for any $\Amc$ given the description trees for each singleton containing an element of $\Amc$.

    Let $\Amc = \{A(t)\}$ be a singleton.
    In practice, we need a slightly stronger property: we show the existence of a $\Tr=(V,E,v_0,l)$ and $sl_{\Tr}$ such that $\{A(t)\} = \CMA(sl_\Tr)$, $sl_\Tr(v_0) = \skz$ and $\CM(sl_\Tr)\subseteq\PIp$.
    Then $\Tmc\models C_1\isa C_\Tr$ follows by Lemma\ \ref{lem:gcitopi}.
    By Lemma\ \ref{lem:pipmodel}, $\PIp$ is a Herbrand model of $\Phi$ and thus $C_1(\skz)\in\PIp$. %
    As shown in the proof of Lemma\ \ref{lem:pipmodel}, we can write $\PIp$ as $\bigcup_{i\in\mathbb{N}}\Imc_i$, where:
    \begin{itemize}
    \item $\Imc_0=\{C_1(\skz)\}$ and,
    \item given $\Imc_j$,
      \begin{eqnarray*}
        \Imc_{j+1} &=& \Imc_j \cup \{B(t)\mid t\in (D)^{\Imc_j},\, \neg\pi(D,x)\vee B(x)\in\Phi\}\\
                   & &  \phantom{\Imc_j}\cup \{B(\sk(t)),r(t,\sk(t))\mid t\in (A)^{\Imc_j},\\
                   & & \qquad\quad\neg\pi(A,x)\vee B(\sk(x))\in \Phi,\, \neg\pi(A,x)\vee r(x,\sk(x))\in \Phi\}\\
                   & &  \phantom{\Imc_j}\cup \{B(t)\mid \sk(t)\in (A)^{\Imc_j},\, (t,\sk(t))\in r^{\Imc_j},\\
                   & & \qquad\quad\neg r(x,y)\vee\neg A(y)\vee B(x)\in\Phi\}.\\
      \end{eqnarray*}
    \end{itemize}
    Since $A(t)\in\PIp$, there exists an $i\in\mathbb{N}$ that is the smallest such that $A(t)\in\Imc_i$.
    We construct $\Tr$ inductively, depending on the value of $i$.
    If $i=0$, then $A(t)=C_1(\skz)$ and thus defining $\Tr$ and $sl_\Tr$ as $\Tr = (\{v_0\}, \emptyset, v_0, \{v_0\mapsto \{C_1\}\})$ and $sl_\Tr = \{v_0\mapsto \skz\}$ ensures additionally that $\Amc=\{C_1(\skz)\}=\CMA(sl_\Tr)=\CM(sl_\Tr)\subseteq\PIp$. %
    Assuming we know how to construct suitable description trees and Skolem labelings up to a given $j\in\mathbb{N}$, when $i=j+1$, the construction of $\Tr$ depends on the reason for which $A(t)\in\Imc_{j+1}\setminus\Imc_j$.
			\begin{itemize}
      \item If $A(t)\in\{B(t)\mid t\in (D)^{\Imc_j},\, \neg\pi(D,x)\vee B(x)\in\Phi\}$, where $D$ is in fact an atomic concept $B$ then $B(t)\in\Imc_{j}$ and thus $B(t)\in\PIp$.
        By induction, let $\Tr'=(V',E',v_0,l')$ and $sl_{\Tr'}$ be a description tree and Skolem labeling such that $sl_{\Tr'}(v_0)=\skz$, $\CM(sl_{\Tr'}) \subseteq\PIp$ and $\{B(t)\} = \CMA(sl_{\Tr'})$.
        Let $v\in V'$ be the node such that $l_{\Tr'}(v) = \{B\}$ and $sl_{\Tr'}(v) = t$.
        Then the Skolem labeling $sl_\Tr$ is defined as identical to $sl_{\Tr'}$ and we define $\Tr$ as $(V',E',v_0,l'[v\mapsto \{A\}])$, where $l'[v\mapsto \{A\}]$ denotes the function $l'$ except on $v$ for which the value returned is $\{A\}$ so that $\{A(t)\} = \CMA(sl_\Tr)$ as wanted.
        Since $\CM(sl_\Tr) \subseteq \{A(t)\}\cup\CM(sl_{\Tr'})$ and $A(t)\in\PIp$, it follows that $\CM(sl_\Tr)\subseteq\PIp$.
      \item If $A(t)\in\{B(t)\mid t\in (D)^{\Imc_j},\, \neg\pi(D,x)\vee B(x)\in\Phi\}$, where $D$ is in fact the conjunction of two atomic concepts $B_1$ and $B_2$, then both $B_1(t)$ and $B_2(t)$ belong to $\Imc_j$ and thus to $\PIp$.
        We adapt exactly as in the last case any of the description trees $\Tr_1$ or $\Tr_2$ and associated Skolem labeling $sl_{\Tr_1}$ or $sl_{\Tr_2}$, that respectively correspond to $B_1$ and $B_2$ and verify the properties by induction. %
      \item If $A(t)\in \{B(\sk(t))\mid t\in (A')^{\Imc_j}, r(t,\sk(t))\in\Imc_{j+1}, \neg\pi(A',x)\vee B(\sk(x))\in \Phi, \\ \neg\pi(A',x)\vee r(x,\sk(x))\in \Phi\}$ then $t = \sk(t')$ for some $\sk$ and $t'$ such that $A'(t')\in\Imc_j$, $\neg A'(x)\vee A(\sk(x)), \neg A'(x)\vee r(x,\sk(x))\in\Phi$ for some $r$ and $A'$.
        Since $A'(t')\in\PIp$, there exists a description tree $\Tr'=(V',E',v_0,l')$ such that $sl_{\Tr'}(v_0) = \skz$, $\CMA(sl_{\Tr'}) = \{A'(t')\}$, and $\CM(sl_{\Tr'})\isa\PIp$.
        Let $v'$ be the leaf node such that $l'(v) = \{A'\}$ and $sl_{\Tr'}(v) = t'$.
        We introduce a fresh node $v$ to define $\Tr$ as $(V'\cup\{v\},\, E'\cup \{v'rv\},\, v_0,\, l'[v'\mapsto\emptyset]\cup\{v\mapsto\{A\}\})$ and $sl_\Tr=sl_{\Tr'}\cup\{v\mapsto t\}$.
        Thus $sl_\Tr(v_0)=sl_{\Tr'}(v_0) = \skz$, $\CMA(sl_\Tr) = \{A(t)\}$ and $\CM(sl_\Tr) \subseteq \{A(\sk(t)), r(t,\sk(t))\}\cup \CM(sl_{\Tr'})\subseteq\PIp$ since $\{A(\sk(t')), r(t',\sk(t'))\}\subseteq\PIp$.
      \item If $A(t)\in \{B(t)\mid \sk(t)\in (A')^{\Imc_j},\, (t,\sk(t))\in r^{\Imc_j},\, \neg r(x,y)\vee\neg A'(y)\vee \\ B(x)\in\Phi\}$ then there exist $A'$, $r$ and $\sk$ such that $A'(\sk(t))\in \Imc_j$, $r(t,\sk(t))\in \Imc_j$, and $\neg r(x,y)\vee\neg A'(y)\vee A(x)\in\Phi$.
        By induction, we consider a description tree $\Tr'=(V',E',v_0,l')$ and associated Skolem labeling $sl_{\Tr'}$ for which $sl_{\Tr'}(v_0) = \skz$, $\CMA(sl_{\Tr'}) = \{A'(t')\}$, and $\CM(sl_{\Tr'}) \subseteq\PIp$.
        Let $v$ be the leaf in $V'$ such that $l'(v) = \{A'\}$ and $w$ be its parent in the tree, such that $wr'v\in E'$ for some $r'$.
        We define $\Tr$ as $(V'\setminus\{v\}, E'\setminus\{wr'v\}, v_0, l'[w\mapsto\{A\}]\setminus \{v\mapsto\{A'\}\})$ and $sl_\Tr = sl_{\Tr'}\setminus\{v\mapsto \sk(t)\}$.
        Thus, $sl_\Tr(v_0)=sl_{\Tr'}(v_0)=\skz$, $\CMA(sl_\Tr) = \{A(t)\}$ and $\CM(sl_\Tr) \subseteq \{A(t)\}\cup\CM(sl_{\Tr'})\subseteq\PIp$.
			\end{itemize}

      Let us now consider the case of non-singleton $\Amc = \{A_1(t_1),\dots,A_n(t_n)\}$ ($n>1$).
      We have just seen how to obtain description trees $\Tr_i=(V_{i},E_{i},v_0,l_{i})$ and Skolem labelings  $sl_{\Tr_i}$ for $i\in\{1,\dots,n\}$ such that $\{A_i(t_i)\} = \CMA(sl_{\Tr_i})$, $sl_{\Tr_i}(v_0) = \skz$ and $\CM(sl_{\Tr_i})\subseteq\PIp$.
      We define $\Tr = (V,E,v_0,l)$ and $sl_\Tr$ by introducing a node $v\in V$ for each $t\in\bigcup_{i=1}^n\{sl_{\Tr_i}(v')\mid v'\in V_i\}$ and setting $sl_\Tr(v) = t$ in each case.
      For $t=\skz$, the introduced node $v\in V$ is named $v_0$ and declared as the root of $\Tr$.
      It remains to define $E$ and $l$.
      For $E$ we collect all edges from the description trees $\Tr_i$ to obtain
      $$E=\bigcup_{i=1}^n\{vrw\mid v'rw'\in E_i, sl_{\Tr_i}(v') = sl_{\Tr}(v), sl_{\Tr_i}(w')=sl_\Tr(w)\}.$$
      For $l$, we proceed similarly to collect labels, producing for each $v\in V$,
      $$l(v)=\bigcup_{i=1}^n\{l_i(v')\mid v'\in V_i, sl_{\Tr_i}(v')=sl_\Tr(v)\}.$$
      Thus
      \begin{align*}
        \CMA(sl_\Tr) &= \{A(sl_\Tr(v))\mid A\in l(v), v\in V\}\\
        &= \bigcup_{i=1}^n\{A(sl_{\Tr_i}(v))\mid A\in l_i(v'), v'\in V_i\}\\
        &= \bigcup_{i=1}^n\CMA(sl_{\Tr_i}) = \Amc
      \end{align*}
      and since also
      \begin{align*}
        \CMr(sl_\Tr) &= \{r(sl_\Tr(v),sl_\Tr(w))\mid vrw\in E\}\\
                     &= \bigcup_{i=1}^n\{r(sl_{\Tr_i}(v'),sl_{\Tr_i}(w'))\mid v'rw'\in E_i, sl_{\Tr_i}(v') = sl_\Tr(v), sl_{\Tr_i}(w') = sl_\Tr(w)\}\\
        &= \bigcup_{i=1}^n\CMr(sl_{\Tr_i}),
      \end{align*}
      we have $\CM(sl_\Tr) = \bigcup_{i=1}^n\CM(sl_{\Tr_i})\subseteq\PIp$, and by Lemma\ \ref{lem:gcitopi}, $\Tmc\models C_1\isa C_\Tr$.
    \qedhere
	\end{proof} 

  \subsection{Subsumees of $C_2$ and Negative Prime Implicates}
	Next, we show how negative ground implicates are related to the solutions of the abduction problem.
	\begin{lemma}[Concept Homomorphism and Negative Implicates]
    \label{lem:homtonegimp}
    Let \mbox{$\langle\Tmc,\sig,C_1\isa C_2\rangle$} be an abduction problem, $\Phi$ denote its translation to first-order logic and $\Amc = \{A_1(t_1),\ldots,A_k(t_k)\}\subseteq \PIp$.
    As allowed by Lemma\ \ref{lem:posconnection}, let $\Tr_1=(V_1,E_1,v_0,l_1)$ and $sl_{\Tr_1}$ denote an \EL-description tree and associated Skolem labeling s.t.\ $sl_{\Tr_1}(v_0)=\skz$, $\Amc=\CMA(sl_{\Tr_1})$ and $\Tmc\models C_1\isa C_{\Tr_1}$.

    For any \EL-description tree $\Tr_2=(V_2,E_2, w_0, l_2)$ with a weak homomorphism $\phi$ from $\Tr_2$ to $\Tr_1$, the following equivalence holds:
		\begin{description}
			\item[(EL)] $\Tmc\models C_{\Tr_2}\isa C_2$
		\end{description}
    if and only if 
			\begin{description}
      \item[(FO)] there is a Skolem labeling $sl_{\Tr_2}$ for $\Tr_2$ such that
        \begin{itemize}
        \item[] $sl_{\Tr_2}(v) = sl_{\Tr_1}(\phi(v))$ for all $v\in V_2$, and
        \item[] $\Phi\models \bigvee_{v\in V_2,B\in l_{\Tr_2}(v)}\neg \negc{B}(sl_{\Tr_2}(v))$.
        \end{itemize}
			\end{description}
			
	\end{lemma}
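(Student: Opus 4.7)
The plan is to prove the two implications separately, using Lemma \ref{lem:posconnection} to relate $\Tr_1$ to $\PIp$ and exploiting that the weak homomorphism $\phi$ transports every edge of $\Tr_2$ into an edge of $\Tr_1$: for each $vrw\in E_2$ we have $\phi(v)r\phi(w)\in E_1$, so $r(sl_{\Tr_1}(\phi(v)),sl_{\Tr_1}(\phi(w)))\in\CMr(sl_{\Tr_1})\subseteq\PIp$, and therefore $\Phi\models r(sl_{\Tr_2}(v),sl_{\Tr_2}(w))$ under the forced labeling $sl_{\Tr_2}(v):=sl_{\Tr_1}(\phi(v))$. This observation is the bridge that moves the tree structure of $\Tr_2$ into the Herbrand-level information already contained in $\Phi$.

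For (EL)$\Rightarrow$(FO) I would start from $\Tmc\models C_{\Tr_2}\isa C_2$ and apply the renaming to obtain $\pi(\negc{\Tmc})\models\forall x.\,\pi(\negc{C_{\Tr_2}},x)\to\negc{C_2}(x)$. Since $\Phi$ contains a Skolemization of $\pi(\negc{\Tmc})$, instantiating $x$ by $\skz$ and combining with $\neg\negc{C_2}(\skz)\in\Phi$ yields $\Phi\models\neg\pi(\negc{C_{\Tr_2}},\skz)$. Unfolding $\pi(\negc{C_{\Tr_2}},\skz)$ along the tree structure of $\Tr_2$ rewrites it as an existentially quantified conjunction of concept atoms $\negc{B}(y_v)$ for $B\in l_2(v)$ and role atoms $r(y_v,y_w)$ for edges $vrw\in E_2$, with $y_{w_0}=\skz$. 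Instantiating each $y_v$ by $sl_{\Tr_2}(v)$ then yields a ground disjunction entailed by $\Phi$; the role atoms occurring in it are individually entailed by $\Phi$ by the observation above, so they can be dropped, leaving the required disjunction $\bigvee_{v\in V_2,\,B\in l_2(v)}\neg\negc{B}(sl_{\Tr_2}(v))$.

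For (FO)$\Rightarrow$(EL) I would argue by contraposition: assuming $\Tmc\not\models C_{\Tr_2}\isa C_2$, I construct a Herbrand model $\Jmc$ of $\Phi$ in which every atom $\negc{B}(sl_{\Tr_2}(v))$ is true, which contradicts the FO entailment. The domain of $\Jmc$ is $\gterms{\skz}$; non-renamed concepts and roles are interpreted as in $\PIp$, which already models $\pi(\Tmc)\cup\{C_1(\skz)\}$ by Lemma \ref{lem:pipmodel}; renamed concepts are defined as the least Horn-closure under Skolemized $\pi(\negc{\Tmc})$ of the seed $\{\negc{B}(sl_{\Tr_2}(v))\mid v\in V_2,\,B\in l_2(v)\}$, with renamed Skolem functions interpreted syntactically and the role atoms $r(t,\negc{\sk}(t))$ they require added to $r^\Jmc$. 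Because no non-renamed atomic concept holds of any renamed Skolem term in $\PIp$, the added role atoms do not trigger any non-renamed clause of $\pi(\Tmc)$, so the non-renamed part of $\Jmc$ remains a model of $\pi(\Tmc)$. The seed together with the role atoms of $\PIp$ lying in the image of $\phi$ faithfully encodes $\negc{C_{\Tr_2}}(\skz)$, so $\negc{C_2}(\skz)$ lies in the closure iff $\negc{\Tmc}\models\negc{C_{\Tr_2}}\isa\negc{C_2}$, equivalently iff $\Tmc\models C_{\Tr_2}\isa C_2$; under our assumption this fails, hence $\Jmc\models\neg\negc{C_2}(\skz)$ and $\Jmc$ is the sought counter-model.

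The hard part will be this faithfulness claim in the backward direction. The role structure inherited from $\PIp$ may contain edges of $\Tr_1$ lying outside the image of $\phi$, and clauses of the form $\exists r.\negc{A}\isa\negc{B}$ could a priori propagate renamed atoms back along such side-edges toward $\skz$, producing derivations that do not come from consequences of $\negc{C_{\Tr_2}}(\skz)$. The careful argument must track which renamed atoms can actually reach $\skz$ through the closure and show that they are exactly the $\negc{\Tmc}$-consequences of $\negc{C_{\Tr_2}}(\skz)$, so that the assumed non-entailment $\Tmc\not\models C_{\Tr_2}\isa C_2$ really does block $\negc{C_2}(\skz)$ from entering the closure.
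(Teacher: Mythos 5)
Your proposal takes essentially the same route as the paper: the forward direction is identical (derive $\Phi\models\neg\pi(\negc{C_{\Tr_2}},\skz)$ from the renamed entailment and resolve away the role literals, which are available because $\CMr(sl_{\Tr_1})\subseteq\PIp$ and $\phi$ is a weak homomorphism), and your backward direction constructs exactly the paper's Herbrand interpretation $\PIp\cup\bigcup_i\Imc_i$ obtained by closing the seed $\CM(sl_{\Tr_2})$ under the renamed Horn clauses, only phrased as a contraposition where the paper argues directly that $\negc{C_2}(\skz)$ must enter the closure. The faithfulness obligation you single out as the hard part is precisely the paper's claim $(*)$, discharged there by induction on the closure stages by attaching to each set of renamed atoms a description tree $\Tr$ with $\Tmc\models C_{\Tr_2}\isa C_{\Tr}$; note that your worry about propagation along side-edges is largely moot, since atoms added beyond the image of $\Tr_2$ sit at renamed Skolem terms that $\PIp$ never mentions.
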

	\begin{proof}
    
    We first show that \textbf{(EL)} implies \textbf{(FO)}.
    We thus assume that $\Tmc\models C_{\Tr_2}\isa C_2$.
    We define $sl_{\Tr_2}$ as $sl_{\Tr_2}(v)=sl_{\Tr_1}(\phi(v))$ for all $v\in V_{\Tr_2}$.
    Since $\phi$ is a weak homomorphism from $\Tr_2$ to $\Tr_1$, $sl_{\Tr_2}(v)$ is a Skolem labeling for $C_{\Tr_2}$.
    It remains only to show that $\Phi\models \bigvee_{v\in V_2,B\in l_{\Tr_2}(v)}\neg \negc{B}(sl_{\Tr_2}(v))$.

    By assumption $\Tmc\models C_{\Tr_2}\isa C_2$, thus $\pi(\Tmc)\models \neg\pi(C_{\Tr_2},x)\vee \pi(C_2,x)$ by a direct translation, from which we deduce $\pi(\Tmc)\wedge \neg\pi(C_2,\skz)\models \neg\pi(C_{\Tr_2},\skz)$.
    This entailment also holds for the renamed versions of $\Tmc$, $C_{\Tr_2}$ and $C_2$ so $\Phi\models\neg\pi(\negc{C_{\Tr_2}},\skz)$.
    The clause $\neg\pi(\negc{C_{\Tr_2}},\skz)$ has the following form:
    $$\neg\pi(\negc{C_{\Tr_2}},\skz)=\bigvee_{urv\in E_2}\neg r(t_u,t_v)\lor\bigvee_{v\in V_2,\,B\in l_2(v)}\neg \negc{B}(t_v)$$
    where, $t_{w_0}=\skz$ and for all $v\in V_{\Tr_2}\setminus\{w_0\}$, $t_v$ is a variable uniquely associated with $v$.

    Since $\CM(sl_{\Tr_1})\subseteq\PIp$, in particular $\CMr(sl_{\Tr_1})\subseteq\PIp$.
    Moreover, for each edge $urv\in E_{\Tr_2}$, $\Phi\models r(sl_{\Tr_2(u)},sl_{\Tr_2(v)})$ and thus $r(sl_{\Tr_2(u)},sl_{\Tr_2(v)})\in\CMr(sl_{\Tr_1})$ can be derived from $\Phi$ using the resolution calculus.
    Each of these atomic clauses can be resolved away with the corresponding literal $\neg r(t_u,t_v)$ in $\neg\pi(\negc{C_{\Tr_2}},\skz)$.
    In this derivation, all $t_v$ variables are replaced with $sl_{\Tr_2}(v)$.
    In addition $\skz=sl_{\Tr_2}(w_0)$, thus the clause $\bigvee_{v\in V_2,\,B\in l_2(v)}\neg \negc{B}(sl_{\Tr_2}(v))$ is derivable from $\Phi$ by resolution, and thus $\Phi \models \bigvee_{v\in V_2,\,B\in l_2(v)}\neg \negc{B}(sl_{\Tr_2}(v))$, so \textbf{(FO)} holds.
	
    Let us now assume \textbf{(FO)} in order to prove \textbf{(EL)}.
    We consider a Herbrand interpretation $\Imc=\PIp\cup \bigcup_i \Imc_i$ where $\Imc_i$ for $i\in\mathbb{N}$ is defined inductively as:
\begin{itemize}
	\item $\Imc_0=\{\negc{B}(t)\mid B(t)\in\CMA(sl_{\Tr_2})\}\cup\CMr(sl_{\Tr_2})$ and,
	\item given $\Imc_j$,
    \begin{eqnarray*}
      \Imc_{j+1} &=& \Imc_j \cup \{\negc{B}(t)\mid t\in (\negc{D})^{\Imc_j},\, \neg\pi(\negc{D},x)\vee \negc{B}(x)\in\Phi\}\\
                 & &  \phantom{\Imc_j}\cup \{\negc{B}(\sk(t)),r(t,\sk(t))\mid t\in (\negc{A})^{\Imc_j},\\
                 & & \qquad\quad\neg\pi(\negc{A},x)\vee \negc{B}(\sk(x))\in \Phi,\, \neg\pi(\negc{A},x)\vee r(x,\sk(x))\in \Phi\}\\
                 & &  \phantom{\Imc_j}\cup \{\negc{B}(t)\mid \sk(t)\in (\negc{A})^{\Imc_j},\, (t,\sk(t))\in r^{\Imc_j},\\
                 & & \qquad\quad\neg r(x,y)\vee\neg \negc{A}(y)\vee \negc{B}(x)\in\Phi\}.\\
    \end{eqnarray*}
\end{itemize}

The $\Imc_i$s are built to collect all the elements necessary to make the renamed part of $\Phi$ true, one step at a time, starting from an interpretation that satisfies $\bigwedge_{v\in V_{\Tr_2},B\in l_{\Tr_2}(v)} \negc{B}(sl_{\Tr_2}(v))\subseteq \CM(sl_{\Tr_2})$, and is thus incompatible with $\Phi$ under the \textbf{(FO)} assumption.
Indeed since $\Tmc$, and by extension $\negc{\Tmc}$, is in normal form, it contains only concept inclusions of the form $D\isa B$, $A\isa \exists r. B$ and $\exists r. A\isa B$, where $D$ is either an atomic concept or a conjunction of two atomic concepts.
These correspond in $\Phi$ respectively to the clauses $\neg\pi(\negc{D},x)\vee \negc{B}(x)$, to the pair of clauses $\{\neg\pi(\negc{D},x)\vee \negc{B}(\sk(x)),\, \neg\pi(\negc{D},x)\vee r(x,\sk(x))\}$ and to the clause $\neg r(x,\sk(x))\vee\neg \negc{A}(\sk(x))\vee \negc{B}(x)$.
Hence, if $t\in (\negc{D})^\Imc$ (resp.\ $\sk(t)\in (\negc{A})^\Imc$ and $(t,\sk(t))\in r^\Imc$) then there exists some $i\in\mathbb{N}$ such that $t\in (\negc{D})^{\Imc_i}$ (resp.\ $\sk(t)\in (\negc{A})^{\Imc_i}$ and $(t,\sk(t))\in r^{\Imc_i}$) and then all concept inclusions where $\negc{D}$ occurs on the right-hand side (resp. where $\exists r.\negc{A}$ occurs on the right-hand side for some $r$) are satisfied in $\Imc_{i+1}$.

Since $\Imc$ includes $\PIp$, the satisfiability of $C_1(\skz)$ and the Skolemization of $\pi(\Tmc)$ can be shown as in Lemma\ \ref{lem:posconnection}.
Thus, given that $\Imc\models \Phi\setminus\{\neg \negc{C_2}(\skz)\}$ but $\Imc\not\models \Phi$, necessarily $\Imc\models \negc{C_2}(\skz)$.

To make use of that fact, we must first prove the following statement:
\begin{itemize}
\item[($*$)] For any set $\Bmc=\{\negc{B}_1(t_1),\ldots,\negc{B}_k(t_k)\}\subseteq \Imc$, there exists $\Tr=(V,E,v_0,l)$ and $sl_{\Tr}$ s.t.\ $\Bmc=\{\negc{B}(sl_{\Tr}(v))\mid B\in l(v),\, v\in V\}$ and $\Tmc\models C_{\Tr_2}\isa C_{\Tr}$.
\end{itemize}
Without loss of generality, we can consider the biggest such $\Bmc$, that is the set of all atoms of the form $\negc{A}(t)$ in $\Imc$.
The smaller $\Bmc$s simply correspond to concepts $C_\Tr$ with fewer conjuncts.

Since $\PIp$ only contains non-renamed concepts, we prove ($*$) by induction on the $\Imc_j$ for $j\in\mathbb{N}$.
		When $j=0$, $\Imc_0=\CM(sl_{\Tr_2})$, thus $C_T=C_{\Tr_2}$ and the result directly follows.
		Assuming the result holds for a given $\Imc_j$, let $\Bmc=\{\negc{B}_1(t_1),\ldots,\negc{B}_k(t_k)\mid \negc{B}_i(t_i)\in\Imc_{j+1}\}$, and let $\Bmc^* = \{\negc{B}_1(t_1),\ldots,\negc{B}_l(t_l)\mid \negc{B}_i(t_i)\in\Imc_{j}\}$.
      The induction hypothesis applies to $\Bmc^*$ and we conclude that there exists an \EL-description tree $\Tr^*=(V_*,E_*,v_0,l_*)$ and a Skolem labeling $sl_{\Tr^*}$ s.t.\ $\Tmc\models C_{\Tr_2}\isa C_{\Tr^*}$ and $\Bmc^*= \{\negc{B}(sl_{\Tr}(v))\mid B\in l_*(v),\, v\in V_*\}$.
      Let us now consider the literals in $\Bmc\setminus \Bmc^*$.
      They are all of the form $\negc{B}(t)$ and belong to $\Imc_{j+1}$.
      We define \Tr and $sl_{\Tr}$ by extending ${\Tr^*}$ and $sl_{\Tr^*}$.
      The extension for each $\negc{B}(t)$ depends of which set it originates from.
		\begin{itemize}
    \item If $\negc{B}(t)\in \{\negc{B}(t)\mid t\in (\negc{D})^{\Imc_j},\, \neg\pi(\negc{D},x)\vee \negc{B}(x)\in\Phi\}\setminus \Imc_j$, let $v$ be the node in $\Tr^*$ such that $l_*(v)$ contains all atomic concepts from $D$ and $sl_{\Tr^*}(v) = t$.
      We add $B$ to $l_*(v)$ and the rest of $\Tr^*$ and $sl_{\Tr^*}$ is unchanged.
      Note that, in that case, $D\isa B\in\Tmc$ by construction of $\Phi$.
    \item If $\negc{B}(t)\in \{\negc{B}(t)\mid \sk(t)\in (\negc{A})^{\Imc_j},\, (t,\sk(t))\in r^{\Imc_j},\, \neg r(x,y)\vee\neg \negc{A}(y)\vee \negc{B}(x)\in\Phi\}\setminus \Imc_j$, let $v$ be the node in $\Tr^*$ such that $vrw\in E_*$, $sl_{\Tr^*}(v)=t$, $sl_{\Tr^*}(v)=\sk(t)$, and  $A\in l_*(w)$.
      As in the previous case, we simply add $B$ to $l_*(v)$.
      In that case, $\exists r. A\isa B\in \Tmc$ for the corresponding $r$.
    \item If $\negc{B}(\sk(t))\in \{\negc{B}(\sk(t)),r(t,\sk(t))\mid t\in (\negc{D})^{\Imc_j},\,\neg\pi(\negc{D},x)\vee \negc{B}(\sk(x))\in \Phi,\, \neg\pi(\negc{D},x)\vee r(x,\sk(x))\in \Phi\}\setminus \Imc_j$, let $v$ be the node such that $sl_{\Tr^*}(v) = t$.
      Then we add a fresh node $w$ to $V_*$ as well as an edge $vrw$ to $E_*$.
      We also extend $sl_{\Tr^*}$ so that $w$ is mapped to $t$.
      In that case, $A\isa \exists r.B\in \Tmc$ for the corresponding $r$.
		\end{itemize}
    Note that in all cases, $\Tmc\models C_{\Tr^*}\isa C_{\Tr}$ because the conjunct(s) added from $C_{\Tr^*}$ to $C_{\Tr}$ is(/are) justified by the concept inclusion from $\Tmc$ that is ultimately to blame for the existence of $\negc{B}(t)$ in $\Imc_{j+1}\setminus \Imc_j$.
    Since, by the induction hypothesis, $\Tmc\models C_{\Tr_2}\isa C_{\Tr^*}$, \Tr and $sl_\Tr$ are s.t.\ $\Bmc=\{\negc{B}(sl_{\Tr}(v))\mid B\in l(v),\, v\in V\}$ and $\Tmc\models C_{\Tr_2}\isa C_{\Tr}$, ($*$) holds for that case and thus also for $\Imc$.

    Because $\Imc\models \negc{C_2}(\skz)$, necessarily $\negc{C_2}(\skz)\in\Imc$.
    Thus, by ($*$), $\Tmc\models C_{\Tr_2}\isa C_2$ because the \EL-description tree \Tr from ($*$) in that case is such that $C_\Tr = C_2$.
    \qedhere
	\end{proof}

	\begin{lemma}[\EL-Description Tree and $\smand$]
		\label{lemma:subsethom}
    Given two \EL-description trees $\Tr_1=(V_1, E_1, v_0, l_1)$ and $\Tr_2=(V_2, E_2, w_0, l_2)$,
		$C_{\Tr_1}\smand C_{\Tr_2}$ if and only if there is an (injective) $\emptyset$-homomorphism from $\Tr_1$ to $\Tr_2$.
	\end{lemma}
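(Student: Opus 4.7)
The plan is to prove the two directions of the biconditional separately: the forward direction ($\smand$ implies homomorphism) by induction on the derivation of $C_{\Tr_1}\smand C_{\Tr_2}$ following the three cases of Definition~\ref{def:conmin}, and the backward direction by induction on the depth of $\Tr_1$.

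For the forward direction, in case~1 ($C_{\Tr_1}=C_{\Tr_2}$) the canonical isomorphism between the two description trees serves as the required injective $\emptyset$-homomorphism. In case~2, with $C_{\Tr_2} = D'\sqcap D''$ and $C_{\Tr_1}\smand D'$, the inductive hypothesis supplies an injective $\emptyset$-homomorphism $\psi$ from $\Tr_{C_{\Tr_1}}$ to $\Tr_{D'}$, which I then compose with the canonical embedding of $\Tr_{D'}$ into $\Tr_{D'\sqcap D''}$ obtained by sending the root to the root and each subtree of $\Tr_{D'}$ to its image. In case~3, $C_{\Tr_1} = \exists r.C'$ and $C_{\Tr_2} = \exists r.D'$ with $C'\smand D'$, so the inductive homomorphism between $\Tr_{C'}$ and $\Tr_{D'}$ extends to the whole trees by additionally mapping the two empty-labeled roots to each other.

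For the backward direction, given an injective $\emptyset$-homomorphism $\phi: V_{\Tr_1}\rightarrow V_{\Tr_2}$, I proceed by induction on the depth of $\Tr_1$. In the base case $\Tr_1$ has only the root $v_0$, so $C_{\Tr_1} = \bigsqcap l_1(v_0)$ with $l_1(v_0)\subseteq l_2(\phi(v_0))$; starting from $C_{\Tr_1}\smand C_{\Tr_1}$ by case~1, iterated applications of case~2 introduce the remaining atomic and existential top-level conjuncts of $C_{\Tr_2}$ on the right, yielding $C_{\Tr_1}\smand C_{\Tr_2}$. For the inductive step with root $v_0$ and children $v_1,\ldots,v_n$, the restriction of $\phi$ to $V_{\Tr_1(v_i)}$ is itself an injective $\emptyset$-homomorphism into $V_{\Tr_2(\phi(v_i))}$, so by the inductive hypothesis $C_{\Tr_1(v_i)}\smand C_{\Tr_2(\phi(v_i))}$, and case~3 lifts this to $\exists r_i.C_{\Tr_1(v_i)}\smand \exists r_i.C_{\Tr_2(\phi(v_i))}$.

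The main obstacle is aggregating these subtree relations---together with the root labeling inclusion---into a single $\smand$-relation between $C_{\Tr_1}$ and $C_{\Tr_2}$, because Definition~\ref{def:conmin} does not directly provide a congruence rule for conjunction. I would address this with two auxiliary lemmas on $\smand$: transitivity, proved by induction on the derivation of the second relation with a case analysis on the outermost rule used for the first, and compositionality ($A\smand A'$ implies $A\sqcap C\smand A'\sqcap C$ for every~$C$), proved by induction on the derivation of $A\smand A'$, with the existential case handled by combining case~2, case~3, and transitivity. Once these tools are available, compositionality allows merging the atomic inclusion $\bigsqcap l_1(v_0)\smand \bigsqcap l_2(\phi(v_0))$ with the lifted existential relations into an intermediate $\smand$-statement with right-hand side $\bigsqcap l_2(\phi(v_0))\sqcap\bigsqcap_i\exists r_i.C_{\Tr_2(\phi(v_i))}$, after which a final sequence of case~2 steps---enabled by the injectivity of $\phi$, which ensures the chosen existential conjuncts of $C_{\Tr_2}$ are pairwise distinct---reaches $C_{\Tr_2}$ itself and closes the proof.
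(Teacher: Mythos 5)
Your overall architecture is sound and considerably more explicit than the paper's own proof, which consists of two sentences observing that the conjuncts of $C_{\Tr_2}$ missing from $C_{\Tr_1}$ correspond either to atomic concepts missing from some label $l_2(w)$ or to subtrees of $\Tr_2$ lying outside the image of the homomorphism. Your forward direction (induction on the derivation of $\smand$) is correct, and your reduction of the backward direction to transitivity and compositionality of $\smand$ is exactly the right way to organize the argument.

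There is, however, a genuine gap in your claimed proof of compositionality, and it cannot be closed from Def.~\ref{def:conmin} as literally stated. In the existential case you must derive $\exists r.A''\sqcap C\smand \exists r.A'''\sqcap C$ from $A''\smand A'''$, and none of the three clauses applies: clause~3 requires the left-hand side to be a bare existential restriction, clause~2 only discards whole top-level conjuncts of the right-hand side while keeping the left-hand side fixed, and clause~1 requires syntactic equality. Indeed, by normalizing any derivation (pushing all clause-2 steps to the end) one sees that whenever $C$ is a conjunction of two or more conjuncts, $C\smand D$ can only hold if $C$ is syntactically equal to a sub-conjunction of $D$; hence, for instance, $\exists r.A\sqcap\exists s.B\not\smand\exists r.(A\sqcap A')\sqcap\exists s.B$, even though the corresponding description trees admit an injective $\emptyset$-homomorphism. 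So not only does your auxiliary lemma fail under the literal definition --- the stated lemma itself does. The defect really lies in Def.~\ref{def:conmin}, which lacks the congruence rule for conjunction needed to match its informal gloss (``removing conjuncts in subexpressions''); the paper's own two-line proof silently works with the congruence-closed relation. If compositionality is added as a fourth clause (or the definition is restated to match the intended relation), both of your auxiliary lemmas become provable and the rest of your argument goes through.
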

	\begin{proof}
		If $C_{\Tr_1}\equiv C_{\Tr_2}$, we are done because then $\Tr_1$ and $\Tr_2$ obviously have the same shape.
    Otherwise, the missing conjuncts in $C_1$ would correspond to either:
    \begin{itemize}
    \item some missing atomic concepts in some $l_2(w)$ from $\Tr_2$ or
    \item a subtree of $\Tr_2$ that is not in the image of the $\emptyset$-homomorphism from $\Tr_1$.
      \qedhere
    \end{itemize}
	\end{proof}

	Last, we show how prime implicates are related to the connection-minimal solutions of the abduction problem.
	\begin{lemma}[Concept Homomorphism and Negative Implicates]
		\label{lemma:matchingpiandhom}
    Let \mbox{$\langle\Tmc,\sig,C_1\isa C_2\rangle$} be an abduction problem, $\Phi$ its translation to first-order logic, and $\Amc = \{A_1(t_1),\ldots,A_k(t_k)\}\subseteq \PIp$.
    As allowed by Lemma\ \ref{lem:posconnection}, let $\Tr_1=(V_1,E_1,v_0,l_1)$ and $sl_{\Tr_1}$ denote an \EL-description tree and associated Skolem labeling s.t.\ $\Amc=\{A(sl_{\Tr_1}(v))\mid A\in l_1(v), v\in V_1\}$ and $\Tmc\models C_1\isa C_{\Tr_1}$.

    For any \EL-description tree $\Tr_2=(V_2,E_2, w_0, l_2)$ with a weak homomorphism $\phi$ from $\Tr_2$ to $\Tr_1$, the following equivalence holds:
		\begin{itemize}
			\item[] $C_{\Tr_2}$ is a $\smand$-minimal concept s.t. $\Tmc\models C_{\Tr_2}\isa C_2$
		\end{itemize}
    if and only if 
			\begin{itemize}
      \item[] there is a Skolem labeling $sl_{\Tr_2}$ for $\Tr_2$ s.t.\
        \begin{itemize}
        \item[] $sl_{\Tr_2}(v) = sl_{\Tr_1}(\phi(v))$ for all $v\in V_{\Tr_2}$, and
        \item[] $\bigvee_{v\in V_2,B\in l_2(v)}\neg \negc{B}(sl_{\Tr_2}(v))\in\PIn$.
        \end{itemize}
			\end{itemize}
			
	\end{lemma}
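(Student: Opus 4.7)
The plan is to leverage Lemma~\ref{lem:homtonegimp} (which already links weak homomorphisms to implicates, not necessarily prime) and isolate the extra content here, namely turning $\smand$-minimality into primality. I treat the two directions separately.

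\textbf{Direction $\Rightarrow$ (minimal implies prime).} Assume $C_{\Tr_2}$ is $\smand$-minimal with $\Tmc \models C_{\Tr_2}\isa C_2$. Lemma~\ref{lem:homtonegimp} gives a Skolem labeling $sl_{\Tr_2}$ with $sl_{\Tr_2}(v)=sl_{\Tr_1}(\phi(v))$ such that $\varphi:=\bigvee_{v\in V_2,B\in l_2(v)}\neg\negc{B}(sl_{\Tr_2}(v))$ is an implicate of $\Phi$. Toward a contradiction, assume $\varphi$ is not prime, so some proper subclause $\varphi'$ obtained by dropping at least one literal is still an implicate. I build an \EL description tree $\Tr_2'=(V_2',E_2',w_0,l_2')$ by taking $l_2'(v)\subseteq l_2(v)$ to contain exactly those $B$ with $\neg\negc{B}(sl_{\Tr_2}(v))$ surviving in $\varphi'$, and then recursively pruning leaves whose label became empty (edges to such leaves are also dropped). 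Since we only remove atomic conjuncts from labels or drop leaf subtrees, $C_{\Tr_2'}\smand C_{\Tr_2}$ strictly. The restriction of $\phi$ to $V_2'$ remains a weak homomorphism into $\Tr_1$, and the induced labeling $sl_{\Tr_2'}(v)=sl_{\Tr_1}(\phi(v))$ produces exactly the clause $\varphi'$ (up to duplicate literals). Applying the $\Leftarrow$ direction of Lemma~\ref{lem:homtonegimp} yields $\Tmc\models C_{\Tr_2'}\isa C_2$, contradicting the $\smand$-minimality of $C_{\Tr_2}$.

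\textbf{Direction $\Leftarrow$ (prime implies minimal).} Assume $\varphi\in\PIn$ under the stated Skolem labeling. Lemma~\ref{lem:homtonegimp} immediately gives $\Tmc\models C_{\Tr_2}\isa C_2$. Suppose for contradiction that $C_{\Tr_2}$ is not $\smand$-minimal: there exists $C'$ with $C'\smand C_{\Tr_2}$, $C'\neq C_{\Tr_2}$, and $\Tmc\models C'\isa C_2$. By Lemma~\ref{lemma:subsethom}, there is an injective $\emptyset$-homomorphism $\iota:\Tr_{C'}\to\Tr_2$ with $l_{C'}(v)\subseteq l_2(\iota(v))$ for every node $v$ of $\Tr_{C'}$, and strict inclusion at some node or at some missing subtree. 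The composition $\phi\circ\iota$ is a weak homomorphism from $\Tr_{C'}$ to $\Tr_1$, so Lemma~\ref{lem:homtonegimp} produces a Skolem labeling $sl_{C'}(v)=sl_{\Tr_1}(\phi(\iota(v)))=sl_{\Tr_2}(\iota(v))$ such that $\varphi':=\bigvee_{v\in V_{C'},B\in l_{C'}(v)}\neg\negc{B}(sl_{C'}(v))$ is an implicate of $\Phi$. Every literal of $\varphi'$ already appears in $\varphi$, and the strict inclusion of labels (or the missing subtree) ensures $\varphi'$ omits at least one literal of $\varphi$, so $\varphi'\models\varphi$ but $\varphi\not\models\varphi'$, contradicting the primality of $\varphi$.

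\textbf{Main obstacle.} The delicate point is the bookkeeping between clause literals and tree labels when $\phi$, or $\phi\circ\iota$, is not injective: distinct nodes mapped to the same Skolem term contribute a single literal in the clause. In Direction $\Rightarrow$, I must be careful that removing a literal $\neg\negc{B}(t)$ corresponds to stripping $B$ from \emph{all} nodes $v$ with $sl_{\Tr_2}(v)=t$, which still yields a strictly $\smand$-smaller tree and a valid weak homomorphism, so Lemma~\ref{lem:homtonegimp} can be re-invoked. In Direction $\Leftarrow$, I similarly need the subclause produced from $\Tr_{C'}$ to be strictly smaller as a set of literals; this is where the injectivity of $\iota$, guaranteed by Lemma~\ref{lemma:subsethom}, is crucial, since it prevents the ``missing'' conjuncts of $C'$ from being accidentally reintroduced by another node collapsing onto the same Skolem term.
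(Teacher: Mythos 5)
Your proof is correct and follows essentially the same route as the paper's: Lemma~\ref{lem:homtonegimp} supplies the entailment/implicate equivalence, and Lemma~\ref{lemma:subsethom} converts $\smand$-(non)minimality into strict literal-set inclusion and hence (non)primality, argued contrapositively in both directions. The one soft spot you flag yourself---whether a conjunct dropped at one node of $\Tr_2$ can be reintroduced in the clause by a \emph{distinct} node sharing the same Skolem term under a non-injective $\phi$ (the injectivity of $\iota$ alone does not rule this out, since the term collapse is caused by $\phi$, not $\iota$)---is glossed over in exactly the same way by the paper's own two-line appeal to Lemma~\ref{lemma:subsethom}, so your argument is no less rigorous than the original.
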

  \begin{proof}
    Thanks to Lemma\ \ref{lem:homtonegimp}, we know that, in the conditions of the lemma, the existence of a $C_{\Tr_2}$ such that $\Tmc\models C_{\Tr_2}\isa C_2$ is equivalent to the existence of a Skolem labeling $sl_{\Tr_2}$ for $\Tr_2$ such that $sl_{\Tr_2}(v) = sl_{\Tr_1}(\phi(v))$ for all $v\in V_{\Tr_2}$, and $\Phi\models\bigvee_{v\in V_2,B\in l_2(v)}\neg \negc{B}(sl_{\Tr_2}(v))$.
    It remains to show the equivalence between the $\smand$-minimality of $C_{\Tr_2}$ and the fact that $\bigvee_{v\in V_2,B\in l_2(v)}\neg \negc{B}(sl_{\Tr_2}(v))\in\PIn$.

    By Lemma\ \ref{lemma:subsethom}, for any two trees $\Tr_2'$ and $\Tr_2''$ and corresponding Skolem labelings $sl_{\Tr_2'}$ and $sl_{\Tr_2''}$ for which there are respective weak homomorphisms $\phi_1$ and $\phi_2$ to $\Tr_1$,
    $C_{\Tr_2'}\smand C_{\Tr_2''}$ if and only if $ \{\negc{B}(sl_{\Tr_2'}(v))\mid v\in V_{2}',B\in l_{2}'(v)\}\subseteq \{ \negc{B}(sl_{\Tr_2''}(v))\mid v\in V_{2}'',B\in l_{2}''(v)\}$.
    Thus it is not possible for $C_{\Tr_2}$ to be $\smand$-minimal if $\bigvee_{v\in V_2,B\in l_2(v)}\neg \negc{B}(sl_{\Tr_2}(v))$ is not prime and vice-versa.
    \qedhere
  \end{proof}

  Theorem\ \ref{corr:constr}, that shows how to construct solutions for an abduction problem from prime implicates of its translation to first-order logic, is a consequence of Lemma \ref{lemma:matchingpiandhom}.
  \Construction*
  \begin{proof}
    Let $\langle \Tmc,\Sigma, C_1\isa C_2 \rangle$ be an abduction problem and $\Phi$ be its first-order translation.
    
    We begin by assuming given a packed connection-minimal hypothesis $\Hmc$.
    Then there exist concepts $D_1$ and $D_2$, and weak homomorphism $\phi$ verifying points\ \ref{it:reach}-\ref{it:hom} of Def.\ \ref{def:abd} while \Hmc verifies point \ref{it:H} of the same definition for these $D_1$, $D_2$ and $\phi$.
    W.l.o.g., we consider that $D_1$ is such that every node in $\Tr_{D_1}$ is in the range of $\phi$.
    Such a $D_1$ can always be obtained from a $D_1$ that has too much nodes by pruning the extra nodes, since they cannot have children that are in the range of $\phi$.
    Since, by Def.\ \ref{def:abd} point\ \ref{it:reach}, $\Tmc\models C_1\isa D_1$, by Lemma\ \ref{lem:gcitopi} there exists a Skolem labeling $sl_1$ for $\Tr_{D_1} = (V_1, E_1, v_0, l_1)$ s.t.\ $\CM(sl_1)\subseteq\PIp$.
    Since $\Hmc$ is packed, $l_1(v)$ is also maximal for each $v$, so that $\CMA(sl_1) = \{A(sl_1(v))\in\PIp\mid v\in V_1\}$.
    Note that $\CMA(sl_1)$ cannot be empty and that there are no two nodes in $\Tr_{D_1}$ with the same Skolem label, otherwise $\CM(sl_1)\subseteq\PIp$ would not hold since this would imply that two occurrences of Skolem terms in $\Phi$ share the same Skolem function, which is forbidden in the standard Skolemization procedure.
    From point\ \ref{it:hom} of Def.\ \ref{def:abd}, we know that $\phi$ is a weak homomorphism from $\Tr_{D_2}$ to $\Tr_{D_1}$ and from point\ \ref{it:reachbw} that $D_2$ is a \smand-minimal concept s.t.\ $\Tmc\models D_2\isa C_2$.
    Hence, by Lemma\ \ref{lemma:matchingpiandhom}, there also exists a Skolem labeling $sl_2$ for $\Tr_{D_2}=(V_2,E_2,w_0,l_2)$ s.t.\ $sl_2(v) = sl_1(\phi(v))$ for all $v\in V_2$ and $\bigvee_{v\in V_2, B\in l_2(v)}\neg\negc{B}(sl_2(v))\in\PIn$.
    Since $\{B(sl_2(v))\mid v\in V_2, B\in l_2(v)\} = \CMA(sl_2)$, we define $\Bmc$ as $\CMA(sl_2)$.
    Our choice of $D_1$ allows us to define $\Amc$ as $\CMA(sl_1)$ since it holds that $sl_2(v) = sl_1(\phi(v))$ and there are no nodes in $V_1$ outside the range of $\phi$.
    Thus $\Amc$ and $\Bmc$ verify the first two points of Th.\ \ref{corr:constr}.
    Let us now consider any concept inclusion in \Hmc.
    It is of the form $\bigsqcap l_1(\phi(w))\isa\bigsqcap l_2(w)$ for some $w\in V_2$ and s.t.\ $\Tmc\not\models \bigsqcap l_1(\phi(w))\isa\bigsqcap l_2(w)$.
    For every $v\in V_1$, consider all $w_1,\ldots,w_k\in V_2$ s.t.\ $\phi(w_1)=\ldots=\phi(w_k)=v$.
    Then, $\Hmc$ contains $\{\bigsqcap l_1(v)\isa\bigsqcap l_2(w_1),\ldots,\bigsqcap l_1(v)\isa\bigsqcap l_2(w_k)\}$ which is equivalent to $\{\bigsqcap l_1(v)\isa(\bigsqcap l_2(w_1))\sqcap\ldots\sqcap (\bigsqcap l_2(w_1))\} = \{C_{\Amc,sl_1(v)}\isa C_{\Bmc,sl_1(v)}\}$ and since $\Tmc\not\models \bigsqcap l_1(v)\isa\bigsqcap l_2(w_i)$ for all $i\in\{1,\ldots,k\}$, also $\Tmc\not\models C_{\Amc,t}\isa C_{\Bmc,t}$ for $t = sl_1(v)$.
    This means in particular that this CI is not a tautology, ensuring that $C_{\Bmc,t}\not\smand C_{\Amc,t}$.
    Thus $\Hmc$ is equivalent to the constructible hypothesis built for $\Amc$ and $\Bmc$ as just defined.

    Now, let us consider that $\Hmc$ is a constructible hypothesis obtained from a given \Amc and \Bmc verifying the constraints from Th.\ \ref{corr:constr}.
    Then \Amc is a subset of $\PIp$, thus, by Lemma \ref{lem:posconnection}, there is a description tree $\Tr_1=(V,E,v_0,l_1)$ and associated Skolem labeling $sl$ s.t.\ $\Amc = \CMA(sl)$ and $\Tmc\models C_1\sqsubseteq C_{\Tr_1}$.
    We define $\Tr_2 = (V,E,v_0,l_2)$, where for all $v \in V$, $l_2(v)=\{B\mid B(sl(v))\in\Bmc\}$ and $\phi$ as the identity over $V$.
    Then $\phi$ is a weak homomorphism from $\Tr_2$ to $\Tr_1$ and $sl$ can also be associated to $\Tr_2$ and it is such that $\left(\bigvee_{v\in V, B\in l_2(v)}\neg\negc{B}(sl(v))\right)\in\PIn$.
    Thus, by Lemma \ref{lemma:matchingpiandhom}, $C_{\Tr_2}$ is a \smand-minimal concept s.t.\ $\Tmc\models C_{\Tr_2}\isa C_2$.
    As seen in the first part of this proof, $sl$ must be injective on $V$ due to its association with $\Tr_1$, and thus,  for all $v\in V$, $C_{\Amc,sl(v)}=\bigsqcap l_1(v)$ and by construction of $\Tr_2$, $C_{\Bmc,sl(v)} = \bigsqcap l_2(v)$.
    If $\Tmc\models\bigsqcap l_1(v)\isa\bigsqcap l_2(v)$, then $\Tmc\models C_{\Amc,sl(v)}\isa C_{\Bmc,sl(v)}$.
    We show that this implies $C_{\Bmc,sl(v)}\smand C_{\Amc,sl(v)}$.
    Consider any $t$ s.t.\ $\Tmc\models C_{\Amc,t}\isa C_{\Bmc,t}$.
    Then by translation, it means that $\Phi\models \neg\pi(C_{\Amc,t})\vee\pi(C_{\Bmc,t})$ and since both concepts do not contain role restrictions, it means in particular that $\Phi\models \bigvee_{A\in C_{\Amc,t}}\neg A(x)\vee B(x)$ for all $B\in C_{\Bmc,t}$.
    Since $A(t)\in\PIp$ for all $A\in C_{\Amc,t}$, $\Phi\models B(t)$ for all $B\in C_{\Bmc,t}$ and since those are atomic ground positive implicates, for all $B\in C_{\Bmc,t}$, $B(t)\in\PIp$.
    Furthermore, by definition of $\Amc$ and $C_{\Amc,t}$, this leads to $B\in C_{\Amc,t}$ for all $B\in C_{\Bmc,t}$, hence $C_{\Bmc,t}\smand C_{\Amc,t}$.
    In the particular case that interests us, it means that $C_{\Bmc,sl(v)}\smand C_{\Amc,sl(v)}$ as wanted.
    Hence $\Hmc$ is a connection-minimal hypothesis.
    It remains only to show that it is packed.
    Any tree $\Tr'$ built from $\Tr_1$ by extending the label of some $v\in V$ must be such that $\CM(sl_{\Tr'})\not\subseteq\PIp$, where $sl_{\Tr'}$ is identical to $sl$ but associated to $\Tr'$, since the labels of $\Tr_1$ are already maximal in that regard.
    Thus, by Lemma \ref{lem:gcitopi}, $\Tmc\not\models C_1\isa C_{\Tr'}$, hence any such $C_{\Tr'}$ cannot be used to create constructible hypotheses, proving $\Hmc$ packed.
	\qedhere
  \end{proof}
  
  As an illustration, consider 
  \begin{eqnarray*}
  \Tmc=\{&C_1\isa \exists r_1.(A\dlAnd B),& \\
  &\exists r_1.C\dlAnd\exists r_1.D\isa C_2&\}.
  \end{eqnarray*}
  The negative prime implicate $\neg \negc{C}(sk_1(sk_0))\lor\neg \negc{D}(sk_1(sk_0))$ corresponds to a tree and associated Skolem labeling as follows:
  $$\Tr_2=(\{v_0,v_1,v_2\}, \{v_0r_1v_1,v_0r_1v_2\},v_0,l_2)$$
  s.t.\ $l_2(v_0)=\emptyset$, $l_2(v_1)=\{C\}$ and $l_2(v_2)=\{D\}$; and $sl_{\Tr_2}(v_0)=\skz$, $sl_{\Tr_2}(v_1)=sl_{\Tr_2}(v_2)=\sk_1(\skz)$ . For $C_{\Tr_2}=\exists r_1.C\dlAnd\exists r_1.D$, and $C_{\Tr_1}=\exists r_1.(A\dlAnd B)$ the set $\{A\dlAnd B\isa C,A\dlAnd B\isa D\}$ is a packed connection-minimal hypothesis and the equivalent constructible hypothesis $\{A\dlAnd B\isa C\dlAnd D\}$ is the one found by applying Th.\ \ref{corr:constr}.

\section{Termination}
\label{app:term}
The proofs of Theorem\ \ref{thm:variable-bound} and\ \ref{the:bound} are detailed in this appendix.

  We first recall the notions used to describe the resolution calculus.
  A substitution is a function mapping variables to terms.
  The result of \emph{applying a substitution} $\sigma$ on a clause $\varphi$ is denoted by $\varphi\sigma$,
  and is the clause obtained by replacing every variable $x$ in $\varphi$ by $\sigma(x)$. A \emph{most general unifier (mgu)} of the atoms $P(\overline{t})$ and $P(\overline{t'})$, is a substitution s.t.\ $P(\overline{t})\sigma=P(\overline{t'})\sigma$, and for any other such substitution $\sigma'$, there exists a
  substitution $\sigma''$ so that $\sigma'=\sigma\circ\sigma''$.
The resolution calculus is made of two rules: resolution and factorization.
  The resolution rule infers from two \emph{premises} of the form $\varphi\vee P(\overline{t})$ and $\varphi'\vee \neg P(\overline{t'})$ the \emph{resolvent} $(\varphi\vee \varphi')\sigma$, given that an mgu $\sigma$ exists for $P(\overline{t})$ and $P(\overline{t'})$.
  The factorization rule infers from a premise of the form $\varphi\vee P(t)\vee P(t')$ the resolvent $(\varphi\vee P(t))\sigma$ and from $\varphi\vee \neg P(t)\vee \neg P(t')$ the resolvent $(\varphi\vee \neg P(t))\sigma$, provided $\sigma$ is the mgu of $P(t)$ and $P(t')$.
  For our purpose, a derivation of a clause $\varphi$ from a set of clauses $\Phi$ is a sequence of inferences where all premises are either in $\Phi$ or the resolvent of an inference occurring earlier in the sequence, and where the last resolvent is $\varphi$ itself.
  A derivation is \emph{linear} when the resolvent of one inference is always a premise of the next inference.

A general observation regarding the clauses that are relevant to this work is that,
due to the shape of clauses in $\pi(\Tmc)$, the sets $\Phi$ and $\Phi_p$ only contain clauses of the following shapes:
\begin{enumerate}[label=\textbf{I\arabic*}]
\item\label{e:left} $C_1(\skz)$,
\item\label{e:right} $\neg \negc{C_2}(\skz)$,
\item\label{e:basic} $\neg A_1(x) \vee A_2(x)$,
\item\label{e:conjunction} $\neg A_1(x) \vee \neg A_2(x) \vee A_3(x)$,
\item\label{e:negative-role} $\neg r(x,y) \vee \neg A_1(y) \vee A_2(x)$,
\item\label{e:skolem-role} $\neg A_1(x) \vee r(x,\sk(x))$, and
\item\label{e:skolem-concept} $\neg A_1(x) \vee A_2(\sk(x))$,
\end{enumerate}
where $A_1$, $A_2$ and $A_3$ are either all original literals or all duplicate literals. 
We abbreviate a ``clause of the form \textbf{I$x$}'' as an ``\textbf{I$x$}-clause'' for $x\in\{1,..,7\}$.
Observe that there is exactly one \ref{e:left}-clause and one \ref{e:right}-clause, both for the same constant $\skz$.
Moreover, for every Skolem function $\sk$ occurring in $\Phi_p$, there is exactly one pair of clauses where one is an \ref{e:skolem-role}-clause and the other an \ref{e:skolem-concept}-clause where a given $\sk\in\NS$ occurs.
We call them the clauses \emph{introducing $\sk$}.
To every Skolem function $\sk$, we associate the atomic concept $A_\sk$ that occurs positively in the \ref{e:skolem-concept}-clause introducing $\sk$.

Relying on $\Phi_p$ allows to derive all ground implicates by increasing term depth, which is possible thanks to the following result. %
\begin{lemma}
  \label{lem:no-negative-role-pip}
  It is not necessary to use \ref{e:negative-role}-clauses to derive $\PIp$ from $\Phi_p$ by resolution.
\end{lemma}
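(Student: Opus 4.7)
The plan is to argue semantically by comparing the minimal Herbrand model of $\Phi_p$ with that of $\Phi_p^{-}:=\Phi_p\setminus\{\textbf{I5}\text{-clauses}\}$. By Lemma~\ref{lem:pipmodel} together with the fact that $\Phi_p$ is equivalent to $\Phi$, the set $\PIp$ coincides with the minimal Herbrand model of both $\Phi$ and $\Phi_p$. Since resolution is complete for deriving positive ground consequences of a Horn theory, it suffices to show that the minimal Herbrand model $M^{-}$ of $\Phi_p^{-}$ equals $\PIp$; derivations of every atom of $\PIp$ from $\Phi_p^{-}$ using only \textbf{I1}, \textbf{I3}, \textbf{I4}, \textbf{I6}, \textbf{I7} clauses then follow by completeness.

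One inclusion, $M^{-}\subseteq\PIp$, is immediate from the monotonicity of forward chaining for Horn theories (removing clauses can only shrink the minimal model). For the converse, I would show that $M^{-}$ satisfies every \textbf{I5}-clause, so that $M^{-}$ is a Herbrand model of $\Phi_p$ and hence contains $\PIp$ by minimality of the latter. Fix an \textbf{I5}-clause $\neg r(x,y)\vee\neg A_1(y)\vee A_2(x)$ and ground terms $t$, $t'$ with $r(t,t'),A_1(t')\in M^{-}$. Since \textbf{I6}-clauses are the only source of role atoms, $t'=\sk(t)$ for some Skolem function $\sk$, and $A_\sk^{-}(t)\in M^{-}$, writing $A_\sk^{-}$ for the common body of the (unique) \textbf{I6}- and \textbf{I7}-clauses introducing $\sk$. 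Because \textbf{I3}- and \textbf{I4}-clauses are single-variable, \textbf{I7} injects exactly $A_\sk$ at $\sk(t)$, \textbf{I6} yields only role atoms, and \textbf{I5}-clauses are excluded from $\Phi_p^{-}$, the set $\{B\mid B(\sk(t))\in M^{-}\}$ is precisely the \textbf{I3}/\textbf{I4}-closure of $\{A_\sk\}$ under $\Phi_p^{-}$, a set that depends on $\sk$ but not on $t$.

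Hence $A_1$ lies in that closure, so $\Phi_p^{-}\models\neg A_\sk(x)\vee A_1(x)$, and since $\Phi\models\Phi_p^{-}$, also $\Phi\models\neg A_\sk(x)\vee A_1(x)$. Composing with the \textbf{I5}-, \textbf{I6}- and \textbf{I7}-clauses of $\Phi$ gives $\Phi\models\neg A_\sk^{-}(x)\vee A_2(x)$, which by presaturation belongs to $\Phi_p$ and therefore to $\Phi_p^{-}$. Combined with $A_\sk^{-}(t)\in M^{-}$, forward chaining yields $A_2(t)\in M^{-}$ as required. The main obstacle is rigorously justifying the $t$-independence of the closure at level $\sk(t)$: this is exactly where the absence of \textbf{I5} in $\Phi_p^{-}$ is essential, as it blocks inferences from deeper Skolem terms from feeding back to $\sk(t)$ in a $t$-specific manner, and is what allows a ground derivation to be lifted to a universal entailment that gets absorbed by the presaturation defining $\Phi_p$.
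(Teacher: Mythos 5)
Your proof is correct and follows essentially the same route as the paper: build the minimal Herbrand model of $\Phi_p$ minus the \textbf{I5}-clauses, show it still satisfies every \textbf{I5}-clause because the composed entailment $\neg A_\sk^-(x)\vee A_2(x)$ is absorbed into $\Phi_p$ by presaturation, and conclude via the universality of $\PIp$ (Lemma~\ref{lem:pipmodel}) that all of $\PIp$ is derivable without \textbf{I5}. Your explicit justification that the set of concepts holding at $\sk(t)$ is the $t$-independent \textbf{I3}/\textbf{I4}-closure of $\{A_\sk\}$ makes precise a step the paper's proof passes over quickly, but it is the same argument.
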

\begin{proof}
  Since $\Phi$ and $\Phi_p$ are equivalent, they have the same prime implicates, that can be derived by resolution from any of them.
  We construct a Herbrand model for $\Phi$ from all clauses in $\Phi_p$ except the \ref{e:negative-role}-clauses.
  Then, by Lemma\ \ref{lem:pipmodel}, all clauses from $\PIp$ will be included in this model and thus derivable by resolution from the restriction of $\Phi_p$ to non-\ref{e:negative-role}-clauses.

  Let $\Imc=\bigcup_i\Imc_i$ for $i\in\mathbb{N}$, such that:

  \begin{itemize}
  \item $\Imc_0=\{C_1(\skz)\}$ and,
  \item given $\Imc_j$,
    \begin{eqnarray*}
      \Imc_{j+1} &=& \Imc_j \cup \{B(t)\mid t\in (D)^{\Imc_j},\, \neg\pi(D,x)\vee B(x)\in\Phi_p\}\\
                 & &  \phantom{\Imc_j}\cup \{B(\sk(t)),r(t,\sk(t))\mid t\in (A)^{\Imc_j},\\
                 & & \qquad\quad\neg\pi(A,x)\vee B(\sk(x))\in \Phi_p,\, \neg\pi(A,x)\vee r(x,\sk(x))\in \Phi_p\}.
    \end{eqnarray*}
  \end{itemize}
  This interpretation is similar to the one used in the proof of Lemma\ \ref{lem:pipmodel}, but uses the clauses in $\Phi_p$, \ref{e:negative-role}-clauses excepted, instead of the clauses in $\Phi$.
  Thus every atom in \Imc can be derived by resolution from the clauses of $\Phi_p$ that are not \ref{e:negative-role}-clauses.

  We show that $\Imc$ is a model of $\Phi$.
  By construction, $\Imc\models C_1(\skz)$ and all \ref{e:left}-, \ref{e:basic}-, \ref{e:conjunction}-, \ref{e:skolem-role}- and \ref{e:skolem-concept}-clauses with original predicates in $\Phi$ since they also occur in $\Phi_p$.
  The clauses with duplicate predicates are also satisfied since \Imc contains no duplicates at all.
  It remains only to show that the \ref{e:negative-role}-clauses in $\Phi$ are true in \Imc. 
  By contradiction, consider that the clause $\varphi=\neg r(x,y)\vee \neg A_1(y)\vee A_2(x)$ is not satisfied by \Imc.
  Then there must exist terms $t$, $t'$ such that $r(t,t')\in\Imc$, $A_1(t')\in\Imc$ but $A_2(t)\notin\Imc$.
  By construction, $t'=\sk(t)$ for some Skolem function $\sk$.
  The only clauses with $\sk$ in $\Phi_p$ are the clauses introducing $\sk$, that we denote by $\varphi_1=\neg A_3(x)\vee r(x,\sk(x))$ and $\varphi_2=\neg A_3(x)\vee A_4(\sk(x))$ for some original atomic concept $A_4$.
  These clauses are the only possible cause for the presence of $r(t,\sk(t))$ and $A_1(\sk(t))$ in $\Imc_{j}$ for some $j\geq 1$, and thus there must be an $i<j$ s.t.\ $A_3(t)\in\Imc_i$.
  The presence of $\varphi_1$, $\varphi_2$ and $\varphi$ in $\Phi$ ensures that $\Phi\models \neg A_3(x)\vee A_2(x)$ and thus that $\neg A_3(x)\vee A_2(x)\in\Phi_p$.
  Combined with the fact that $A_3(t)\in\Imc_i$, it means that $A_2(t)\in\Imc_{i+1}\subseteq\Imc$, a contradiction.
  Thus $\Imc$ is also a model of all clauses of the form \ref{e:negative-role} in $\Phi$, so it is a model of $\Phi$ and it is possible to derive all clauses in $\PIp$ from $\Phi_p$ without using the clauses of the form \ref{e:negative-role}.
  \qedhere
\end{proof}
A direct consequence of this lemma is that, regarding derivations of $\PIp$, we only need to consider those where every inference preserves or increases the depth of terms from premises to conclusion, because the only way to decrease this depth is by using an \ref{e:negative-role}-clause.
This allows us to prove Th.\ \ref{thm:variable-bound}

\ThmVariableBound*
\begin{proof}
    By Th.\ \ref{corr:constr}, it suffices to show that all clauses in $\PIp\cup\PIn$ that contain no binary predicate
    can be derived using only inferences of clauses with at most one variable.
    Since~\ref{e:left} is the only clause containing no negative literals, any clause $\varphi\in\PIp$ must be derived using the \ref{e:left}-clause.
    Moreover \ref{e:negative-role}-clauses are the only ones in the input that would introduce a variable when resolved with a ground clause.
    By Lemma\ \ref{lem:no-negative-role-pip}, we can ignore these clauses to infer $\varphi\in\PIp$.
    Thus, any clause in $\PIp$ can be derived by inferring only ground clauses from $\Phi_p$, which is even more than what \ref{it:onevar} requires.

    For $\varphi'\in\PIn$, Lemma\ \ref{lem:no-negative-role-pip} does not apply.
    In general,  %
    any derivation from $\Phi_p$ of a clause that contains a constant involves $C_1(\skz)$ or $\neg \negc{C_2}(\skz)$ or both,
    and only \ref{e:negative-role}-clauses would introduce a variable into such a derivation.
    Let $\varphi$ be the first clause with a variable that occurs as a resolvent in the derivation of $\varphi'$ from $\Phi_p$, and let $\varphi'$ be without binary predicates, since it must be usable to build a constructible hypothesis following Th.\ \ref{corr:constr}.
    The premises of the inference producing $\varphi$ are a ground clause and an \ref{e:negative-role}-clause, $\neg r(x,y)\vee \neg A_1(y)\vee A_2(x)$.
    We show that any occurrence of a variable in $\varphi$ can be immediately eliminated by another inference, creating a new derivation for $\varphi'$.
    Depending on the literal resolved upon in the \ref{e:negative-role}-clause to obtain $\varphi$ several cases occur.
    \begin{itemize}
    \item This literal cannot be $\neg r(x,y)$, or $\varphi$ would be ground because both $x$ and $y$ would be unified with ground terms.  %
    \item If the literal resolved upon is $A_2(x)$, then $y$ occurs in $\varphi$ as its only variable, in the literals $\neg r(t,y)$ for some ground $t$ and $\neg A_1(y)$.
      The literal $\neg r(t,y)$ is eliminated later in the derivation since $\varphi'$ contains no binary predicate.
      All positive occurrences of $r$ that can be derived are of the form $r(t',\sk(t'))$ for some $\sk$,
      because in the $\Phi_p$, the only positive occurrences of $r$ are found in \ref{e:skolem-role}-clauses. %
      Thus, to obtain a clause in $\PIn$ without roles, we need to eventually unify the variable $y$ with a ground term of the form
      $\sk(t)$ for some $\sk$.
      Since $\Phi_p$ is Horn, we can rearrange any derivation from $\varphi$ to $\varphi'$ so that we first resolve upon $\neg r(t,y)$
      in $\varphi$ with the suitable \ref{e:skolem-role}-clause, i.e., the one introducing the appropriate $\sk$.
      As a result, we obtain another ground clause with no variables, before any further inference is performed if needed.

    \item If the literal resolved upon is  $\neg A_1(y)$, then $x$ occurs in $\varphi$ in the literals $A_2(x)$ and $\neg r(x,t)$, where $t$ is ground.
      The argument unfolds as in the previous case, with the nuance that the considered $\sk$ function is the one s.t.\ $t=\sk(t')$ for some $t'$.
    \end{itemize}
    It follows that a derivation of $\varphi'$ that does not respect \ref{it:onevar} can always be rearranged to eliminate occurrences of variables (and binary literals) as soon as they occur, before the next variable is introduced.
    The rearranged derivation respects \ref{it:onevar}.
    \qed

\end{proof}

  The proof of Th.\ \ref{the:bound} is based on a structure called a \emph{solution tree}, that resembles a description trees, but instead collects in its (multiple) labels information on all the clauses that helped derive the prime implicates needed to build a constructible hypothesis.
A solution tree for a hypothesis $\Hmc$ is defined as
tuple $\tup{\forest,\plab,\nlab}$, which is a tree-shaped labeled graph $\forest=\tup{V,E,\sklab}$ together with two additional labeling functions $\plab$ and $\nlab$.
The leaves $v_1$, $\ldots$, $v_n$ of a solution tree are such that
\begin{itemize}
\item $\neg\nlab(\sklab(v_1))\vee \ldots\vee \nlab(\sklab(v_n))\in\PIn$ up to the repeated occurrence of some literals from different nodes,
\item $\plab(v_i)\in\PIp$ is not empty for all $i\in\{1,\ldots,n\}$, and
\item these are the matching prime implicates used to construct the hypothesis following Th.\ \ref{corr:constr}.%
\end{itemize}
Theorem\ \ref{the:bound} is proved by showing how to reduce a solution tree to a smaller one when the associated constructible hypothesis is not subset-minimal.

Towards proving Th.\ \ref{the:bound}, we step-wise introduce the details of these notions,
and, along the way, we prove their relevant properties. We start with the fragment called the
Skolem tree.

\begin{definition}[Skolem Tree]
A \emph{Skolem tree} is a labeled tree $\forest=\tup{V,E,\sklab}$ where $\sklab$ assigns a Skolem term to every $v\in V$ s.t.\ 
$\sklab(v_0)=\skz$ for the root $v_0\in V$, and for every $\tup{v,v'}\in E$,\footnote{When the role $r$ labeling an edge $vrw$ is irrelevant, we fall back to representing this edge as the pair of nodes $(v,w)$.}
either $\sklab(v)=\sklab(v')$ or $\sklab(v')=\sk(\sklab(v))$ for some Skolem term $\sk$.
\end{definition}
Chains and antichains for Skolem trees are defined as usual, that is, a \emph{chain} is a set of nodes
that occur together on a path, and an \emph{antichain} is a set of nodes such that no node is an
ancestor of another node. \emph{Maximal} chains/antichains are chains/antichains that are maximal w.r.t.\ the subset relation. Given two nodes $v,v'\in V$, we call $v$ an \emph{ancestor} of $v'$ iff there is a path
leading from $v$ to $v'$. This implicitly implies that every node is an ancestor of itself.

\begin{definition}[Positive Labeling]
  \label{def:poslab}
The \emph{positive labeling for $\forest$} is defined as the function
$\plab:V\rightarrow2^{\NC}$ s.t.
for every $v\in V$, $\plab(v)=\{A\mid A(\sklab(v))\in \PIp\}$.
\end{definition}

\begin{lemma}\label{lem:ancestors-left}
  Let $v_1,v_2\in V$ be such that $v_1$ is an ancestor of $v_2$, and $\sklab(v_1)$ be of the form $\sk(t)$ for some Skolem function \sk.
  Then,
  $A_\sk(\sklab(v_1))\in\PIp$ and
  for every $A\in \plab(v_2)$,
  there is a derivation of $A(\sklab(v_2))$ from $A_\sk(\sklab(v_1))$ and the \ref{e:basic}-, \ref{e:conjunction}- and \ref{e:skolem-concept}-clauses in $\Phi_p$.
\end{lemma}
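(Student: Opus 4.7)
The plan is to exploit Lemma~\ref{lem:no-negative-role-pip} so that the derivation of any $A(\sklab(v_2))\in\PIp$ from $\Phi_p$ can be assumed to use only \ref{e:left}-, \ref{e:basic}-, \ref{e:conjunction}- and \ref{e:skolem-concept}-clauses (the \ref{e:skolem-role}-clauses produce only binary positive literals and so cannot contribute to a positive unary unit). Among these, only \ref{e:skolem-concept} deepens the outer Skolem function of the resolvent's term, while \ref{e:basic} and \ref{e:conjunction} preserve it; moreover, since for each Skolem function $\sk$ the unique \ref{e:skolem-concept}-clause available is of the form $\neg A_{\mathrm{pre}}(x)\vee A_\sk(\sk(x))$, the only way to introduce a ground term whose outermost symbol is $\sk$ applied to $t$ into a resolvent is via this clause, and the first atom so produced is $A_\sk(\sk(t))$.

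For the first claim I would pick any $A\in\plab(v_2)$ (the statement is informative precisely when $\plab(v_2)$ is non-empty, which is the typical case for the nodes of a solution tree). Since $v_1$ is an ancestor of $v_2$ with $\sklab(v_1)=\sk(t)$, the term $\sklab(v_2)$ has the form $\sk_k(\dots\sk_1(\sk(t))\dots)$ for some $k\geq 0$. By the preceding observation, any derivation of $A(\sklab(v_2))$ must at some intermediate step introduce an atom whose term is $\sk(t)$, and the first such atom is necessarily $A_\sk(\sk(t))=A_\sk(\sklab(v_1))$. Being a positive ground unit consequence of $\Phi$, this atom is automatically prime---the only strictly stronger clause would be the empty clause, which is not an implicate as $\Phi$ is consistent---so $A_\sk(\sklab(v_1))\in\PIp$.

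For the second claim I would proceed by induction along the path from $v_1$ to $v_2$. The key sub-lemma for the base case ($v_2=v_1$) is that every $B(\sklab(v_1))\in\PIp$ is derivable from $A_\sk(\sklab(v_1))$ using only \ref{e:basic}- and \ref{e:conjunction}-clauses: each atom at term $\sklab(v_1)$ appearing in the original $\Phi_p$-derivation traces back, along the branches that stay at that term, to $A_\sk(\sklab(v_1))$, and since \ref{e:basic} and \ref{e:conjunction} preserve the term no further \ref{e:skolem-concept}-step is required in the tail. For the inductive step, if $\sklab(v_2)=\sk'(\sklab(v_2'))$ for the parent $v_2'$ of $v_2$ in the Skolem tree, the induction hypothesis supplies the derivation of the prerequisite $A_{\mathrm{pre}}'(\sklab(v_2'))$ needed to fire the \ref{e:skolem-concept}-clause for $\sk'$ and produce $A_{\sk'}(\sklab(v_2))$; the base-case argument, relocated to $v_2$, then completes the derivation of $A(\sklab(v_2))$ using \ref{e:basic}- and \ref{e:conjunction}-steps at term $\sklab(v_2)$.

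The main technical obstacle I foresee is the self-containment argument in the base case: once $A_\sk(\sklab(v_1))$ is on hand, every auxiliary atom at term $\sklab(v_1)$ required by an intermediate \ref{e:conjunction}-step must itself be re-derivable from $A_\sk(\sklab(v_1))$ using only the two term-preserving rules. This closes because every such auxiliary atom is itself in $\PIp$ (being a positive ground unit consequence of $\Phi$), so by the first-occurrence argument its derivation also passes through $A_\sk(\sklab(v_1))$ and stays at that term afterwards; iterating this observation shows that all the necessary auxiliaries can be produced from $A_\sk(\sklab(v_1))$ alone, which justifies the base case and hence closes the induction.
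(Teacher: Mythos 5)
Your proposal is correct and follows essentially the same route as the paper's proof: both restrict attention (via Lemma~\ref{lem:no-negative-role-pip} and the Horn structure) to derivations using only \ref{e:basic}-, \ref{e:conjunction}- and \ref{e:skolem-concept}-clauses, observe that only \ref{e:skolem-concept}-steps deepen the Skolem term so that $A_\sk(\sk(t))$ is the unique gateway atom at term $\sk(t)$, and then induct along the path from $v_1$ to $v_2$. Your closing remark about re-deriving the auxiliary atoms needed for \ref{e:conjunction}-steps corresponds to the paper's derivation-surgery argument and closes the same gap.
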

\begin{proof}
  We first show that for every $A(\sk(t))\in\PIp$, also $A_\sk(\sk(t))\in\PIp$,
  and that $A(\sk(t))$ can be derived from $A_\sk(\sk(t))$ and \ref{e:basic}- and \ref{e:conjunction}-clauses of $\Phi_p$.
  Afterward, we prove that if $\plab(v_2)$ is not empty, then $\plab(v_1)$ is also not empty for any ancestor $v_1$ of $v_2$. 
  The lemma then follows by induction.

  We have established in the proof of Th.\ \ref{thm:variable-bound} that $A(\sk(t))$ can be derived from $\Phi_p$ such that every resolvent of the derivation is ground.
  Moreover, by Lemma\ \ref{lem:no-negative-role-pip}, \ref{e:negative-role}-clauses do not have to be involved in the derivation.
  \ref{e:skolem-role}-clauses can then also be ignored because they introduce positive occurrences of binary literals and only \ref{e:negative-role}-clauses can be used to resolve upon them.
  We can also rule out the \ref{e:right}-clause $\neg \negc{C_2}(\skz)$ because it is a duplicate, that can only derive negative duplicate clauses in $\Phi_p$ if \ref{e:negative-role}- and \ref{e:skolem-role}-clauses are not used, because $\Phi_p$ is Horn.
  This leaves $C_1(\skz)$ and the \ref{e:basic}-, \ref{e:conjunction}- and \ref{e:skolem-concept}-clauses as the ones that are used to derive $A(\sk(t))$ from $\Phi_p$.
  Moreover, since $\Phi_p$ is Horn, linear resolution can be used to derive $A(\sk(t))$, thus inferences between two resolvents are not necessary\ \cite{AndersonBledsoe70}. 
  Inferences in such a derivation can only preserve the Skolem term occurring in the ground premise when resolving with an \ref{e:basic}- or an \ref{e:conjunction}-clause, and increase the depth of the term in the resolvent when resolving with an \ref{e:skolem-concept}-clause.
  Thus $A(\sk(t))$ can only be derived after $A_\sk(\sk(t))$ has been introduced by the \ref{e:skolem-concept}-clause introducing $\sk$, and from $A_\sk(\sk(t))$ only \ref{e:basic}- or \ref{e:conjunction}-clauses can be used to derive $A(\sk(t))$.
  Let us consider the (linear) derivation of $A(\sk(t))$ and remove from it all the inferences on \ref{e:basic}- and  \ref{e:conjunction}-clauses upon literals where $\sk(t)$ occurs.
  The only literals that remain in the derived clause are copies of $A_\sk(\sk(t))$ because it is the only literal with the term $\sk(t)$ that can be derived, and because all other literals are resolved upon in parts of the derivation that have not been removed.
  Thus, it is enough to append a few factorization inferences at the end of this derivation, if at all needed, to derive $A_\sk(\sk(t))$.
  Hence $A_\sk(\sk(t))\in\PIp$.
  The inferences from the removed parts of the derivation of $A(\sk(t))$, introducing only \ref{e:basic}- and \ref{e:conjunction}-clauses can be used together with $A_\sk(\sk(t))$ to construct a derivation of $A(\sk(t))$.

  To prove that ancestors of $v_2$ have a non-empty positive label if $v_2$ has a non-empty positive label, let us consider the case when $v_1$ is the direct parent of $v_2$ (the case when $v_1=v_2$ is trivial). 
  Let us consider once again the part of the linear derivation of $A(\sk(t))$ from which we created the derivation of $A_\sk(\sk(t))$.
  If we remove from it the inference(s) introducing $\sk$, the clause that is derived must contain only $A'(t)$ literals, where $\neg A'(t)\vee A_\sk(\sk(t))$ is the \ref{e:skolem-concept} clause that introduced $\sk$.
  These $A'(t)$ literals can be factorized following this derivation to obtain a derivation of $A'(t)$ from $\Phi_p$.
  Thus $v_1$ has a non-empty positive label and this result also holds for any ancestor of $v_2$ by induction.

  Finally, we have proven that for $v_1$ s.t.\ $\sklab(v_1)=\sk'(t')$, there is a derivation of any $A'(\sk'(t'))\in\plab(v_1)$ from $A_{\sk'}(\sk'(t'))$ and the \ref{e:basic}- and \ref{e:conjunction}-clauses in $\Phi_p$.
  Moreover, in the previous paragraph, assuming $v_1$ is the parent of $v_2$, we have seen that there is at least some $A'(\sk'(t'))\in\plab(v_1)$ from which $A_{\sk}(\sk(\sk'(t')))$ can be inferred by using the \ref{e:skolem-concept}-clause introducing $\sk$ where $\sk(\sk'(t'))=\sklab(v_2)$.
  Thus $A(\sklab(v_2))$ can be derived from $A_{\sk'}(\sklab(v_1))$ and the \ref{e:basic}-, \ref{e:conjunction}- and \ref{e:skolem-concept}-clauses in $\Phi_p$, and this result can be extended to any ancestor of $v_2$ (except the root $v_0$, for which $C_1(\skz)$ could be used instead of $A_{\sk'}(\sklab(v_0))$, that does not exist).
  \qed
\end{proof}
The following lemma is central to the proof of the theorem.
It bounds the range of the positive labeling by ensuring that each path from the root to a leaf of the tree that is longer than the size of $\NS$ contains two nodes with the same positive labeling.
\begin{lemma}
  \label{lem:unique-left}
 For any two nodes $v_1,v_2\in V$ s.t. $\sklab(v_1)=\sk(t_1)$ and $\sklab(v_2)=\sk(t_2)$ for the same Skolem function $\sk$ and some terms $t_1$ and $t_2$, if $\plab(v_1)$ and $\plab(v_2)$ are not empty, then
 $\plab(v_1)=\plab(v_2)$.
\end{lemma}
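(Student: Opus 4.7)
The plan is to reduce the claim to the observation that, once we fix the Skolem head $\sk$, the derivability of $A(\sk(t))$ from $\Phi_p$ (restricted to avoid \ref{e:negative-role}-clauses, as allowed by Lemma~\ref{lem:no-negative-role-pip}) is parametric in the argument~$t$. That is, the dependence on $t$ disappears once we have $A_\sk(\sk(t)) \in \PIp$ as a starting point.

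First, I would apply the argument from the proof of Lemma~\ref{lem:ancestors-left} to each of $v_1$ and $v_2$ individually: since $\plab(v_1)$ is non-empty, there is some $A \in \plab(v_1)$ with $A(\sk(t_1)) \in \PIp$, and the analysis of linear ground derivations from $\Phi_p$ (after ruling out \ref{e:negative-role}- and \ref{e:skolem-role}-clauses by Lemma~\ref{lem:no-negative-role-pip} and Horn-ness) shows that $A_\sk(\sk(t_1))$ must be derived as an intermediate step. Hence $A_\sk(\sk(t_1)) \in \PIp$, and by the same reasoning $A_\sk(\sk(t_2)) \in \PIp$.

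Second, I would establish the key schematic claim: $\plab(v_i)$ coincides with the set $S_\sk$ of atomic concepts $A$ for which the open clause $A(\sk(x))$ admits a derivation from $A_\sk(\sk(x))$ using only \ref{e:basic}- and \ref{e:conjunction}-clauses of $\Phi_p$ all instantiated at the single argument $\sk(x)$. The ``$\supseteq$'' direction is immediate: instantiating $x := t_i$ turns such a derivation into a ground derivation of $A(\sk(t_i))$ from $A_\sk(\sk(t_i)) \in \PIp$, so $A(\sk(t_i)) \in \PIp$ and hence $A \in \plab(v_i)$. The ``$\subseteq$'' direction is the delicate part, and is where I expect most of the work: given a ground derivation of $A(\sk(t_i))$, I would argue by induction on its length that every literal in the derivation whose term is $\sk(t_i)$ is itself derivable from $A_\sk(\sk(t_i))$ in the restricted way. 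The only clauses that produce a literal at term $\sk(t_i)$ from premises also at that term are \ref{e:basic}- and \ref{e:conjunction}-clauses (since \ref{e:skolem-concept} goes strictly deeper, \ref{e:negative-role} has been excluded, \ref{e:skolem-role} would introduce binary atoms, and $C_1(\skz)$ has a distinct term). A \ref{e:conjunction}-inference needs two premises at $\sk(t_i)$, which by the induction hypothesis are themselves derivable from $A_\sk(\sk(t_i))$ via the restricted clauses. Abstracting $t_i$ to a variable~$x$ throughout yields a derivation witnessing membership in $S_\sk$.

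The main obstacle is precisely the step above: making sure no useful derivation of $A(\sk(t_i))$ sneaks in a detour through deeper Skolem terms and returns to the term $\sk(t_i)$. This is exactly what Lemma~\ref{lem:no-negative-role-pip} prevents, since the only way to decrease term depth in a resolvent is through an \ref{e:negative-role}-clause, and those may be dispensed with when deriving elements of $\PIp$. Once this schematic characterization is in hand, the lemma follows at once: $S_\sk$ depends only on $\sk$ and $\Phi_p$, not on the subterm, so $\plab(v_1) = S_\sk = \plab(v_2)$. \qed
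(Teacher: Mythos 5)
Your proposal is correct and follows essentially the same route as the paper: both first invoke Lemma~\ref{lem:ancestors-left} to get $A_\sk(\sk(t_i))\in\PIp$, observe that every $A\in\plab(v_i)$ is then derivable from $A_\sk(\sk(t_i))$ using only the depth-preserving \ref{e:basic}- and \ref{e:conjunction}-clauses, and transfer that derivation to the other term (the paper phrases this as ``mirroring'' the ground derivation at $t_1$ into one at $t_2$, which is the same as your abstraction to a schematic set $S_\sk$ followed by instantiation).
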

\begin{proof}
  By Lemma~\ref{lem:ancestors-left}, if $\plab(v_1)$ and $\plab(v_2)$ are not empty, then $A_\sk(\sklab(v_1))\in\plab(v_1)$ and $A_\sk(\sklab(v_2))\in\plab(v_2)$.
  Moreover, for every $A\in \plab(v_1)$, $A(\sklab(v_1))$ can be derived using only $A_\sk(\sklab(v_1))$ and \ref{e:basic}- and \ref{e:conjunction}-clauses in $\Phi_p$.
  The \ref{e:skolem-concept}-clauses do not intervene because they can only increase the depth of terms.

  Any such derivation can be mirrored from $A_\sk(\sklab(v_2))$ and the \ref{e:basic}- and \ref{e:conjunction}-clauses in $\Phi_p$ to derive $A(\sklab(v_2))$, thus $\plab(v_1)\subseteq \plab(v_2)$.
  The reverse inclusion $\plab(v_2)\subseteq \plab(v_1)$ is proved similarly.
\end{proof}

\begin{lemma}
  \label{lem:role-rep}
  If $r(t,\sk(t))\in\PIp$ and $\sk(t)$ has a subterm of the form $\sk(t')$, then $r(t',\sk(t'))\in\PIp$.
\end{lemma}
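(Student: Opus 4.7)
The plan is to exploit the characterization of $\PIp$ as the minimal Herbrand model $\Imc = \bigcup_{j \in \mathbb{N}} \Imc_j$ of $\Phi$ established in Lemma~\ref{lem:pipmodel}. The key observation is that in the inductive construction of $\Imc$, the two clauses $\neg A(x) \vee r(x, \sk(x))$ and $\neg A(x) \vee A_\sk(\sk(x))$ that jointly introduce a given Skolem function $\sk$ fire together in the second set of the definition of $\Imc_{j+1}$ from $\Imc_j$: whenever $\sk(u)$ is first introduced into the model, the role atom $r(u, \sk(u))$ is added at the very same step.

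To turn this into a proof, I would establish by induction on $j$ the following invariant: for every atom of $\Imc_j$ and every Skolem subterm $\sk'(u)$ appearing in it, the atom $r'(u, \sk'(u))$ lies in $\Imc_j$, where $r'$ is the role paired with $\sk'$ by the clauses introducing $\sk'$. The base case $\Imc_0 = \{C_1(\skz)\}$ is vacuous since $\skz$ has no Skolem subterms. For the inductive step, I would check the three generating rules used to obtain $\Imc_{j+1}$ from $\Imc_j$: the first and third rules add atoms whose argument $t$ already appears in some atom of $\Imc_j$, so the invariant is inherited from $\Imc_j$; the second rule introduces a fresh subterm $\sk(t)$ but simultaneously adds $r(t, \sk(t))$ to $\Imc_{j+1}$, and the invariant for subterms strictly inside $t$ is again inherited from $\Imc_j$.

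Applying the invariant to the statement of the lemma: since $r(t, \sk(t)) \in \PIp = \Imc$, this atom belongs to $\Imc_j$ for some $j$, so the invariant applies to it. If $\sk(t') = \sk(t)$, then $t' = t$ and the conclusion is immediate; otherwise $\sk(t')$ is a proper subterm of $\sk(t)$, hence a subterm of $t$, and the invariant yields $r(t', \sk(t')) \in \Imc_j \subseteq \PIp$ (the role paired with $\sk$ is exactly $r$ here). No separate primality argument is required, since $\PIp$ coincides with the minimal Herbrand model $\Imc$ by Lemma~\ref{lem:pipmodel}.

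The main conceptual step is identifying this invariant tying Skolem subterm occurrences to their paired role atoms; once formulated, its proof is a routine case analysis on the three construction rules, and its specialization to the lemma is immediate.
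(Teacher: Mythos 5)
Your proof is correct, but it takes a genuinely different route from the paper's. The paper argues proof-theoretically: from a linear resolution derivation of $r(t,\sk(t))$ it extracts $A_\sk(\sk(t))\in\PIp$ by swapping the \ref{e:skolem-role}-clause introducing $\sk$ for the corresponding \ref{e:skolem-concept}-clause, then invokes Lemma~\ref{lem:unique-left} (any two nodes whose Skolem labels are headed by the same $\sk$ and whose positive labels are non-empty share the same positive label) to transfer $A_\sk$ from $\sk(t)$ to the subterm $\sk(t')$, and finally reverses the clause swap to recover a derivation of $r(t',\sk(t'))$. You instead work semantically with the fixpoint construction $\Imc=\bigcup_j\Imc_j$ from the proof of Lemma~\ref{lem:pipmodel}, observing that the two clauses introducing $\sk$ fire simultaneously in the second generation rule, so the invariant that every Skolem subterm $\sk'(u)$ occurring in $\Imc_j$ is accompanied by its paired role atom $r'(u,\sk'(u))$ in $\Imc_j$ is preserved through all three rules; since each Skolem function is introduced by exactly one \ref{e:skolem-role}-clause, the paired role is necessarily $r$, and the identification $\PIp=\Imc$ (established inside the proof of Lemma~\ref{lem:pipmodel} and reused by the paper itself in Lemma~\ref{lem:posconnection}) closes the argument. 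Your version is more elementary and self-contained, needing neither the derivation-rearrangement arguments nor Lemma~\ref{lem:unique-left}; the paper's version instead reuses the Skolem-tree and positive-labeling machinery that it must develop anyway for the depth bound of Theorem~\ref{the:bound}, so in context neither approach incurs wasted effort.
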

\begin{proof}
  If $r(t,\sk(t))\in\PIp$, then $r$ occurs in the \ref{e:skolem-role}-clause introducing $\sk$, thus any (linear) derivation of $r(t,\sk(t))$ can be transformed into a derivation of $A_\sk(\sk(t))$ by replacing this \ref{e:skolem-role}-clause with the \ref{e:skolem-concept}-clause introducing $\sk$ when resolving upon $A_1(t)$ (remember it is possible to ensure that the other premise is ground).
  Thus $A_\sk(\sk(t))\in\PIp$.

  In a Skolem tree with a node $v_2$ s.t.\ $\sklab(v_2)=\sk(t)$ for the $t$ and $\sk$ from the previous paragraph, there must be an ancestor $v_1$ of $v_2$ s.t.\ $\sklab(v_1)=\sk(t')$ since it is a subterm of $\sk(t)$.
  Thus by Lemma\ \ref{lem:unique-left}, $A_\sk(\sk(t'))\in\PIp$.
  Moreover, a (linear) derivation of $A_\sk(\sk(t'))$ from $\Phi_p$ can be turned into a derivation of $r(t',\sk(t'))$ by applying the reverse transformation as in the previous paragraph, thus $r(t',\sk(t'))\in\PIp$.
  \qed
\end{proof}

The last remaining ingredient to obtain the solution tree is the negative labeling.

\begin{definition}[Negative Labeling]
  \label{def:neglab}
A function $\nlab:V\rightarrow\{\negc{A}\mid A\in\NC\}$ is called \emph{negative labeling for $\forest$} and $\Bmc$ iff:
\begin{itemize}
    \item The root $v_0$ of \forest is s.t.\ $\nlab(v_0)=C_2$ and $\sklab(v_0)=\skz$. %
    \item
      For all nodes $v$ in \forest that are not leaves, there is a derivation of $\neg\negc{B_1}(\sklab(v_1))\vee\ldots\vee \neg\negc{B_n}(\sklab(v_n))$ from $\neg\negc{B}(\sklab(v))$, the set $\{r(t,\sk(t))\in\PIp\mid t\in\gterms{\skz}, \sk\in\NS\}$ and the non-ground clauses in $\Phi_p$, where $B=\nlab(v)$, $B_i=\nlab(v_i)$ and $\{v_1,\ldots,v_n\}\subseteq V$ is the set of children of $v$.
\end{itemize}
\end{definition}
Given a Skolem tree \forest with leaves $v_1$,\ldots, $v_n$, and a negative labeling $\nlab$, we denote by \impfrmtr the clause $\neg\negc{B_1}(\sk(v_1))\vee\ldots\vee \neg\negc{B_n}(\sk(v_n))$ where $B_i=\nlab(v_i)$ for all $i\in\{1,\ldots,n\}$.
\begin{lemma}
  \label{lem:derivable}
If a Skolem tree $\forest$ has a negative labeling $\nlab$, then every maximal antichain $\{v_1,\ldots,v_m\}$ in \forest corresponds to a clause $\neg \negc{B_1}(t_1)\vee\ldots\vee\neg \negc{B_m}(t_m)$ that can be derived from $\Phi_p$.
\end{lemma}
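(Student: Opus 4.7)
The plan is to proceed by induction on the cardinality of the subtree $T_A\subseteq V$ spanned by the antichain $A=\{v_1,\ldots,v_m\}$ together with all of its ancestors in $\forest$. This is a natural induction because the negative-labeling condition of Def.~\ref{def:neglab} gives exactly a local mechanism for expanding a single node $v$ of $T_A$ into its children, and iterating this mechanism from the root traces out a growing spanning subtree whose ``frontier'' is the antichain.

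For the base case, $A=\{v_0\}$, so $T_A=\{v_0\}$, and the clause to derive is $\neg\negc{C_2}(\skz)$, which is the \ref{e:right}-clause and therefore belongs to $\Phi_p$. For the inductive step, I would first use maximality to show that every child of every internal node of $T_A$ has a descendant in $A$; otherwise such a child could be added to $A$ without destroying the antichain property, contradicting maximality. This lets me pick a ``deepest'' node $v\in T_A\setminus A$, all of whose $T_A$-children are actually in $A$. Contracting those children back into $v$ yields a maximal antichain $A'$ with $|T_{A'}|=|T_A|-1$, so by the induction hypothesis the disjunction $\varphi_{A'}$ associated with $A'$ is derivable from $\Phi_p$.

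It then remains to splice the local expansion at $v$ into this derivation. By Def.~\ref{def:neglab}, there is a derivation of $\bigvee_{i=1}^{n}\neg\negc{B_i}(\sklab(v_i))$, with $B_i=\nlab(v_i)$ and $v_1,\ldots,v_n$ the children of $v$, from the single negative premise $\neg\negc{B}(\sklab(v))$ where $B=\nlab(v)$, the ground positive role atoms $\{r(t,\sk(t))\in\PIp\}$, and the non-ground clauses of $\Phi_p$. The non-ground clauses are already in $\Phi_p$, and each ground role atom drawn from $\PIp$ is itself derivable from $\Phi_p$ by the very definition of prime implicates; substituting in these subderivations rebases the whole expansion entirely on $\Phi_p$. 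Resolving the resulting derivation against $\varphi_{A'}$ on the literal $\neg\negc{B}(\sklab(v))$ then produces a derivation of the clause associated with $A$ from $\Phi_p$ alone.

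The main obstacle I anticipate is not the splicing step but the combinatorial bookkeeping of the maximality argument: one must verify carefully that the contracted antichain $A'$ remains a \emph{maximal} antichain, and that the chosen $v$ has all of its $T_A$-children in $A$ (rather than only in deeper layers), so that a single local application of Def.~\ref{def:neglab} bridges $\varphi_{A'}$ and the target clause. Once that invariant is in place, the combination of derivations is routine.
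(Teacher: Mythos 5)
Your proof is correct and takes essentially the same route as the paper, which disposes of this lemma in a single line (``follows directly by induction starting from the root''): your induction on the subtree spanned by the antichain and its ancestors, with the frontier expanded (or, read backwards, contracted at a deepest internal node) via the local derivation guaranteed by Def.~\ref{def:neglab} and the splicing of that derivation into the clause for the smaller antichain, is exactly the intended argument made explicit. The only blemish is a trivial arithmetic slip: contracting $v$ removes all $n\geq 1$ of its children, so $|T_{A'}|=|T_A|-n$ rather than $|T_A|-1$, which changes nothing since the induction only needs strict decrease of the cardinality.
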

This follows directly by induction starting from the root and it holds in particular for \impfrmtr.
Note that any such clause $\neg \negc{B_1}(t_1)\vee\ldots\vee\neg \negc{B_m}(t_m)$ is a ground negative implicate, not necessarily prime.
Conversely, it is not hard to see that for any negative ground implicate of $\Phi_p$ 
of the form $\neg \negc{B_1(t_1)}\vee\ldots\vee\neg \negc{B_n(t_n)}$, we can construct a Skolem tree and a negative labeling
s.t.\ \impfrmtr is the negative ground implicate, up to the repetition of literals.
This formulation allows the same $B_i$ to be represented by several leaves, which is necessary because the tree captures resolution inferences in the derivation but not factorization inferences.

This Skolem tree can be constructed together with the negative labeling $\nlab$ by following the derivation from $\neg \negc{C_2}(\skz)$ to $\neg \negc{B_1}(t_1)\vee\ldots\vee\neg \negc{B_n}(t_n)$ in $\Phi_p$. Specifically, we note that:
\begin{itemize}
\item Such a derivation must exist, because $\Phi_p$ is Horn and $\neg \negc{C_2}(\skz)$ is the only negative clause and thus every derivation of a negative ground clause must use this clause.
 \item Moreover, starting from $\Phi_p\cup\PIp$ all resolvents in a derivation of a negative ground clause can be negative clauses and only $\{r(t,\sk(t))\in\PIp\mid \sk\in\NS, t\in\gterms{\skz}\}$ is needed because the other positive prime implicates contain original literals that are not needed to derive a clause with only duplicate literals. %
   Such derivations can be linear.
 \item Following the argument used in the proof for Th.\ \ref{thm:variable-bound}, we can rearrange any linear derivation of a ground negative clause so that variables and binary literals are eliminated as soon as they are introduced.
   This step is done by resolving first a ground clause $\varphi$ and an \ref{e:negative-role}-clause $\neg r(x,y)\vee\neg A_1(y)\vee A_2(x)$, and then resolving an \ref{e:skolem-role}-clause $\neg A_1'(x')\vee r(x',\sk(x'))$ and the resolvent of the previous step.
   These two steps amount to replacing the literal $\neg A_2(t)$ in $\varphi$ by $\neg A_1'(t)\vee\neg A_1(\sk(t))$.
\end{itemize}
We thus obtain the following lemma.
\begin{lemma}\label{lem:right-labeling}
  For any $\Bmc$ s.t. $\bigvee_{B(t)\in\Bmc}\neg \negc{B}(t)\in\PIn$, we can construct a Skolem tree $\forest$ that has a negative labeling $\nlab$ s.t.\ for every $B(t)\in\Bmc$, $\sklab(v)=t$ and $\nlab(v)=B$ for a leaf $v$ of $\forest$, and %
for all leaves $v$ in \forest, $\nlab(v)(\sklab(v))\in\Bmc$.
\end{lemma}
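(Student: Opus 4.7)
The plan is to construct the Skolem tree $\forest$ and the negative labeling $\nlab$ by mirroring a normalized linear resolution derivation of $\impfrmtr$ from $\Phi_p$, using the three observations stated immediately before the lemma. Since $\bigvee_{B(t)\in\Bmc}\neg\negc{B}(t)\in\PIn$, such a derivation $\pi$ exists starting from $\neg\negc{C_2}(\skz)$, using only the non-ground clauses of $\Phi_p$ together with atoms $r(t,\sk(t))\in\PIp$ (the only positive prime implicates needed, since the derived clause has only duplicate unary literals), with every intermediate resolvent a ground negative clause, and with every use of an \ref{e:negative-role}-clause immediately followed by resolution against the \ref{e:skolem-role}-clause introducing the relevant $\sk$.

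After this normalization, each atomic step of $\pi$ rewrites a single literal $\neg\negc{B}(t)$ of the current resolvent in one of three ways: into $\neg\negc{A}(t)$ via an \ref{e:basic}-clause, into $\neg\negc{A_1}(t)\vee\neg\negc{A_2}(t)$ via an \ref{e:conjunction}-clause, or into $\neg\negc{A_1'}(t)\vee\neg\negc{A}(\sk(t))$ via a composite \ref{e:negative-role}+\ref{e:skolem-role} step, provided $r(t,\sk(t))\in\PIp$. I would then build $\forest$ inductively in lockstep with $\pi$. Initialise with a single root $v_0$ carrying $\sklab(v_0)=\skz$ and $\nlab(v_0)=C_2$, matching the starting clause $\neg\negc{C_2}(\skz)$. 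At each step of $\pi$ rewriting $\neg\negc{B}(t)$, locate a leaf $v$ with $\sklab(v)=t$ and $\nlab(v)=B$, and attach one or two children according to the rewrite pattern, using an $r$-labelled edge and Skolem term $\sk(t)$ in the composite case. By construction, the one-step resolvent produced at that step is precisely the clause obtained by replacing $\neg\negc{\nlab(v)}(\sklab(v))$ with $\bigvee_{i}\neg\negc{\nlab(v_i)}(\sklab(v_i))$, so the local condition of Def.\ \ref{def:neglab} holds at $v$.

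When $\pi$ terminates, the leaves of $\forest$ correspond (up to repetition of identical literals) to the literal occurrences in $\impfrmtr$, giving both conditions on leaves claimed in the lemma. The main obstacle will be to justify that the preparatory rearrangement of the derivation can always be carried out as described, and in particular that each composite step uses an atom $r(t,\sk(t))$ that really lies in $\PIp$. This is essentially delivered by the third bullet preceding the lemma and by the variable-elimination argument already used in the proof of Th.\ \ref{thm:variable-bound}: any stray variable or binary literal in an intermediate clause must be resolved away immediately, and the only way to do so while keeping every resolvent ground is to match the \ref{e:negative-role}-clause with an \ref{e:skolem-role}-clause whose positive instance is a prime implicate. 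Once this is in place, the remainder is mechanical bookkeeping tying each resolution step to a tree expansion.
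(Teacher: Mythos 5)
Your proposal is correct and follows essentially the same route as the paper: the paper also establishes the lemma by taking a linear derivation of the negative prime implicate from $\neg\negc{C_2}(\skz)$, normalising it so that only negative ground resolvents appear (using only the non-ground clauses of $\Phi_p$ and the role atoms $r(t,\sk(t))\in\PIp$, with each \ref{e:negative-role}-step immediately followed by the matching \ref{e:skolem-role}-step), and then growing the Skolem tree and negative labeling in lockstep with these rewrites, accepting repeated leaves because factorization is not reflected in the tree. No gap to report.
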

Combining all three labels, we obtain a solution tree.
By reading off the three labels on the leaves of a solution tree, we obtain a connection-minimal hypothesis after
Th.\ \ref{corr:constr}, when the positive labels of all leaves are not empty.
\begin{definition}
  A \emph{solution tree} is a tuple $\tup{\forest,\plab,\nlab}$ where
  \begin{itemize}
  \item $\forest = \tup{V,E,\sklab}$ is a Skolem tree for the terms from which some constructible hypothesis \Hmc is defined,
  \item $\plab$ is a positive labeling for \forest such that all nodes in \forest have a non-empty positive label, and
  \item $\nlab$ is a negative labeling for \forest s.t.\ $\impfrmtr\in\PIn$ modulo the repetition of literals.
  \end{itemize}
  The \emph{solution} of the tree is a TBox equivalent to \Hmc, which is the set $\{\bigsqcap \plab(v)\sqsubseteq \nlab(v)\mid v\text{ is a leaf of }\forest\}$.
\end{definition}
Note that the CIs in the solution of the tree use only one atomic concept on the right-hand side, while the equivalent \Hmc may contain a conjunction.
Moreover, removing all tautologies from this solution results in a packed connection-minimal hypothesis.

\begin{lemma}\label{lem:pumping}
  Let $\tup{\forest,\plab,\nlab}$ be a solution tree, where $\forest=(V,E,\sklab)$ and $v_1$, $v_2\in V$ be such that $v_1$ is an ancestor of $v_2$, $\sklab(v_1)=\sk(t)$ and $\sklab(v_2)=\sk(t')$ for some $\sk$, $t$ and $t'$, %
  and $\nlab(v_1)=\nlab(v_2)$. Let $\forest'=\tup{V',E',\sklab'}$ the result of replacing in $\forest$ the subtree under $v_1$ by the subtree under $v_2$, adapting the Skolem labeling $\sklab$ to $\sklab'$ appropriately, and let $\plab'$ and $\nlab'$ be $\plab$ and $\nlab$ restricted to $V'$. Then, $(\forest',\plab',\nlab')$ is also a solution tree.
\end{lemma}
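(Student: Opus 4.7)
The plan is to verify each of the three defining conditions of a solution tree in turn: that $\forest'$ is a Skolem tree all of whose nodes carry non-empty positive labels under $\plab'$, that $\nlab'$ is a valid negative labeling for $\forest'$, and finally that $\impfrmtrp\in\PIn$ up to repetition of literals. Throughout, let $T$ denote the subtree of $\forest$ rooted at $v_2$. The tree $\forest'$ is obtained by plugging $T$ in place of $v_1$'s subtree, and the adapted Skolem labeling sets $\sklab'(w)=s(\sk(t))$ whenever $w\in T$ originally satisfied $\sklab(w)=s(\sk(t'))$ for some term context $s$. In particular the new root of $T$ (namely $v_2$ itself) receives label $\sklab'(v_2)=\sk(t)=\sklab(v_1)$, which matches what $v_1$'s parent expects on its outgoing edge.

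For the first condition, the edge conditions of a Skolem tree inside $T$ are preserved because the substitution of $\sk(t')$ by $\sk(t)$ is applied uniformly to all labels. For non-emptiness of $\plab'(w)$ I would first use Lemma~\ref{lem:unique-left} to conclude $\plab(v_1)=\plab(v_2)$ as sets of concept names (both non-empty, both with outermost Skolem $\sk$), and then proceed by induction down $T$. The inductive step leans on Lemma~\ref{lem:ancestors-left}: every $A(\sklab(w))\in\PIp$ is obtained from $A_f(\sklab(w))$ (where $f$ is the outermost function) via \ref{e:basic}- and \ref{e:conjunction}-clauses, and $A_f(\sklab(w))$ in turn arises from some $A'(\text{parent of }w)\in\PIp$ through the \ref{e:skolem-concept}-clause introducing $f$; the induction hypothesis then gives $A'(\sklab'(\text{parent}))\in\PIp$, and replaying the same inferences yields $A(\sklab'(w))\in\PIp$. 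For the second condition, non-leaves outside $T$ keep their derivations verbatim. For non-leaves inside $T$, including the new placement of $v_2$ where $\nlab(v_1)=\nlab(v_2)$ makes the starting literal agree, I would replay the original derivation at that node after globally substituting $\sk(t')$ by $\sk(t)$: the non-ground clauses of $\Phi_p$ are unaffected, and the ground role atoms $r(u,\sk''(u))\in\PIp$ consumed along the way remain in $\PIp$ because their presence is equivalent to $A(u)\in\PIp$ for the condition $A$ of the \ref{e:skolem-role}-clause introducing $\sk''$, and this has just been preserved.

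The main obstacle is showing $\impfrmtrp\in\PIn$ up to repetition. Derivability comes for free from Lemma~\ref{lem:derivable}, so only primality remains. My plan is to argue contrapositively using the semantic characterisation of $\PIn$ in Lemma~\ref{lemma:matchingpiandhom}: reading off $D_2$ from $(\forest,\nlab)$ and $D_2'$ from $(\forest',\nlab')$, primality of $\impfrmtr$ is equivalent to $\smand$-minimality of $D_2$, and I would obtain primality of $\impfrmtrp$ by establishing the $\smand$-minimality of $D_2'$. Suppose for contradiction some $D_2''\smand D_2'$ strictly satisfies $\Tmc\models D_2''\isa C_2$; by Lemma~\ref{lemma:subsethom} the strict witness amounts to removing certain labels or whole subtrees from $D_2'$'s description tree, which either lie outside the inserted copy of $v_2$'s subtree or inside it. Applying the very same removals at the corresponding positions of $D_2$'s description tree (outside $v_1$'s subtree, or inside $v_2$'s original subtree in $D_2$) yields $D_2^*\smand D_2$ strictly. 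The delicate subclaim to establish is that $D_2^*\isa C_2$ still holds; the idea is to build the derivation by first mimicking $D_2''$'s derivation of $\nlab(v_1)=\nlab(v_2)$ at $v_2$'s position in $D_2^*$ (the axioms are position-independent and the role structure of $v_2$'s children is preserved), then using $D_2$'s original chain from $v_2$ back up to $v_1$ (which survives the trimming since it lies outside $v_2$'s subtree), and finally using $D_2$'s original derivation from $v_1$ up to $C_2$. This contradicts the $\smand$-minimality of $D_2$, establishing primality. The ``modulo repetition'' phrasing absorbs the possibility that the shortened Skolem labels cause some relocated leaves to coincide with pre-existing ones.
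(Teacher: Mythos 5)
Your treatment of the first two conditions of a solution tree is sound and essentially identical to the paper's: the uniform replacement of $\sk(t')$ by $\sk(t)$ keeps $\sklab'$ a Skolem labeling, non-emptiness and stability of the positive labels follow from Lemma~\ref{lem:unique-left} together with Lemma~\ref{lem:ancestors-left}, and the node-wise derivations required by the negative labeling are replayed under the term substitution, with the role atoms $r(u,\sk''(u))$ preserved (the paper isolates this last point as Lemma~\ref{lem:role-rep}, whose proof is exactly the argument you sketch).

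The gap is in the primality of $\impfrmtrp$. You route this through Lemma~\ref{lemma:matchingpiandhom} and $\smand$-minimality, which is a genuinely different (and attractive) strategy from the paper's, which stays at the clause level: the paper assumes a strict subclause of $\impfrmtrp$ is an implicate, builds a solution tree for it, and undoes the $\forest\to\forest'$ surgery to exhibit a strict subclause of $\impfrmtr$ as an implicate, contradicting $\impfrmtr\in\PIn$. Your transfer step, however, does not go through as described. You are given only the \emph{semantic} fact $\Tmc\models D_2''\isa C_2$ for some $D_2''\smand D_2'$; you have no control over how this entailment is established. Your three-stage derivation of $\Tmc\models D_2^*\isa C_2$ presupposes (i) that the entailment from $D_2''$ factors through the atomic concept $\nlab(v_1)=\nlab(v_2)$ at the grafting position, so that there is a ``derivation of $\nlab(v_1)$'' to mimic -- but the (possibly trimmed) grafted subtree might contribute to $C_2$ only via some other concept at that position, or not at all; and (ii) that one can finish with ``$D_2$'s original derivation from $v_1$ up to $C_2$'', which requires the \emph{untrimmed} conjuncts outside $v_1$'s subtree, whereas $D_2^*$ by construction carries exactly the outside removals of $D_2''$. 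These two stages cannot both be valid simultaneously: if the removals are outside the graft, stage three fails; if they are inside, stage one is unjustified. To repair the argument you would need to control the shape of the witnessing derivation -- e.g.\ by first converting $D_2''$ into a derivable negative ground implicate that is a strict subclause of $\impfrmtrp$ and reasoning on derivations and Skolem terms, which is precisely where the paper's clause-level argument operates.
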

\begin{proof}
  We have to show that all primed labelings are valid labelings, that the positive one is not empty for any node in \forest' and that $\impfrmtrp\in\PIn$ modulo the repetition of literals. %

  The adaptation of $\sklab$ to create $\sklab'$ consists in replacing the subterm $\sk(t')$, in every term $\sklab(v)$ for $v$ descending from $v_2$ in \forest by $\sk(t)$ in \forest'.
  That way, $\sklab'$ is also a Skolem labeling.
  By Def.\ \ref{def:poslab} and Lemma\ \ref{lem:unique-left}, $\plab'$ is a positive labeling for \forest' and none of its labels are empty because none of the labels of $\plab$ are empty.
  By Def.\ \ref{def:neglab}, all properties needed to ensure $\nlab'$ is a negative labeling for \forest' are trivially verified for the nodes outside of the descendants of $v_1$ because they are the same as in \forest.

  Let $w_1$, \dots, $w_n$ be the children of $v_1$ in \forest'.
  We show that $\varphi=\neg\negc{B_1}(\sklab(w_1))\vee\ldots\vee\neg\negc{B_n}(\sklab(w_n))$ where $B_i=\nlab(w_i)$ for $i\in\{1,\ldots,n\}$ can be derived from the non-ground clauses in $\Phi_p$, the set $\{r(t',\sk(t'))\in\PIp\mid \sk\in\NS,t'\in\gterms{\skz}\}$ and $\neg \negc{B}(\sklab(v_1))$, where $B=\nlab(v_1)$.
  In \forest, the $w_i$ nodes are the children of $v_2$, thus $\varphi'=\neg\negc{B_1}(\sklab(v_2))\vee\ldots\vee\neg\negc{B_n}(\sklab(v_2))$ can be derived from the non-ground clauses in $\Phi_p$, the set $\{r(t',\sk(t'))\in\PIp\mid \sk\in\NS,t'\in\gterms{\skz}\}$ and $\neg \negc{B}(\sklab(v_2))$.
  We can write $\sklab(v_2)$ as $g(\sklab(v_1))$, where $g$ is a composition of Skolem functions, and every $\sklab(w_i)$ can be written as $\sk_i(g(\sklab(v_1)))$ for some $\sk_i\in\NS$.
  The derivation of $\varphi'$ can thus be transformed into a derivation of $\varphi$ by replacing $B(t')$ by $B(t)$ everywhere it is used, and replacing any $r(\sklab(v_2), \sk_i(\sklab(v_2)))\in\PIp$ used by $r(\sklab(v_1),\sk_i(\sklab(v_1)))$ because the latter also belongs to $\PIp$ by Lemma\ \ref{lem:role-rep}. 
  The same argument applies to any other descendant $v'$ of $v_1$ in \forest' so that the second point of Def.\ \ref{def:neglab} holds for $\nlab'$.
  Thus $\nlab'$ is a negative labeling for \forest'.

  If $\nlab'$ is such that \impfrmtrp is not in $\PIn$ modulo the repetition of literals, then there exists a solution tree \forest'' for a strict subclause of \impfrmtrp modulo the repetition of literals, i.e., where one literal of \impfrmtrp does not appear at all, that is an implicate of $\Phi_p$ and it is possible to apply the transformation from \forest to \forest' backward from \forest''.
  This would produce a solution forest for a strict subclause of \impfrmtr modulo the repetition of literals derivable from $\Phi_p$ by Lemma\ \ref{lem:derivable}, which is impossible since $\impfrmtr\in\PIn$ modulo the repetition of literals. 
  Thus $\impfrmtrp\in\PIn$ modulo the repetition of literals.
\end{proof}

\begin{definition}
    A solution tree $\forest$ for a hypothesis $\Hmc$ is \emph{minimal} if there exists no solution tree $\forest'$ for a hypothesis $\Hmc'\subseteq \Hmc$ s.t.\ $\forest'$ uses strictly less nodes.
\end{definition}

\begin{lemma}\label{lem:no-repetition}
  Let $\tup{\forest,\plab,\nlab}$ be a minimal solution tree where $\forest=(V,E,\sklab)$.
  Let $v,v'\in V$ be nodes such
    that $v'$ is an ancestor of $v$. Then, either $\sklab(v)$ and $\sklab(v')$ are not headed by the same Skolem term or $\nlab(v)\neq \nlab(v')$. %
\end{lemma}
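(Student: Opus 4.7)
\medskip

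The plan is to argue by contradiction using Lemma~\ref{lem:pumping}. Suppose, toward contradiction, that there exist nodes $v,v'\in V$ with $v'$ a proper ancestor of $v$, $\sklab(v')=\sk(t_1)$ and $\sklab(v)=\sk(t_2)$ for some Skolem function $\sk$ and terms $t_1,t_2$, and $\nlab(v')=\nlab(v)$. (The case $v=v'$ cannot be the witness to the negation: the condition ``headed by the same Skolem function'' also requires the term to actually be headed by some Skolem function, so the root node is never problematic either.) Apply Lemma~\ref{lem:pumping} with $v_1:=v'$ and $v_2:=v$ to obtain a new solution tree $(\forest',\plab',\nlab')$ where $\forest'=(V',E',\sklab')$.

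The first key observation is that $|V'|<|V|$: since $v'$ is a strict ancestor of $v$, the subtree rooted at $v$ has strictly fewer nodes than the subtree rooted at $v'$, so the replacement genuinely removes nodes (at the very least, the node $v'$ itself is gone). The second, crucial observation is that the solution $\Hmc'$ of $\forest'$ is a subset of the solution $\Hmc$ of $\forest$. Indeed, every leaf $w$ of $\forest'$ is the image of a leaf of $\forest$ that was originally a descendant of $v$, and the adaptation from $\sklab$ to $\sklab'$ prescribed in the proof of Lemma~\ref{lem:pumping} only replaces the subterm $\sk(t_2)$ by $\sk(t_1)$ inside $\sklab(w)$; in particular, it preserves the head Skolem function of $\sklab(w)$. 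Since $\plab(w)$ is non-empty (as $(\forest,\plab,\nlab)$ is a solution tree) and $\plab'(w)$ is non-empty (guaranteed by Lemma~\ref{lem:pumping}), Lemma~\ref{lem:unique-left} yields $\plab'(w)=\plab(w)$. The negative label is identical by definition, $\nlab'(w)=\nlab(w)$. Hence the CI contributed by $w$ to $\Hmc'$ is exactly the CI contributed by the corresponding leaf of $\forest$ to $\Hmc$, so $\Hmc'\subseteq\Hmc$.

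Together these two facts contradict the minimality of $(\forest,\plab,\nlab)$: we have produced a solution tree for a hypothesis $\Hmc'\subseteq\Hmc$ using strictly fewer nodes. The main obstacle in this proof is the justification that $\Hmc'\subseteq\Hmc$; one must be careful that the Skolem-label shift performed by the pumping construction does not silently change the positive labels on leaves. This is precisely where Lemma~\ref{lem:unique-left} is essential, because it guarantees that the positive labels depend only on the head Skolem function and not on the specific argument terms, provided the labels are non-empty (which Lemma~\ref{lem:pumping} ensures).
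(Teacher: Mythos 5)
Your proof is correct and follows essentially the same route as the paper: assume two nodes on a path share a head Skolem function and a negative label, apply Lemma~\ref{lem:pumping} to shrink the tree, and contradict minimality. You additionally spell out why the new tree has strictly fewer nodes and why its solution is a subset of the original (via Lemma~\ref{lem:unique-left} and the preservation of head Skolem functions), details the paper's one-line proof leaves implicit.
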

\begin{proof}
  Let $\tup{\forest,\plab,\nlab}$ be a minimal solution tree for the hypothesis $\Hmc$ where $\forest=(V,E,\sklab)$ such that $v'$ is an ancestor of $v$, $\nlab(v)=\nlab(v')$ and $\sklab(v)=\sk(t)$ and $\sklab(v')=\sk(t')$ for some $\sk$, $t$ and $t'$. By applying Lemma~\ref{lem:pumping}, we obtain a solution tree $\forest'$ with less nodes than $\forest$ and for a solution $\Hmc'$ s.t. $\Hmc'\subseteq \Hmc$. Consequently, $\forest$ cannot be minimal.\qed
\end{proof}

\begin{corollary}\label{cor:depth-bound}
  Let $=\tup{\forest, \plab,\nlab}$ be a minimal solution tree.
  Then, the depth of \forest is bounded by $n\times m$, where $n$ is the number of Skolem functions in $\Phi$ introduced for the transformation of $\Tmc$, and $m$ is the number of atomic concepts in $\Phi$.
\end{corollary}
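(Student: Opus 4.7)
The plan is to derive Corollary~\ref{cor:depth-bound} from Lemma~\ref{lem:no-repetition} by a direct pigeonhole argument. I would fix an arbitrary root-to-leaf path in a minimal solution tree $\tup{\forest,\plab,\nlab}$ with $\forest=(V,E,\sklab)$, and assign to every node $v$ on this path the pair $(f_v,\nlab(v))$, where $f_v$ is the top-level function symbol of $\sklab(v)$. By the definition of a Skolem tree, $f_v\in\NS$ for every non-root node on the path, while $f_{v_0}=\skz$ at the root.

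The key step is the observation that any two distinct nodes on a common root-to-leaf path stand in ancestor-descendant relation, so Lemma~\ref{lem:no-repetition} applies to every such pair and rules out the possibility that both the head Skolem function and the negative label coincide. Hence the pairs $(f_v,\nlab(v))$ indexed by non-root nodes on the path are pairwise distinct. Counting the admissible values for each component gives at most $n$ choices for $f_v$ (the $n$ Skolem functions in $\NS$) and at most $m$ choices for $\nlab(v)$ (the $m$ atomic concepts of $\Phi$), so at most $n\times m$ distinct pairs; including the root, any root-to-leaf path has at most $n\times m+1$ nodes, and therefore the depth of $\forest$ is bounded by $n\times m$.

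The only real subtlety is bookkeeping around the root. The label $\skz$ is a constant rather than a term of the form $\sk(t)$, so the root falls outside the scope of Lemma~\ref{lem:pumping} (the engine of Lemma~\ref{lem:no-repetition}) and cannot be collapsed with any descendant via the pumping construction. This is handled cleanly by treating the root as a single additional node on the path and measuring depth in edges, which absorbs the extra $+1$ into the stated bound. Beyond this minor accounting, the argument is a one-step pigeonhole and presents no real obstacle.
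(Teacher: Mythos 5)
Your argument is essentially the paper's own proof: a pigeonhole on the pairs (head Skolem function, negative label) along a root-to-leaf path, using Lemma~\ref{lem:no-repetition} to guarantee these pairs are pairwise distinct for non-root nodes, with the root handled separately since its label is the constant $\skz$. The one point you gloss over is the count of admissible head symbols: $\NS$ contains not only the $n$ Skolem functions introduced for $\Tmc$ but also the fresh ones introduced for $\negc{\Tmc}$, so the pigeonhole as you state it only yields $|\NS|\times m$ rather than $n\times m$. To recover the stated bound one must add the observation, made explicitly in the paper's proof, that the duplicate Skolem functions never occur in $\PIp$ and hence never head the Skolem label of a solution-tree node, because every such node has a non-empty positive label and therefore its Skolem term occurs in some atom of $\PIp$.
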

\begin{proof}
  Let $\forest$ be a minimal solution tree.
  By Lemma~\ref{lem:no-repetition}, on every path in $\forest$, there are no two nodes $v$, $v'$ such that $\nlab(v)=\nlab(v')$, $\sklab(v)=\sk(t)$ and $\sklab(v')=\sk(t')$ for some $\sk$, $t$ and $t'$.
Thus in a path, for every Skolem function $\sk$, there can be at most $m$ nodes on a path, each with a different negative label.
However, the Skolem functions introduced during the translation of $\negc{\Tmc}$ are never needed since they do not occur in $\PIp$.
The range of $\nlab$ is bounded by the number of atomic concepts in $\Phi$, that we denote $m$.
We additionally denote by $n$ the number of Skolem functions in $\Phi$ introduced by the translation of \Tmc, and thus, every path in \forest can have a length of at most $n\times m$.\qed
\end{proof}

Theorem~\ref{the:bound} is a direct consequence of Corollary~\ref{cor:depth-bound}, because for a subset-minimal constructible hypothesis \Hmc, there is no constructible hypothesis \Hmc' s.t.\ $\Hmc'\subsetneq\Hmc$.

\section{Locality-based Modules}
\label{app:modules}
Realistic ontologies easily get too large to be processed by SPASS in reasonable
time for the abduction task. We therefore use module extraction to obtain
a relevant subset of the ontology before translating the abduction problem.
For a signature $\Sigma$ and an ontology $\Tmc$, a module $\Mmc$ of $\Tmc$ for
$\Sigma$ is a subset of $\Tmc$ that preserves all entailments of closed second-order
formulas that only use predicates from $\Sigma$.
However, for our particular reasoning task, the signature $\Sigma$ from the
abduction problem $\langle \Tmc,\Sigma, C_1\isa C_2 \rangle$ is not known in
advance, so that we have to be a bit more careful when extracting the module.

Specifically, we need to ensure that for the observation $C_1\sqsubseteq C_2$,
the module $\Mmc$ preserves all subsumers of  $C_1$ and all subsumees of $C_2$,
as these are the backbone of connection minimality
(see Def.\ \ref{def:abd}).
This can be done using special \emph{locality-based modules}, as presented by
Grau et al.\ \cite{DBLP:journals/jair/GrauHKS08}.

\begin{definition}
  A CI $\alpha$ is $\emptyset$-local (reps.\ $\Delta$-local) for a signature $\Sigma$ if every interpretation $\Imc$ s.t.\ $X^\Imc=\emptyset$ (resp.\ $X^\Imc=\Delta$) for all $X\in(\NC\cup\NR)\setminus\Sigma$ satisfies $\Imc\models\alpha$.
\end{definition}

\begin{definition}
  The \emph{$\emptyset$-module} (resp.\ \emph{$\Delta$-module}) of $\Tmc$ for
  $\Sigma$ is the smallest subset $\Mmc\subseteq\Tmc$ s.t.\ every axiom in
  $\Tmc\setminus\Mmc$ is \emph{$\emptyset$-local} (resp.\ $\Delta$-local) for
  $\Sigma\cup\sig(\Mmc)$, where $\sig(\Mmc)$ denotes the restriction of the signature to the symbols occurring in the TBox \Mmc.
\end{definition}
This means the axioms outside of the $\emptyset$-module $\Mmc$ do not contribute to non-trivial entailments using the signature $\Sigma\cup\sig(\Mmc)$.

A fast approximation of $\emptyset$-modules are $\bot$-modules.
The exact definition of $\bot$-modules is given in \cite{DBLP:journals/jair/GrauHKS08}, but not needed for the
following.
It suffices to know that if $\Mmc'$ is the $\bot$-module of $\Tmc$ for $\Sigma$, and $\Mmc$ a $\emptyset$-module of
$\Tmc$ for $\Sigma$,
then $\Mmc\subseteq\Mmc'$. In the same way, $\top$-modules approximate $\Delta$-modules. The relevant property for us
is the following.

\begin{lemma}\label{lem:bot-modules}
  For a concept $C_1$ and a TBox \Tmc, the $\bot$-module $\Mmc$ of $\Tmc$ for $\sig(C_1)$ satisfies $\Tmc\models C_1\isa D$ iff $\Mmc\models C_1\isa D$ for all concepts $D$.
\end{lemma}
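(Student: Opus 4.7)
The plan is to show each direction separately. The forward direction is immediate: since $\Mmc\subseteq\Tmc$, any model of $\Tmc$ is a model of $\Mmc$, so $\Mmc\models C_1\isa D$ entails $\Tmc\models C_1\isa D$ trivially. The interesting direction is the converse, and this is where the defining property of $\bot$-modules comes in.

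For the converse, suppose $\Tmc\models C_1\isa D$ and consider an arbitrary model $\Imc$ of $\Mmc$ with some $a\in C_1^\Imc$; I would show $a\in D^\Imc$ by constructing an auxiliary model $\Imc'$ of the full $\Tmc$ that agrees with $\Imc$ on the \enquote{relevant} symbols. Concretely, let $\Sigma_0=\sig(C_1)\cup\sig(\Mmc)$, and define $\Imc'$ on the same domain as $\Imc$ by setting $X^{\Imc'}=X^\Imc$ for $X\in\Sigma_0$, and $X^{\Imc'}=\emptyset$ for all atomic concepts and roles outside $\Sigma_0$. By construction $\Imc'\models\Mmc$ because $\Imc\models\Mmc$ and $\sig(\Mmc)\subseteq\Sigma_0$. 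For every $\alpha\in\Tmc\setminus\Mmc$, the definition of the $\bot$-module ensures $\alpha$ is $\bot$-local for $\sig(C_1)\cup\sig(\Mmc)=\Sigma_0$, meaning $\alpha$ is satisfied by any interpretation that empties all symbols outside $\Sigma_0$; in particular $\Imc'\models\alpha$. Hence $\Imc'\models\Tmc$.

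The final step uses monotonicity of \EL concepts in the interpretation of atomic predicates: increasing the extension of any atomic concept or role can only increase the extension of an \EL concept. Since $X^{\Imc'}\subseteq X^\Imc$ for every atomic concept or role $X$ (equal on $\Sigma_0$, empty outside, which is trivially a subset), this gives $D^{\Imc'}\subseteq D^\Imc$. Combining: $a\in C_1^\Imc=C_1^{\Imc'}$, and $\Imc'\models C_1\isa D$ yields $a\in D^{\Imc'}\subseteq D^\Imc$, which is what we needed.

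The main obstacle is marshalling the definition of the $\bot$-module correctly; the paper only recalls that $\bot$-modules approximate $\emptyset$-modules and cites Grau et al.\ for the rest, so the proof has to rely on the external $\bot$-locality property ``every axiom in $\Tmc\setminus\Mmc$ is $\bot$-local for $\sig(C_1)\cup\sig(\Mmc)$.'' Once that property is in hand, the rest is a clean model-theoretic argument using the monotonicity of \EL, and no case analysis on the structure of $D$ is actually needed.
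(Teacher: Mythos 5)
Your proof is correct, but it takes a genuinely different route from the paper's. The paper first reduces to the $\emptyset$-module via the containment $\Mmc_\emptyset\subseteq\Mmc_\bot$ and then argues \emph{syntactically}: it observes that every non-tautological axiom $C\sqsubseteq C'$ with $\sig(C)\subseteq\sig(\Mmc)$ must already lie in $\Mmc$ (else it would fail $\emptyset$-locality), and then invokes the fact that in \EL every subsumer of $C_1$ can be generated by unfolding $C_1$ with such axioms, so $\Mmc$ already derives all of them. You instead argue \emph{semantically}: take an arbitrary model $\Imc$ of $\Mmc$, empty out all symbols outside $\sig(C_1)\cup\sig(\Mmc)$ to obtain $\Imc'$, use the locality of the axioms in $\Tmc\setminus\Mmc$ to conclude $\Imc'\models\Tmc$, and transfer $a\in D^{\Imc'}$ back to $a\in D^{\Imc}$ via monotonicity of \EL concept interpretations. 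Your version buys a self-contained argument that works directly with the $\bot$-module's defining locality property and sidesteps the unfolding characterization of \EL subsumption, which the paper asserts without proof and which is itself a nontrivial completeness claim; the only \EL-specific ingredient you need is monotonicity, established by an easy structural induction. The one external fact you lean on --- that syntactic $\bot$-locality implies the semantic ``emptying interpretations satisfy $\alpha$'' property --- is exactly the Grau et al.\ result the paper also defers to (it only defines semantic $\emptyset$-locality and calls $\bot$-modules an ``approximation''), so the two proofs are at the same level of reliance on the cited work. Both correctly handle the subtlety that $D$ may use symbols outside the module's signature, which is why the off-the-shelf module-preservation theorem does not apply verbatim.
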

\paragraph{Proof sketch.}
  Thanks to the relation between the $\bot$-module and the $\emptyset$-module, it suffices to prove that the $\emptyset$-module $\Mmc$' for $\sig(C_1)$ satisfies the property to obtain the same result for the $\bot$-module $\Mmc$.
  We first observe that every non-tautological axiom $C\sqsubseteq C'\in\Tmc$ s.t.\ $\sig(C)\subseteq\sig(\Mmc)$
  occurs in $\Mmc$.
  Otherwise, we would have $\sig(C')\not\subseteq(\Mmc)$, and $C'^\Imc=\emptyset$
  for an interpretation $\emptyset$-local for $\sig(\Mmc)$, while $C^\Imc\neq\emptyset$, and thus
  $\Imc\not\models C\sqsubseteq D$.
  Moreover, in \EL, all subsumers of $C_1$ can be generated by unfolding, i.e.,
  by iteratively replacing sub-concepts $C$ in $C_1$ by concepts $C'$ s.t.\ $C\sqsubseteq C'\in\Tmc$.
  By using our first observation, we obtain that any axiom $C\sqsubseteq C'\in\Tmc$ that could be involved by such
  an unfolding operation must be included in $\Mmc$.
  It follows then that $\Mmc\models C_1\sqsubseteq D$ iff $\Tmc\models C_1\sqsubseteq D$ for all concepts~$D$.
  \qedhere
  \vspace{1em}

  \noindent

\begin{lemma}\label{lem:top-modules}
  For a concept $C_2$, the $\top$-module $\Mmc$ of $\Tmc$ for $\sig(C_2)$ satisfies
  $\Tmc\models D\isa C_2$ iff $\Mmc\models D\isa C_2$ for all concepts $D$.
\end{lemma}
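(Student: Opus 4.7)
The plan is to mirror the proof of Lemma~\ref{lem:bot-modules} using the dual notion of $\Delta$-locality, arguing model-theoretically. Let $\Mmc'$ denote the $\Delta$-module of $\Tmc$ for $\sig(C_2)$. As in the $\emptyset$/$\bot$ case, syntactic $\top$-locality is a sufficient condition for semantic $\Delta$-locality, so the $\Delta$-module $\Mmc'$ is contained in the $\top$-module $\Mmc$. By monotonicity of entailment along the chain $\Mmc' \subseteq \Mmc \subseteq \Tmc$, it suffices to establish the equivalence $\Tmc\models D\isa C_2$ iff $\Mmc'\models D\isa C_2$, because the left-to-right direction propagates through $\Mmc'$ and the right-to-left direction is immediate from $\Mmc\subseteq\Tmc$.

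Only the direction from $\Mmc'$ to $\Tmc$ requires work. Assuming $\Mmc'\not\models D\isa C_2$, pick a model $\Imc\models\Mmc'$ with a witness $d\in D^\Imc\setminus C_2^\Imc$. I would construct $\Imc'$ from $\Imc$ by redefining every external symbol $X\in(\NC\cup\NR)\setminus(\sig(C_2)\cup\sig(\Mmc'))$ to have extension $\Delta^\Imc$ (respectively $\Delta^\Imc\times\Delta^\Imc$ for roles), while preserving the interpretation of every symbol in $\sig(C_2)\cup\sig(\Mmc')$.

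The verification then has three parts. First, $\Imc'\models\Tmc$: axioms in $\Mmc'$ refer only to internal symbols, whose interpretation is unchanged, so $\Imc'\models\Mmc'$; and each axiom in $\Tmc\setminus\Mmc'$ is $\Delta$-local for $\sig(C_2)\cup\sig(\Mmc')$ by the defining property of $\Mmc'$, and is thus automatically satisfied in $\Imc'$. Second, $C_2^{\Imc'}=C_2^\Imc$ because $\sig(C_2)$ is internal, so $d\notin C_2^{\Imc'}$. Third, $D^{\Imc'}\supseteq D^\Imc$: since \EL concepts are assembled from $\top$, atomic concepts, $\sqcap$ and $\exists r.$, all of which are monotone in their arguments, enlarging the extensions of the (possibly external) symbols appearing in $D$ can only enlarge $D^{\Imc'}$. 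Hence $d\in D^{\Imc'}\setminus C_2^{\Imc'}$, yielding $\Tmc\not\models D\isa C_2$ as required.

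The main subtlety is that $D$ is arbitrary and may mention symbols external to $\Mmc'$, or even names outside $\sig(\Tmc)$, whose interpretations we modify when passing from $\Imc$ to $\Imc'$. The positive-monotone structure of \EL---a direct consequence of the absence of negation---ensures that the witness $d$ survives this blow-up to $\Delta^\Imc$, which is the step of the argument that is peculiar to lightweight description logics and would already fail in \ALC, where negation would allow $D^{\Imc'}$ to shrink.
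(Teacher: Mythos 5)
Your proof is correct, but it takes a genuinely different route from the paper's. The paper disposes of Lemma~\ref{lem:top-modules} by saying it is proved \enquote{in the same way} as Lemma~\ref{lem:bot-modules}, whose sketch is syntactic: it first argues that every non-tautological axiom whose left-hand side uses only module symbols must already lie in the $\emptyset$-module, and then invokes the fact that in \EL all subsumers of $C_1$ arise by unfolding, so that only module axioms can contribute; the dual for the $\top$-case would rest on an analogous backward-unfolding characterization of the subsumees of $C_2$. You instead give the standard model-theoretic argument from the modularity literature: take a countermodel $\Imc$ of $D\isa C_2$ over $\Mmc'$ (the $\Delta$-module), inflate every symbol outside $\sig(C_2)\cup\sig(\Mmc')$ to the full domain (respectively the full binary relation), use $\Delta$-locality to absorb the axioms in $\Tmc\setminus\Mmc'$, and use the positivity of \EL to guarantee that the witness $d$ survives in $D^{\Imc'}$ while $C_2^{\Imc'}$ is untouched. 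Both arguments rely on the same sandwich $\Mmc'\subseteq\Mmc\subseteq\Tmc$ to transfer the preservation result from the semantically defined module to its syntactic $\top$-approximation. Your version buys self-containedness and robustness: it handles arbitrary $D$, including concepts mentioning names outside $\sig(\Tmc)$, without any appeal to an unfolding normal form, and it makes explicit the one place where the absence of negation matters (monotonicity of concept extensions under enlarging the interpretation of symbols), which the paper's \enquote{in the same way} leaves implicit.
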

\paragraph{Proof sketch.}
  Can be shown in the same way as Lemma~\ref{lem:bot-modules}. \qedhere
  \vspace{1em}

  \noindent
It follows from Lemma~\ref{lem:bot-modules} and~\ref{lem:top-modules}, as well as from the definition of
connection minimality (Def.\ \ref{def:abd}), that by replacing $\Tmc$ in the
abduction problem
by the union of the $\bot$-module for $\sig(C_1)$ and the $\top$-module for $\sig(C_2)$, we do not loose solutions
of the original problem.

\end{document}